\def\BibTeX{{\rm B\kern-.05em{\sc i\kern-.025em b}\kern-.08em
    T\kern-.1667em\lower.7ex\hbox{E}\kern-.125emX}}
\newif\ifincludesupplementary
\newtheorem{theorem}{Theorem}
\newtheorem{lemma}{Lemma}
\newtheorem{remark}{Remark}
\newtheorem{proposition}{Proposition}
\newtheorem{definition}{Definition}
\newtheorem{assumption}{Assumption}
\newtheorem{corollary}{Corollary}
\newcommand{\bE}{{\bf E}}
\newcommand{\bA}{{\bf A}}
\newcommand{\bX}{{\bf X}}
\newcommand{\bD}{{\bf D}}
\newcommand{\bbarD}{\bar{\bf D}}
\newcommand{\btiD}{{\tilde{\bf D}}}
\newcommand{\btiX}{{\tilde{\bf X}}}
\newcommand{\bbarX}{{\bar{\bf X}}}
\newcommand{\barY}{{\bar Y}}
\newcommand{\tiX}{{\tilde X}}
\newcommand{\tiY}{{\tilde Y}}
\newcommand{\tih}{{\tilde h}}
\newcommand{\barh}{{\bar h}}
\newcommand{\calH}{{\cal H}}
\newcommand{\calX}{{\cal X}}
\newcommand{\calY}{{\cal Y}}
\newcommand{\calA}{{\cal A}}
\begin{document}
\title{Task-tailored Pre-processing: \\Fair Downstream Supervised Learning}
\author{Jinwon Sohn, Guang Lin, Qifan Song
\thanks{
GL is supported by the National Science Foundation under grants DMS-2533878, DMS-2053746, DMS-2134209, ECCS-2328241, CBET-2347401, and
OAC-2311848. The U.S. Department of Energy also supports this work through the Office of Science Advanced Scientific Computing Research program (DE-SC0023161) and the Office of Fusion Energy Sciences (DE-SC0024583).
\\
J. Sohn is with the Booth School of Business, University of Chicago, USA. Corresponding e-mail is jwsohn612@gmail.com. Q. Song is with the Department of Statistics, Purdue University, USA. E-mail: qfsong@purdue.edu. G. Lin is with the Department of Mathematics and School of Mechanical Engineering, Purdue University, USA. E-mail: guanglin@purdue.edu.
\\
``© 2026 IEEE. Personal use of this material is permitted. Permission
from IEEE must be obtained for all other uses, in any current or future
media, including reprinting/republishing this material for advertising or
promotional purposes, creating new collective works, for resale or
redistribution to servers or lists, or reuse of any copyrighted
component of this work in other works."
\\
}}



\markboth{Preprint}{}

\maketitle

\begin{abstract}
Fairness-aware machine learning has recently attracted various communities to mitigate discrimination against certain societal groups in data-driven tasks. For fair supervised learning, particularly in pre-processing, there have been two main categories: data fairness and task-tailored fairness. The former directly finds an intermediate distribution among the groups, independent of the type of the downstream model, so a learned downstream classification/regression model returns similar predictive scores to individuals inputting the same covariates irrespective of their sensitive attributes. The latter explicitly takes the supervised learning task into account when constructing the pre-processing map. In this work, we study algorithmic fairness for supervised learning and argue that the data fairness approaches impose overly strong regularization from the perspective of the Hirschfeld–Gebelein–R\'enyi correlation. This motivates us to devise a novel pre-processing approach tailored to supervised learning. We account for the trade-off between fairness and utility in obtaining the pre-processing map. Then we study the behavior of arbitrary downstream supervised models learned on the transformed data to find sufficient conditions to guarantee their fairness improvement and utility preservation. To our knowledge, no prior work in the branch of task-tailored methods has theoretically investigated downstream guarantees when using pre-processed data. We further evaluate our framework through comparison studies based on tabular and image data sets, showing the superiority of our framework which preserves consistent trade-offs among multiple downstream models compared to recent competing models. Particularly for computer vision data, we see our method alters only necessary semantic features related to the central machine learning task to achieve fairness.
\end{abstract}

\begin{IEEEkeywords}
 Fairness-aware Supervised Learning, Pre-processing, Downstream Learning, Task-tailored Learning
\end{IEEEkeywords}

\section{Introduction}
\IEEEPARstart{T}{he} widespread use of data-driven inferential models in the real world has aroused a burgeoning demand for fairness. Despite the great advances in artificial intelligence during the era of big data, algorithmic unfairness arises from various causes, such as inaccessibility to technology, tacit prejudice in society, etc. For instance, \cite{dear:etal:22} investigated fairness issues of popular examples in business analytics such as human resource operations, supply chain management, etc. To reduce such societal discrimination effects, particularly for supervised learning, it becomes a goal to fit a supervised model that produces indistinguishable outcomes across subpopulations in protection. As a concrete example, \cite{yang:etal:23} considered controlling the ethnicities of patients and hospital locations to predict the status of COVID-19 because some bias, associated with these variables, has been present when collecting medical data. To see more examples and a comprehensive overview of fairness, refer to \cite{chenp:etal:23}. Specific strategies for achieving fair supervised learning are generally categorized into pre-processing, post-processing, and in-processing \cite{baro:etal:23}. Pre-processing methods modify given data, such as training data for supervised learning, so that learned machine learning models from the modified data become non-discriminatory against some sensitive information \cite{feld:etal:15}. In-processing methods incorporate fairness constraints directly into the model training process of the tasks \cite{agar:etal:18}. Post-processing methods directly adjust the outputs of the trained model to ensure fair outcomes \cite{hard:etal:16}. 

In particular, pre-processing methodologies, which create modified data sets, can be viewed as pseudo-generative models, taking advantage of special benefits in the era of burgeoning generative artificial intelligence (AI) models that produce diverse formats of synthetic data such as images, tabular data, and so forth. Generated data can be used for other data science projects for various purposes, such as privacy protection or improving the model's performance \cite{jord:etal:22}. For instance, a generative model equipped with a fairness-aware pre-processing layer offers benefits to the data managers or distributors who want to ensure fairness protection, because end users who receive and use the pre-processed data (or generated data) on a \emph{downstream} side can unwittingly protect fairness without applying additional algorithm-fairness techniques. Because of its practical merits, there have been attempts to generate ``fair data" \cite{xu:etal:18,raja:etal:22} on which a learned supervised model does not discriminate a certain group in the population.

On the one hand, a generative model can be categorized as task-tailored or general-purpose generative modeling. A task-tailored generative model is specialized for specific applications and is therefore highly efficient \cite{ledi:etal:17,ren:etal:20}. In contrast, a general-purpose generative model is not aware of a specific task. Rather, when a user specifically requests a task, the general-purpose model starts to resolve the task from its general domain. ChatGPT \cite{open:23} is a representative example. The general-purpose model takes advantage of applying to a wide range of tasks, but it might not perform as well as specialized models for certain tasks. Recently, \cite{moha:24} reported the task-tailored model generally achieves higher performance in various tasks than the general-purpose model. 

Similarly, existing pre-processing methods fall into either task-tailored fairness or data fairness. Approaches in data fairness try to find an intermediate distribution among subpopulations categorized by a sensitive attribute, e.g., an intermediate distribution between males and females \cite{feld:etal:15,xu:etal:18,gord;etal;19,raja:etal:22}. Although the ultimate goal is to achieve fair supervised learning, their frameworks do not explicitly consider the nature of supervised learning. To address this gap, there have been specific approaches for supervised learning \cite{wang:etal:22,zeng:etal:24} that achieve a better balance between fairness and utility than data fairness. However, these previous studies do not consider the scenario in which the pre-processed data is distributed to downstream end users whose models to be supervised are not pre-specified. In this case, the existing approaches have no results of the fairness guarantee on such a downstream side to our knowledge. Tackling such downstream fairness becomes more important because distributing or generating data to third parties frequently occurs nowadays through data sharing/subscription, generative AI platforms, etc, and the end users may be bound to protect fairness.

In this work, we propose a novel fair-supervised learning framework. This work concerns distributing data pre-processed in the upstream stage to the downstream stage where end users fit user-specific supervised models (Figure~\ref{fig:overview}, notations appear in Section~\ref{sec:notion}).
\begin{figure}[h]
    \centering
    \vspace{-0.2cm}
    \includegraphics[width=0.8\linewidth]{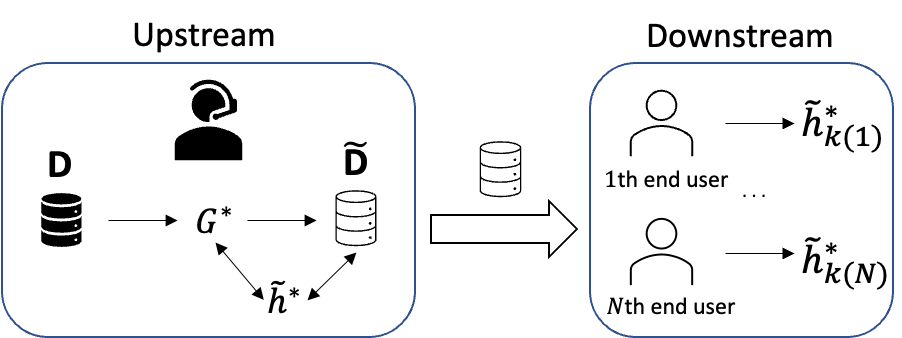}
    \caption{A data manager has the pre-processing map $G^*$ that transforms original data $\bD$ to pre-processed data $\btiD$ and the upstream supervised model $\tih^*$ that satisfies a certain level of fairness and accuracy on $\btiD$. Then $\btiD$ is distributed to end users who can fit their own downstream supervised models $\tih_{k}^*$, not sticking to $\tih^*$, but the manager wants their downstream models to satisfy similar levels of fairness and accuracy with the upstream model $\tih^*$. The subscript $k(i)$ implies the model of the $i$th end user. }
    \label{fig:overview}
\end{figure}
To our knowledge, this is the first attempt to analyze the behavior of fairness and utility of downstream models on the pre-processed data in the task-tailored framework. Here are the major contributions of our work: 
\begin{itemize}
    \item We identify the relationship between data fairness and task-tailored fairness by investigating the nonlinear Pearson correlation. 
    \item We formulate a constrained min-max bilevel optimization problem solved via neural networks to obtain a fair pre-processing model. 
    \item We analytically demystify the dynamics of fairness and utility of both upstream and downstream models learned on pre-processed data. 
\end{itemize}

\noindent {\bf Organization} Section~\ref{sec:notion} clarifies notation and explains two notions of group fairness. Section~\ref{sec:relatedworks} reviews the literature. In Section~\ref{sec:motiv}, we point out the potential inefficiency of data fairness. Based on this motivation, Section~\ref{sec:method} designs our pre-processing framework with trade-off analysis, and Section~\ref{sec:down} investigates how the generated data brings fairness improvement and utility control for arbitrary downstream models. Section~\ref{sec:algorithm} describes the optimization process of the proposed framework, and Section~\ref{sec:simul} conducts simulation studies on benchmark tabular and image data sets. Mathematical proofs and simulation details are provided in Sections ~\ref{supp:proof} and \ref{sup:simul} in the Supplementary materials (SM).

\subsection{Notions of Fairness}
\label{sec:notion}
\begin{table}[t]
    \centering
    \caption{Summary of Notations}
    \label{tab:notations}
    {\scriptsize 
    \renewcommand{\arraystretch}{1.2} 
    \begin{tabular}{l p{0.65\columnwidth}} 
        \toprule
        \textbf{Symbol} & \textbf{Description} \\
        \midrule
        $\bX, \bA, Y$ & Covariates, sensitive attributes, and outcome variable \\
        $d_{\bX}, d_{\bA}$ & Dimensionality of covariates and sensitive attributes \\
        $\mathcal{X}, \mathcal{A}, \mathcal{Y}$ & Domain spaces where $\mathcal{X} \subset \mathbb{R}^{d_{\bX}}$, $\mathcal{A} \subset \mathbb{R}^{d_{\bA}}$, $\mathcal{Y} \subset \mathbb{R}$ \\
        $\bD$ & Original dataset tuple $(\bX, \bA, Y)$ \\
        $G, G^*$ & Pre-processing map and optimal map \\
        $\bbarD = (\bbarX, \barY)$ & Transformed data, output of a generic map $G$ \\
        $\btiD = (\btiX, \tiY)$ & Transformed data, output of the optimal map $G^*$ \\
        $\Delta_{\bX}, \Delta_Y$ & Distance metrics on spaces $\mathcal{X}$ and $\mathcal{Y}$ \\
        \midrule
        $h, \mathcal{H}$ & Upstream supervised model and its class \\
        $l(\cdot, \cdot)$ & Loss function, where $l \geq 0$ \\
        $h^*, \barh^*, \tih^*$ & Risk minimizers on $\bD$, $\bbarD$, and $\btiD$ respectively \\
        $h_k, \mathcal{H}_k$ & Downstream supervised model $k$ and its hypothesis space \\
        $\tih_k^*$ & Risk minimizer for downstream model $k$ on $\btiD$ \\
        \midrule 
        $\rho(S_1,S_2)$ & Hirschfeld–Gebelein–R\'enyi correlation between $S_1,S_2$ \\
        $d(S_1,S_2)$ & $\sqrt{2-\rho(S_1,S_2)}$ \\ 
        \midrule
        SM & Supplementary material \\ 
        \bottomrule
    \end{tabular}
    }
\end{table}

For readers' convenience, Table \ref{tab:notations} summarizes the notations used throughout this paper. Denote by $\bX=(X_1,\dots,X_{d_{\bX}}) \in \mathcal{X} \subset \mathbb{R}^{d_{\bX}}$ a vector of $d_{\bX}$ covariates, $\bA = (A_1,\dots,A_{d_{\bA}}) \in {\cal A} \subset \mathbb{R}^{d_{\bA}}$ a vector of $d_{\bA}$ sensitive attributes, $Y\in\mathcal{Y}\subset\mathbb{R}$ an outcome variable, and $h\in \calH:\mathcal{X}\rightarrow \mathcal{Y}$ an upstream supervised learning. For instance, ${\cal Y}=[0,1]$ implies $h$ is a classification model, and regression with ${\cal Y}=\mathbb{R}$. Let's denote by $\Delta_{\bX}$ and $\Delta_Y$ the distance metrics on $\calX$ and $\calY$. For the model $h$, a risk is defined as $\bE[l(Y,h(\bX))]$ for a given loss function $l\geq 0$. The risk minimizer is denoted by $h^*=\arg_{h\in{\calH}}\min \bE[l(Y,h(\bX))]$. If the minimizer has a small (or high) risk, we say the model has a high (or small) utility. We define a pre-processing map $G\in {\cal G} :\calX \times \calA \times \calY \rightarrow \calX \times \calY$ and denote by $\bbarD=(\bbarX,\barY)$ the output of $G$ from the original data $\bD=(\bX,\bA,Y)$. Depending on the context, we abuse $\bD=(\bX,Y)$ if ${\bf A}$ is abandoned for the model input. For the output for the optimal pre-processing map $G^* \in {\cal G}$, we specifically denote by $\btiD=(\btiX,\tiY)$ the transformed data from $\bD$. We denote by $\barh^*$ and $\tih^*$ the risk minimizer of $\bE[l(\bar{Y},h(\bar{\bX}))]$ and $\bE[l(\tiY,h(\btiX))]$. Multiple downstream supervised models $h_k\in{\cal H}_k$ are trained independently on $(\btiX,\tiY)$ for $k=1,\dots,K$. They inherit the notations of the upstream model for $h_k^*$ and $\tih_k^*$ which are risk minimizer on $(\bX,Y)$ and $(\btiX,\tiY)$ respectively. 

In fair supervised learning, independence and separation are two key fairness criteria. Referring to \cite{baro:etal:23}, we introduce the most famous definitions in algorithmic fairness as follows.
\begin{definition}[Independence]
    A supervised model $h$ satisfies independence if $\bA \perp h(\bX)$.
\end{definition}
\begin{definition}[Separation]
    A supervised model $h$ satisfies separation if $\bA \perp h(\bX)|Y$.
\end{definition}
Independence (also known as demographic/statistical parity) requires that a model's predictions be independent of a sensitive attribute (e.g., race or gender), meaning the outcomes are consistent across groups defined by the attribute. Separation (or equalized odds/disparate mistreatment) requires conditional independence of predictions from the sensitive attribute given the true outcome, meaning individuals within the same outcome class (e.g., those who truly qualify for a loan) should have an equal likelihood of receiving a positive prediction across groups.

\subsection{Literature Review}
\label{sec:relatedworks}

To represent the dependence on the type of sensitive attributes, we subdivide notations for them. $A_{\text{bin}}$ stands for a binary univariate variable. $\bA_{\text{dis}}$ includes all one-hot encoded discrete $A_1,\dots,A_{d_{\bA}}$. $\bA$ includes a wide range of variables, such as a mix of multiple one-hot encoded discrete and continuous variables. 
 
As we briefly discussed in the introduction, there have been mainly two strategies for fair pre-processing. The first, called data fairness, is to find the intermediate distribution balancing the data distributions for different sensitive groups (i.e., $A_{\text{bin}}$ or $\bA_{\text{dis}}$) directly. For instance, \cite{feld:etal:15,gord;etal;19} focused on finding a Wasserstein-barycenter distribution among discretized sensitive groups. \cite{calm:etal:17} solved an optimization to produce a transformed data set $(\btiX,\tiY)$ by controlling a discrepancy between $Y$ and $Y|\bA_{\text{dis}}$. In the context of generative modeling, \cite{choi:etal:24} proposed switching sensitive attributes, applicable for $\bA_{\text{dis}}$, in the sampling process of diffusion models. \cite{tian:etal:24} advanced the mix-up procedure to create fair attributes that interpolate different instances from $\bA_{\text{dis}}$. 

Secondly, a task-tailored pre-processing strategy identifies a transformation map tailored to the intended use of the transformed data. Specifically for a supervised learning task, \cite{wang:etal:22} suggested generating adversarial samples that weaken a deployed and frozen model's dependency. In a similar spirit, \cite{petr:etal:22} obtained the adversarial sample weights designed to lessen the prediction power of predicting sensitive attributes from covariates. These past studies borrow the prediction-based mitigation mechanism \cite{zhan:etal:18,adel:etal:19} which may technically handle $\bA$ but possibly with extra optimization complexity. \cite{jang:etal:24} devised a feature-selection mechanism to improve the fairness in terms of $\bA_{\text{dis}}$. Recently, \cite{zeng:etal:24} devised an advanced method that harnesses randomized labels for fairness while exploiting utility by finding the optimal Bayes rule, but limited to $A_{\text{bin}}$.

The existing task-tailored strategies, however, may have restricted applicability to real-world problems. \cite{zeng:etal:24} achieves stronger performance than various past approaches under a mathematical guarantee for fairness but their framework only supports a binary classification task with $A_{\text{bin}}$. \cite{jang:etal:24} is also only applicable for $\bA_{\text{dis}}$ and they do not have a fairness guarantee when selected features are used for other prediction models. Similarly, adversarial re-weighting of \cite{petr:etal:22} also does not guarantee when the learned weights for a specific neural network are used for other machine learning models. Moreover, some learning algorithms may not adapt such instance weights during the training procedure (e.g., K-nearest neighbor regression/classification) without nontrivial modification to the original formulation. Finally, to our knowledge, there have been no studies to discuss when/where the use of pre-processed data remains effective for downstream models different from the upstream model, particularly for controlling independence and separation.  

\section{Methodology}
\label{sec:method_intro}
\subsection{Motivation}
\label{sec:motiv}
We first justify that data-fairness approaches may destroy too much utility, particularly for supervised learning. For mathematical justification, we mainly build our arguments on the Hirschfeld–Gebelein–R\'enyi correlation (HGR, \cite{reny:59:2}) that measures the nonlinear Pearson correlation between two univariate random variables. While the original definition of HGR is for two univariate random variables, it is straightforward to extend the concept that adapts random vectors as follows. 
\begin{definition}[HGR correlation]
\label{def:hgr}
For random vectors $\bX$ and $\bA$, the Hirschfeld–Gebelein–R\'enyi correlation is defined as $\rho(\bX,\bA) = \sup_{f,g} \bE[f(\bX)g(\bA)]$ where the supermum is taken over all Borel measurable functions $f:\calX \rightarrow \mathbb{R}$ and $g:\calA \rightarrow \mathbb{R}$, satisfying $\bE[f(\bX)]=\bE[g(\bA)]=0$ and $\bE[f(\bX)^2]=\bE[g(\bA)^2]=1$.
\end{definition}
This extended notion inherits the main properties of the original definition including: (P1) $0 \leq \rho(\bX,\bA) \leq 1$, (P2) $\rho(\bX,\bA)=0$ if and only if $\bX$ and $\bA$ are jointly independent, (P3) $\rho(\bX,\bA)=\rho(\bA,\bX)$, and (P4) if $\bX$ and $\bA$ have arbitrary functional relationship, then $\rho(\bX,\bA)=1$. To see other properties for the univariate case, we refer to \cite{reny:59:2}. 

From the data-fairness perspective, the HGR correlation $\rho(\bar{\bX},\bA)$ or $\rho(\bar{Y},\bA)$ can be leveraged as main objectives to be minimized as to the map $G$. For instance, \cite{feld:etal:15} suggested a way to obtain $\btiX$ such that $\rho(\btiX,\bA_{\text{dis}})\approx 0$ by solving the Wasserstein-barycenter problem. Moreover, if $\rho(\btiX,\bA)=0$ and $\rho(\tiY,\bA)=0$, then it trivially follows that any supervised models $\tih^*$ learned on $(\btiX,\tiY)$ is independent to $\bA$. There may need to be some constraints such as $\Delta_{\bX}(\bX,\btiX)\leq \delta_{\bX}$ and $\Delta_Y(Y,\tiY)\leq\delta_Y$, where $\Delta_{\bX}$ and $\Delta_Y$ measure the discrepancy between two quantities respectively, to avoid meaningless solutions of $G$ (e.g., degenerated $\tiX$ and $\tiY$) and to preserve utility as argued in \cite{calm:etal:17}. By adjusting the size of $\delta_{\bX}$ and $\delta_Y$ appropriately, this data-fairness scheme can provide a series of $\tih^*$, each of which has different levels of fairness.

Such a data fairness approach, however, may overly penalize the unfairness of the supervised model.
\begin{proposition}
\label{prop:motiv}
    For a given Borel measurable function $u$ defined on ${\cal X}$, $\rho(u(\bar{\bX}),\bA) \leq  \rho(\bar{\bX}, \bA)$ holds for any measurable $\bar{\bX}$ and $\bA$. 
\end{proposition}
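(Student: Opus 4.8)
The plan is to prove this as a data-processing inequality for the HGR correlation, working directly from the variational characterization in Definition~\ref{def:hgr}. Setting $Z = u(\bar{\bX})$, I would first expand
$$\rho(u(\bar{\bX}),\bA) = \sup_{\phi,g} \bE[\phi(Z)g(\bA)],$$
where $\phi$ ranges over Borel measurable functions on the range of $u$ satisfying $\bE[\phi(Z)]=0$ and $\bE[\phi(Z)^2]=1$, and $g$ obeys the usual normalization on $\calA$. The goal is then to show that this supremum is taken over a feasible set no larger than the one defining $\rho(\bar{\bX},\bA)$.

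The key step is to observe that every admissible test function $\phi$ of $Z$ induces an admissible test function $f := \phi \circ u$ of $\bar{\bX}$. Because the constraints are expectations that depend only on the law of $Z$, they are preserved exactly under composition: $\bE[f(\bar{\bX})] = \bE[\phi(Z)] = 0$ and $\bE[f(\bar{\bX})^2] = \bE[\phi(Z)^2] = 1$, so $f$ lies in the feasible class for $\rho(\bar{\bX},\bA)$. Moreover the objective value is identical, since $\bE[\phi(Z)g(\bA)] = \bE[f(\bar{\bX})g(\bA)]$. Hence the map $(\phi,g) \mapsto (\phi\circ u, g)$ sends every candidate pair for $\rho(u(\bar{\bX}),\bA)$ to a candidate pair for $\rho(\bar{\bX},\bA)$ with the same objective, so the set of attainable values on the left is contained in that on the right. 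Taking suprema over the smaller set cannot exceed the supremum over the larger, yielding $\rho(u(\bar{\bX}),\bA) \leq \rho(\bar{\bX},\bA)$.

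Rather than a deep obstacle, the point requiring care is measurability: I must confirm that $\phi \circ u$ is Borel, which follows from $(\phi\circ u)^{-1}(B) = u^{-1}(\phi^{-1}(B))$ being Borel for every Borel set $B$ (composition of Borel maps is Borel). A secondary edge case is the degenerate situation in which $Z = u(\bar{\bX})$ is almost surely constant, so that no function $\phi$ can satisfy the unit-variance constraint; in that case the supremum is over the empty set and $\rho(u(\bar{\bX}),\bA)=0$ by convention, so the inequality holds trivially. This is the reverse-inclusion-of-feasible-sets argument in its cleanest form, and I expect no further complications.
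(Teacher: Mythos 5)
Your proof is correct and follows essentially the same route as the paper's: the paper also argues that $\{f'\circ u : f' \text{ Borel measurable}\}$ is a subset of the Borel measurable functions on $\calX$, so the supremum defining $\rho(u(\bar{\bX}),\bA)$ is taken over a feasible set contained in that of $\rho(\bar{\bX},\bA)$. Your added care about measurability of the composition and the degenerate (a.s.\ constant) case are fine refinements the paper leaves implicit, but they do not change the argument.
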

This simple mathematical observation justifies that $\rho(\bar{\bX}, \bA)$ is stronger than $\rho(u(\bar{\bX}),A)$ within the same amount of $\delta_{\bX},\delta_Y$. In other words, the regularizer $\rho(\bar{\bX}, \bA)$ for finding $G^*$ which pursues data fairness may lead to an over-fair $\tih^*(\btiX)$ which implicitly over-sacrifices the utility of $\tih^*(\btiX)$ in the end. Therefore, depending on a targeted machine learning task, exploring a better regularization term should be of great interest because it potentially preserves much larger task-specific utility. In this work, motivated by Proposition~\ref{prop:motiv}, we use the penalty of the form $\rho(u(\bar{\bX}),A)$ with $u$ being $\bar{h}^*$, so it becomes specialized for supervised learning.

The degree of fairness improvement in supervised learning can be evaluated by the HGR correlation. On the upstream side, $\tilde{\Delta}_F=\rho(h^*(\bX),\bA)-\rho(\tih^*(\btiX),\bA)$ implies the fairness improvement for the upstream, that is the difference of HGR correlations on $\bD$ and $\btiD$. Note $\btiD$ is the generated data from the optimal $G^*$. The positive $\tilde{\Delta}_F$ means that $\btiD$ results in the less discriminative model $\tih^*$. Similarly, we denote by $\tilde{\Delta}_F^k=\rho(h^*_k(\bX),\bA)-\rho(\tih^*_k(\btiX),\bA)$ the fairness improvement on the downstream side.

Before introducing our main framework, Figure~\ref{fig:ex1} shows a motivating illustration that compares our pre-processing method, tailored to supervised learning, with existing data-fairness methods \cite{feld:etal:15,gord;etal;19} on a toy example. The detailed construction of our framework appears in the next section. For ${\calH}$ and ${\cal G}$, 4-layer feed-forward neural networks with 64 hidden nodes are specified for $\btiX$, $\tiY$, and $h$. As a quantitative measure of fairness, the value of $|\bE[\tih^*_k(\tiX)|A=1] - \bE[\tih^*_k(\tiX)|A=0]|$ is considered. For the accuracy evaluation, the predictive mean squared error $\bE[(Y-\tih^*_k(X))^2]$ is calculated. The smaller the two metrics, the higher the performance. For all three downstream models indexed by $h_k=\{$Random Forest Regression, Support Vector Regression, and KNN-based Regression$\}$\footnote{These are implemented by scikit-learn in Python.}, ours achieves a better balance of accuracy and fairness, as varying the degree of partial fairness. To see further details about the implementation, refer to SM~\ref{sup:fig_ex1}. 
\begin{figure}[ht!]
    \centering
    \includegraphics[width=1.0\linewidth]{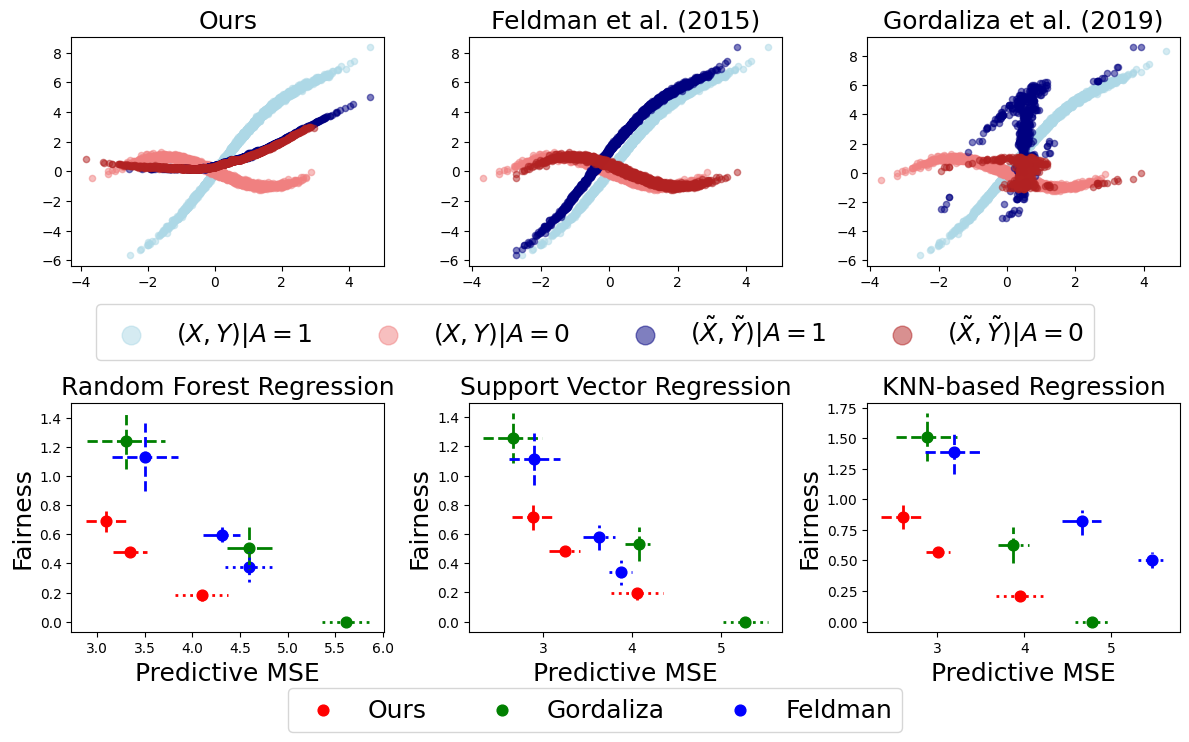}
    \vspace{-0.2in}
    \caption{A Toy example with data generated from $Y = (2A-1)\sin(X)+ 2AX + \epsilon$ where $A \sim {\rm Ber}(0.5)$, $X \sim {\rm N}(A,1^2)$, and $\epsilon \sim {\rm N}(0,0.1^2)$. The first row illustrates the original and transformed data. For both Feldman et al. (2015) \cite{feld:etal:15} and Gordaliza et al (2019) \cite{gord;etal;19}, $X$ is transformed to $\tilde X$ to maximally remedies discrimination, while $\tilde{Y}=Y$. Our method yields $(\tiX,\tiY)$ having a more efficient accuracy and fairness trade-off curve, as shown in the second row. For figures in the second row, the dot is the average, with the bars of  2$\times$standard error. The line styles distinguish the choices of fairness degree: weak (dashed line), medium (dash-dot line), and strong (dotted line). In our case, the transformed data under different $A$ overlap with each other, both showing a smooth intermediate functional relationship in contrast to the existing methods.}
    \label{fig:ex1}
\end{figure}

To incorporate the HGR correlation in the theoretical analysis, this work leverages a distance-like metric $d(S_1,S_2)=\sqrt{2-2\rho(S_1,S_2)}$ for generic random variables or vectors $S_1,S_2$. Trivially, $d(S_1,S_2)\leq \sqrt 2$ and the equality holds when $S_1$ and $S_2$ are independent. In particular, $d$ possesses the triangle inequality $d(S_1,S_2)\leq d(S_1,S_3)+d(S_2,S_3)$ when the pivot variable $S_3$ is binary (see Lemma~\ref{lem:triangle} in SM~\ref{supp:triangle}).
 
\subsection{Pre-processing via in-processing in upstream}
\label{sec:method}

To devise the pre-processing map aware of supervised learning, we borrow a popular in-processing structure \cite{lee:etal:22,sohn:etal:23}. Recall that  $\bbarD=(\bbarX,\barY)$ is the output of $G(\bX,\bA,Y)$. The ideal pre-processing map $G^*$ leads to a supervised downstream model with higher utility and fairness. Based on Proposition~\ref{prop:motiv}, the fairness regularization onto $G$ can be imposed by  the penalty term $\rho(\barh^*(\bbarX),\bA)$ w.r.t. $G$ where $\barh^*$ is the risk minimizer on $\bbarD$. Meanwhile, the functional relationship between $\bbarX$ and $\barY$ through $\barh^*$ should be maximally preserved, leading to minimizing the risk of $\barh^*$ w.r.t. $G$. Upon this consideration, the desired pre-processing map can be found by solving the following bilevel-constrained optimization
\begin{align}
\label{opt:min}
    G^* = \arg_G \min \bE[l(\barY,\barh^*(\bbarX))] + \lambda_F \rho(\barh^*(\bbarX),\bA),
\end{align}
 subject to $\Delta_{\bX}(\bX,\bbarX)\leq \delta_{\bX}$, $\Delta_Y(Y,\barY)\leq \delta_Y$ with pre-specified $\delta_{\bX}$, $\delta_Y$, $\lambda_F$, where $\barh^*=\arg_{h \in \calH }\min\bE[l(\barY,h(\bbarX))]$. The outer optimization for $G$ relies on the solution of the inner optimization for $h$. The optimization \eqref{opt:min} draws a line from in-processing since the inner optimization for $\barh^*$ does not pertain to the fairness penalty. 
 To impose the fairness on $G^*$ only, $\barh^*$ must not directly take $\bA$ as its input. Recall that $\btiD=(\btiX,\tiY)$ is the output of $G^*$. This solution finds $\btiD$ not too distant from $\bD$ due to the marginal constraints of $\delta_{\bX}$ and $\delta_{Y}$. Ideally, the solution pair $(\tih^*,G^*)$ have balanced size of $\bE[l(\tiY,\tih^*(\btiX))]$ and $\rho(\tih^*(\btiX),\bA)$ by the values of $\delta_{\bX},\delta_Y,\lambda_F$. The structure of $G$ is further subdivided by two children functions $G_{\bX}:{\cal X}\times {\cal A}\rightarrow {\cal X}$ and $G_Y:{\cal X}\times {\cal A}\times{\cal Y}\rightarrow {\cal Y}$, so $G_{\bX}^*$ and $G_Y^*$ produce $\btiX$ and $\tiY$ respectively. The transformation $G_{\bX}$ does not depend on $Y$ to maintain the nature of supervised learning over $\btiD$. 

We first investigate the relationship between $\bE[l(\tiY,\tih^*(\btiX))]$ and  $\rho(\tih^*(\btiX),\bA)$ to figure out if the utility and fairness terms trade off each other. For simplified discussion, we first deal with the solution pair $(\tih^*,G^*)$ obtained under $\delta_Y=0$, i.e., $\tiY=Y$. Let's denote by  $\check{\epsilon}=\inf_{\check{h}\in \check{{\calH}}} \bE[l(Y,\check{h}(\bX,\bA))]$ where $\check{h}\in\check{{\calH}}:{\cal X}\times {\cal A}\rightarrow \mathcal{Y}$, and $\check{{\calH}}$ expands the function space $\calH$ by allowing $A$ as an additional input. Thus $\check{\epsilon}$ is the minimal risk when using $\bX$ and $\bA$ jointly to predict $Y$. Let $e(\bA)=\bE[l(Y,h^*(\bX))]-\check{\epsilon}$ be the extra contribution of $\bA$ for predicting $Y$.
Then the fairness improvement on the upstream side is characterized as follows.
\begin{theorem}
\label{thm:uplwh0}
For any given $\delta_X, \lambda_F \geq 0$, if $\delta_Y=0$, the fairness improvement $\tilde{\Delta}_F=\rho(h^*(\bX),\bA)-\rho(\tih^*(\btiX),\bA))$ is bounded below by $\tilde{\Delta}_F \geq \lambda_F^{-1} (\bE[l(Y,\tih^*(\btiX))]-e(\bA)-\check{\epsilon})$. Especially, if the output of $h$ is binary, the improvement is bounded above by $\tilde{\Delta}_F \leq d(Y,h^*(\bX))+d(Y,\tih^*(\btiX))$.
\end{theorem}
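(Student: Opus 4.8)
The plan is to exploit the optimality of $G^*$ in \eqref{opt:min} by comparing it against the identity map. When $\delta_Y=0$ we have $\tiY=Y$, so the inner minimizer on $\btiD$ is $\tih^*=\arg\min_{h\in\calH}\bE[l(Y,h(\btiX))]$ and the objective attained by $G^*$ equals $\bE[l(Y,\tih^*(\btiX))]+\lambda_F\rho(\tih^*(\btiX),\bA)$. The identity map $\bbarX=\bX,\barY=Y$ is feasible, since $\Delta_{\bX}(\bX,\bX)=0\le\delta_{\bX}$ and $\Delta_Y(Y,Y)=0\le\delta_Y$, and its inner minimizer is exactly $h^*$, giving objective value $\bE[l(Y,h^*(\bX))]+\lambda_F\rho(h^*(\bX),\bA)$. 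Because $G^*$ is a minimizer its objective cannot exceed that of the identity map; rearranging this inequality and dividing by $\lambda_F>0$ yields
\begin{align}
\tilde\Delta_F \ge \lambda_F^{-1}\big(\bE[l(Y,\tih^*(\btiX))]-\bE[l(Y,h^*(\bX))]\big).
\end{align}
Substituting $\bE[l(Y,h^*(\bX))]=\check\epsilon+e(\bA)$ from the definition of $e(\bA)$ gives the claimed bound. The only things to check are the feasibility of the identity map (so the comparison is legitimate) and that its inner minimizer is $h^*$.

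\textbf{Upper bound.} For the binary case the key idea is a Hilbert-space representation of the maximal correlation rather than a direct manipulation of $d$. Since $h^*(\bX)$, $\tih^*(\btiX)$ and (in binary classification) $Y$ each take two values, the space of mean-zero, unit-variance functions of each is one-dimensional, spanned by canonical unit vectors $e_{h^*},e_{\tih^*},e_Y$ in $L^2$. Writing $P_{\bA}$ for the orthogonal projection onto the closed subspace $\mathcal{L}_{\bA}$ of mean-zero square-integrable functions of $\bA$, Definition~\ref{def:hgr} yields $\rho(h^*(\bX),\bA)=\|P_{\bA}e_{h^*}\|$ and $\rho(\tih^*(\btiX),\bA)=\|P_{\bA}e_{\tih^*}\|$, because maximizing $\bE[f(h^*(\bX))g(\bA)]$ over unit $g$ returns the norm of the projection of the (sign-unique) vector $e_{h^*}$ onto $\mathcal{L}_{\bA}$.

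With this representation the bound follows from two elementary facts: the reverse triangle inequality for norms and that an orthogonal projection is a contraction. Choosing the signs of $e_{h^*},e_{\tih^*}$ so that each has nonnegative inner product with $e_Y$, the chordal identities $\|e_{h^*}-e_Y\|=d(Y,h^*(\bX))$ and $\|e_{\tih^*}-e_Y\|=d(Y,\tih^*(\btiX))$ hold. Then
\begin{align}
\tilde\Delta_F
&=\|P_{\bA}e_{h^*}\|-\|P_{\bA}e_{\tih^*}\|
\le \|P_{\bA}(e_{h^*}-e_Y)\|+\|P_{\bA}(e_Y-e_{\tih^*})\| \nonumber\\
&\le \|e_{h^*}-e_Y\|+\|e_Y-e_{\tih^*}\|
= d(Y,h^*(\bX))+d(Y,\tih^*(\btiX)).
\end{align}
Equivalently, the same projection argument first gives $\tilde\Delta_F\le\|e_{h^*}-e_{\tih^*}\|=d(h^*(\bX),\tih^*(\btiX))$, after which the binary-pivot triangle inequality of Lemma~\ref{lem:triangle} (with pivot $Y$) closes the bound.

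\textbf{Main obstacle.} I expect the crux to be the upper bound, specifically justifying the projection identity $\rho(h^*(\bX),\bA)=\|P_{\bA}e_{h^*}\|$ and handling the sign conventions. A naive route that converts everything via $\rho=1-\tfrac12 d^2$ and applies the $d$-triangle inequality loses a factor, since the cross term forces a bound of order $\sqrt2\,d(\cdot,\cdot)$; the clean constant genuinely relies on the one-dimensionality of the binary function spaces together with the contraction property of $P_{\bA}$, which is exactly where the hypothesis ``the output of $h$ is binary'' enters. Some care is also needed to ensure $\mathcal{L}_{\bA}$ is closed so that $P_{\bA}$ is well defined when $\bA$ is a possibly continuous, multivariate vector.
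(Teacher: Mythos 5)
Your proposal is correct. The lower bound is essentially the paper's own argument: both compare the optimal value of \eqref{opt:min} at $G^*$ against its value at the identity map (equivalently, against the $\delta_{\bX}=0$ composition class), whose inner minimizer is $h^*$, then rearrange and substitute $\bE[l(Y,h^*(\bX))]=e(\bA)+\check{\epsilon}$. Your upper bound, however, takes a genuinely different route. The paper manipulates maximal correlation functions directly: it writes both correlations via the optimal pairs $(f^*,g^*)$ and $(\tilde f^*,\tilde g^*)$, uses the fact that potentials of a binary variable are affine, replaces $\tilde g^*$ by the suboptimal $g^*$, applies Cauchy--Schwarz to obtain $\tilde{\Delta}_F\leq d(h^*(\bX),\tih^*(\btiX))$, and only then invokes the binary-pivot triangle inequality of Lemma~\ref{lem:triangle} with pivot $Y$. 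You instead embed everything in $L^2$: for a binary variable the admissible potentials form a one-dimensional space, so $\rho(\cdot,\bA)=\|P_{\bA}e_{\cdot}\|$, and the bound follows from the norm triangle inequality plus the contraction property of orthogonal projections; your contraction step is the exact counterpart of the paper's ``swap $\tilde g^*$ for $g^*$, then Cauchy--Schwarz.'' What your route buys: the factor-free constant is transparent, Lemma~\ref{lem:triangle} is not needed as a black box (once $h^*(\bX)$, $\tih^*(\btiX)$, and $Y$ are unit vectors, the triangle inequality is the trivial one in $L^2$), and you never need attainment of maximal correlation functions for pairs involving $\bA$, since the projection always exists. What the paper's route buys: by isolating the intermediate bound $\tilde{\Delta}_F\leq d(h^*(\bX),\tih^*(\btiX))$ and delegating the last step to Lemma~\ref{lem:triangle}, the same template transfers to settings where only the pivot is binary, which is how the paper reuses the argument in Corollary~\ref{cor:uplwh0} and for separation in Corollary~\ref{cor:sep_dominating}. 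One shared caveat, which your write-up actually makes more explicit than the paper: both proofs need $Y$ itself to be binary (your $e_Y$, the paper's pivot condition) and all binary variables to be non-degenerate, assumptions that are implicit in the binary classification setting.
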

Derived bounds address two important remarks. The lower bound implies that strict fairness improvement comes in when $\bE[l(Y,\tih^*(\btiX))]>e(\bA)+\check{\epsilon}$. If $\bX$ and $\bA$ are highly correlated, i.e., $e(\bA)\approx 0$, then the fairness improvement gets stronger. Also, both bounds relate to the coherency between $Y$ and $\tih^*$ (refer to the next paragraph), which means that the fairness improvement involves the risk of false prediction for the intact label $Y$. This trade-off inclination becomes more evident when a strong functional relationship between $Y$ and $\bX$ presents. If $d(Y,h^*(\bX))\approx 0$, the upper bound is mostly explained by $d(Y,\tih^*(\btiX))$ which aligns, as elaborated in the next paragraph, with $\bE[l(Y,\tih^*(\btiX))]$ in the lower bound.

Both  $\bE[l(Y,h^*(\bX))]$ and $d(Y,h^*(\bX))$ measures the degree of coherency between $Y$ and $h^*(\bX)$. 
For instance, let's assume that both $h^*(\bX)$ and $Y$ are standardized. Then we observe $d(Y,h^*(\bX))=\sqrt{2-2\rho(Y,h^*(\bX))}\leq \sqrt{\bE[(Y-h^*(\bX))^2]}$ since $2-2\text{Corr}(Y,h^*(\bX))=\bE[(Y-h^*(\bX))^2]$. For binary $Y$ and $h^*(\bX)$, the HGR correlation has a much clearer relation with the accuracy of $h^*$ since $\rho(Y,h^*(\bX))=|\text{Corr}(Y,h^*(\bX))|$ and $\text{Corr}(Y,h^*(\bX))$ is regarded as a desirable evaluation measure in classification also called the Matthews correlation coefficient. Therefore, it is reasonable to expect that a supervised model achieves a higher HGR correlation as the model becomes more accurate by minimizing a certain target objective, such as $L_2$, cross-entropy, hinge loss, etc. 

Now we discuss the scenario harnessing the auxiliary label $\tiY$ with $\delta_Y>0$. Using $\tiY$ brings fundamental differences in the relationship between the fairness and risk of $\tih^*$. While $\tih^*$ trained under $\delta_Y=0$ sacrifices utility as shown in Theorem~\ref{thm:uplwh0}, the additional flexibility of $G_Y$ with $\delta_Y>0$ yields $\tiY$ that would not compromise either $\bE[l(\tiY,\tih^*(\btiX))]$ or $\rho(\tih^*(\btiX),A)$. The following corollary characterizes the lower and upper bounds of  $\tih^*$'s fairness improvement. 
\begin{corollary}
\label{cor:uplwh0}
For any given $\delta_X, \delta_Y, \lambda_F \geq 0$, it follows that $d(\tih^*(\btiX),\bA) - d(h^*(\bX),\bA) \geq d(\tiY,\bA) - d(\tiY,\tih^*(\btiX)) - d(h^*(\bX),\bA)$.
    Especially, if the outputs of $h^*$ and $\tih^*$ are binary, we can show $\tilde{\Delta}_F \leq d(\tiY,h^*(\bX))+d(\tiY,\tih^*(\btiX))$.
\end{corollary}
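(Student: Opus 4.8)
The plan is to treat both claims as statements about the $d$-metric $d(S_1,S_2)=\sqrt{2-2\rho(S_1,S_2)}$ and to reduce them to the triangle inequality of Lemma~\ref{lem:triangle} together with the elementary identity $\rho(S,\bA)=1-\tfrac12 d(S,\bA)^2$. The second (binary) bound is essentially the analogue of the upper bound in Theorem~\ref{thm:uplwh0}, with the auxiliary label $\tiY$ now playing the anchoring role that $Y$ played in the $\delta_Y=0$ regime; this is the natural adaptation once $\delta_Y>0$ decouples $\tiY$ from $Y$.

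For the lower bound (first inequality) I would first cancel the common term $-d(h^*(\bX),\bA)$ appearing on both sides, which collapses the claim to $d(\tiY,\bA)\le d(\tiY,\tih^*(\btiX))+d(\tih^*(\btiX),\bA)$. This is exactly Lemma~\ref{lem:triangle} applied with the pivot variable $\tih^*(\btiX)$; provided the predictor takes values in a binary output space, the binary-pivot hypothesis of the lemma is met and the inequality is immediate. No optimization argument is needed here, in contrast to the lower bound of Theorem~\ref{thm:uplwh0}, which relies on the bilevel problem \eqref{opt:min}.

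For the upper bound (second inequality) I would start from $\tilde{\Delta}_F=\rho(h^*(\bX),\bA)-\rho(\tih^*(\btiX),\bA)=\tfrac12\big(d(\tih^*(\btiX),\bA)^2-d(h^*(\bX),\bA)^2\big)$, so that bounding $\tilde{\Delta}_F$ amounts to controlling the gap $d(\tih^*(\btiX),\bA)-d(h^*(\bX),\bA)$. I would then anchor both distances at $\tiY$: an upper estimate $d(\tih^*(\btiX),\bA)\le d(\tih^*(\btiX),\tiY)+d(\tiY,\bA)$ and a lower estimate $d(h^*(\bX),\bA)\ge d(\tiY,\bA)-d(\tiY,h^*(\bX))$, each from Lemma~\ref{lem:triangle} with a binary pivot (respectively $\tiY$ and $h^*(\bX)$, both binary in the classification setting). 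Subtracting cancels $d(\tiY,\bA)$ and yields $d(\tih^*(\btiX),\bA)-d(h^*(\bX),\bA)\le d(\tiY,\tih^*(\btiX))+d(\tiY,h^*(\bX))$, which is the stated right-hand side.

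I expect the main obstacle to be the passage from the squared-distance difference $\tfrac12(d_2^2-d_1^2)$, writing $d_1=d(h^*(\bX),\bA)$ and $d_2=d(\tih^*(\btiX),\bA)$, to the purely linear sum $d(\tiY,\tih^*(\btiX))+d(\tiY,h^*(\bX))$: the factorization $\tfrac12(d_2-d_1)(d_2+d_1)$ carries the extra factor $\tfrac12(d_2+d_1)$, which must be absorbed using the boundedness $d\le\sqrt2$ and the coherence relations between $\tiY$ and the two predictors, exactly the mechanism that drives the upper bound of Theorem~\ref{thm:uplwh0}. Indeed, the triangle step delivers the $d$-valued improvement $d_2-d_1$ cleanly (mirroring the $d$-form of the first inequality), so the delicate part is this final $\rho$--$d$ conversion. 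A secondary point requiring care is the direction-bookkeeping of each triangle inequality, so that a binary variable is always the pivot of Lemma~\ref{lem:triangle}: using $\tih^*(\btiX)$ as pivot for the lower bound but $\tiY$ and $h^*(\bX)$ as pivots for the upper bound is what makes the two estimates point the correct way.
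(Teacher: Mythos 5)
Your treatment of the first inequality is correct and is exactly the paper's argument: after cancelling the common term $-d(h^*(\bX),\bA)$, the claim collapses to the triangle inequality $d(\tiY,\bA)\leq d(\tiY,\tih^*(\btiX))+d(\tih^*(\btiX),\bA)$ of Lemma~\ref{lem:triangle} with pivot $\tih^*(\btiX)$, and you are right that, unlike the lower bound of Theorem~\ref{thm:uplwh0}, no optimality argument from \eqref{opt:min} is needed.

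The second inequality, however, has a genuine gap, and you have put your finger on it yourself without resolving it. Write $d_1=d(h^*(\bX),\bA)$, $d_2=d(\tih^*(\btiX),\bA)$, and $T=d(\tiY,h^*(\bX))+d(\tiY,\tih^*(\btiX))$. Your two triangle inequalities establish $d_2-d_1\leq T$, but the quantity to be bounded is $\tilde{\Delta}_F=\tfrac12(d_2^2-d_1^2)=\tfrac12(d_2-d_1)(d_2+d_1)$, and the extra factor $\tfrac12(d_1+d_2)$ is only bounded by $\sqrt2$, not by $1$. The two facts your argument retains ($d_2-d_1\leq T$ and $d_1,d_2\in[0,\sqrt2]$) simply do not imply the claim: the values $d_1=0.7$, $d_2=1.4$, $T=0.7$ satisfy both premises, yet $\tfrac12(d_2^2-d_1^2)=0.735>T$. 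So your route proves at best $\tilde{\Delta}_F\leq\sqrt2\,T$, and no absorption via $d\leq\sqrt2$ can recover the factor $1$. The paper avoids this loss by never anchoring at the $\bA$-distances: following the proof of Theorem~\ref{thm:uplwh0}, since $h^*(\bX)$ and $\tih^*(\btiX)$ are binary, their maximal-correlation witnesses $f^*,\tilde f^*$ are affine, so one may replace the witness $\tilde g^*$ of the pair $(\tih^*(\btiX),\bA)$ by the witness $g^*$ of $(h^*(\bX),\bA)$ (with consistent signs) and apply Cauchy--Schwarz to obtain the key bound $\rho(h^*(\bX),\bA)-\rho(\tih^*(\btiX),\bA)\leq\sqrt{2-2\rho(h^*(\bX),\tih^*(\btiX))}=d(h^*(\bX),\tih^*(\btiX))$, i.e., the correlation difference is bounded by a single $d$ with no quadratic loss. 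The corollary then follows from one triangle inequality with binary pivot $\tiY$: $d(h^*(\bX),\tih^*(\btiX))\leq d(h^*(\bX),\tiY)+d(\tiY,\tih^*(\btiX))$. This direct $\rho$-difference-to-$d$ step is the ingredient your proposal is missing.
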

Compared to Theorem~\ref{thm:uplwh0}, the lower bound is characterized mainly by the distance-like metric $d$; for the upper bound, the original label is replaced by $\tiY$. Note the upstream fairness improvement can also be represented as $d(\tih^*(\btiX),\bA) - d(h^*(\bX),\bA)$ which conceptually corresponds to $\tilde{\Delta}_F$. 

Although the lower bound is a direct consequence of the triangle inequality, it helps figure out how the fairness improvement under $\delta_Y > 0$ is differentiated in contrast to the case of $\delta_Y = 0$. By the result of the lower bound, a larger improvement in $\tih^*$ is expected as $\tiY$ gets a smaller dependence on $\bA$, i.e. $d(\tiY,\bA)\rightarrow \sqrt{2}$, and the model $\tih^*$ should be sufficiently accurate for predicting $\tiY$, i.e. $d(\tih^*(\btiX),\tiY)\rightarrow 0$. Meanwhile, the upper bound increases as $\tiY$ becomes more distant from the original model $h^*(\bX)$. This important observation is highlighted in the following remark.
\begin{remark}
\label{remark:opt}
With $\delta_Y > 0$, the optimization \eqref{opt:min} creates an auxiliary label $\tiY$ that is away from $\bA$ by pushing  $\tih^*(\btiX)\perp \bA$ and at the same time $\tiY\approx \tih^*(\btiX)$. However, if $\delta_Y=0$, encouraging $\tih^*(\btiX)\perp \bA$ tends to make $\tih^*(\btiX)$ away from $Y$.
\end{remark}
This investigation of $\tiY$ provides a stepping stone for insightful analysis in later sections when studying the behavior of an arbitrary downstream model $\tih_k^*\in \calH_k$ learned on the output of $G^*$. 

In the optimization \eqref{opt:min}, the budget and strength parameters $\delta_{\bX}$, $\delta_Y$, and $\lambda_F$ take a central role in adjusting the level of fairness and utility. On the application side, the type of measurement $\Delta_{\bX}$ and $\Delta_Y$ and the key parameters should be carefully chosen based on data policy or domain knowledge. There can be situations where distorting specific variables in $\bX$ or $Y$ is not allowed due to uncompromisable reasons, depending on the context of its application. In this work, we assume that the learned $G_{\bX}^*$ is also used to transform the test data $(\bX',\bA')$ into $\btiX'$ where $(\bX',\bA')\overset{d}{=}(\bX,\bA)$ so that the level of fairness and utility obtained on $(\btiX,\tiY)$ remains consistent for the test data. This can be seen as an effort to overcome covariate shift in domain adaptation.

\subsection{Behavior of downstream models learned on $G^*$}
\label{sec:down}

\subsubsection{Fairness and utility control of downstream models}
\label{sec:fair_util_control}

Our analysis for the downstream fairness hinges on a major premise for the relationship between $\calH$ and $\calH_k$. The downstream analysis aims to scrutinize when $G^*$ paired with $\tih^*$ is also effective for mitigating an arbitrary downstream model $\tih_k^*$ and preserving its utility. At first appearance, there is no clear reason that the fairness improvement of $\tih^*$ should appear for $\tih_k^*$ since \eqref{opt:min} does not consider $\tih_k^*$. For tractable analysis, we assume that $\calH$ is large enough to include the class $\calH_k$.  
\begin{assumption}
\label{assm:dominating}
    The upstream class $\calH$ is a superset of downstream class $\calH_k$, i.e., $\calH_k \subset \calH$.
\end{assumption}
For instance, if $\calH$ is the class of feed-forward neural networks, the class of linear models $\calH_k$ necessarily belongs to it. In fact, if one chooses $\cal H$ to be neural networks with sufficient width or depth, this assumption approximately holds for a continuous function class ${\cal H}_k$ due to the universal approximation property of neural networks. Based on this assumption, the remaining part shows the fairness improvement of $\tih^*$ can lead to the improvement of $\tih_k^*$.

For effective discussion, we use functional notations, for the variable $h$, $L(h;\bD)=\bE[l(Y,h(\bX))]$ and $L(h;\btiD)=\bE[l(\tiY,h(\btiX))]$. For instance, $L(\tih_k^*;\btiD)$ indicates the risk of $\tih_k^*$ trained on the output of $\btiD$. We denote by $\Delta_{F}^k=\rho(h^*(\bX),\bA)-\rho(h^*_k(\bX),\bA)$ the difference of unfairness. As $\tilde{\Delta}_F=\rho(h^*(\bX),\bA)-\rho(\tih^*(\btiX),\bA)$ for the improvement in upstream fairness, we denote by  $\tilde{\Delta}_L=L(h^*;\bD)-L(\tih^*;\btiD)$ the utility improvement in $\tih^*$ and $\Delta_L^k=L(h_k^*;\bD)-L(h^*;\bD)$ the risk difference on $\bD$. 

We first investigate how the risk of $\tih^*$ implicitly captures the risk of $\tih_k^*$ on $\btiD$. To simplify the discussion, we assume that $h$ and $h_k$ are Lipschitz and employ the same Lipschitz loss $l$. The function $h:\calX\rightarrow \calY$ is said to be $M$-Lipschitz if $\Delta_Y(h(x_1),h(x_2))\leq M \Delta_{\bX}(x_1,x_2)$ for all $x_1,x_2\in {\cal X}$. A loss function $l:\calY \times \calY \rightarrow \mathbb{R}$ is said to be $M^l$-Lipschitz if $|l(y,\hat{y}_1)-l(y,\hat{y}_2)|\leq M^l \Delta_{Y}(\hat{y}_1,\hat{y}_2)$ for all $\hat{y}_1,\hat{y}_2,y \in {\calY}$, and $|l(y_1,\hat{y})-l(y_2,\hat{y})|\leq M^l \Delta_{Y}(y_1,y_2)$ for all ${y}_1,{y}_2,\hat{y} \in {\calY}$. By choosing $\Delta_{\bX}$ and $\Delta_Y$ are proper metrics in $\calX$ and $\calY$, one can bound the size of $\tih_k^*$'s risk by the one of $\tih^*$ up to additive constants relying on the choice of $\delta_{\bX},\delta_Y$.
\begin{lemma}
\label{lem:util}
    Let's suppose the loss function $l$ is $M^l$-Lipschitz, and $h_k^*$ is $M_{k}$-Lipschitz. We further assume that the minimizer of $\bE[l(Y,h(\btiX))]$ in $\calH$ is $\tilde{M}$-Lipschitz. Then it follows that 
    $L(\tih^*;\btiD)\leq L(\tih_k^*;\btiD) \leq L(\tih^*;\btiD) + 2M^l \delta_{Y} + M^l(M_k+\tilde{M}) \delta_{\bX} + \Delta_L^k$.
\end{lemma}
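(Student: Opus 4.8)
The plan is to obtain the lower bound from a one-line nesting argument and the upper bound from a chain of Lipschitz ``transfer'' inequalities that move a risk between $\bD$ and $\btiD$ one coordinate at a time, paying a factor of $\delta_{\bX}$ or $\delta_Y$ at each swap. Throughout I would read the budget constraints of \eqref{opt:min} in their expected form, i.e.\ $\bE[\Delta_{\bX}(\bX,\btiX)]\le\delta_{\bX}$ and $\bE[\Delta_Y(Y,\tiY)]\le\delta_Y$, which holds whether the constraints are imposed almost surely or in expectation. For the lower bound, since $\tih^*$ minimizes $L(\cdot;\btiD)$ over $\calH$ and $\tih_k^*$ minimizes the same functional over the smaller set $\calH_k$, Assumption~\ref{assm:dominating} ($\calH_k\subset\calH$) immediately gives $L(\tih^*;\btiD)=\min_{h\in\calH}L(h;\btiD)\le\min_{h\in\calH_k}L(h;\btiD)=L(\tih_k^*;\btiD)$.

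For the upper bound I would first handle the ``downstream half.'' Because $h_k^*\in\calH_k$ and $\tih_k^*$ is the $\calH_k$-minimizer on $\btiD$, optimality yields $L(\tih_k^*;\btiD)\le L(h_k^*;\btiD)$. I then transfer $L(h_k^*;\btiD)=\bE[l(\tiY,h_k^*(\btiX))]$ to $\bD$ by inserting $l(Y,h_k^*(\btiX))$ as an intermediate term: the swap $\tiY\to Y$ costs $M^l\bE[\Delta_Y(\tiY,Y)]\le M^l\delta_Y$ by Lipschitzness of $l$ in its first argument, and the swap $h_k^*(\btiX)\to h_k^*(\bX)$ costs $M^l M_k\bE[\Delta_{\bX}(\btiX,\bX)]\le M^l M_k\delta_{\bX}$ by composing Lipschitzness of $l$ in its second argument with the $M_k$-Lipschitzness of $h_k^*$. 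This gives $L(h_k^*;\btiD)\le L(h_k^*;\bD)+M^l\delta_Y+M^l M_k\delta_{\bX}$, and substituting the identity $L(h_k^*;\bD)=L(h^*;\bD)+\Delta_L^k$ completes this half.

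What remains, and the step I expect to be the main obstacle, is the ``bridging half'': relating $L(h^*;\bD)$, a risk on the original data, to $L(\tih^*;\btiD)$, a risk on the transformed data. No direct Lipschitz swap connects them because $h^*$ is evaluated on $\bX$ while $\tih^*$ is evaluated on $\btiX$. My fix is to route through the auxiliary minimizer $\hat h=\arg\min_{h\in\calH}\bE[l(Y,h(\btiX))]$, which is exactly the object assumed to be $\tilde M$-Lipschitz. Since $\hat h\in\calH$, optimality of $h^*$ gives $L(h^*;\bD)\le\bE[l(Y,\hat h(\bX))]$; a single covariate swap $\hat h(\bX)\to\hat h(\btiX)$ costing $M^l\tilde M\,\bE[\Delta_{\bX}(\bX,\btiX)]\le M^l\tilde M\delta_{\bX}$ yields $L(h^*;\bD)\le\bE[l(Y,\hat h(\btiX))]+M^l\tilde M\delta_{\bX}$. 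Because $\hat h$ minimizes $\bE[l(Y,\cdot(\btiX))]$ and $\tih^*\in\calH$, we have $\bE[l(Y,\hat h(\btiX))]\le\bE[l(Y,\tih^*(\btiX))]$, and a final label swap $Y\to\tiY$ costing $M^l\delta_Y$ gives $L(h^*;\bD)\le L(\tih^*;\btiD)+M^l\delta_Y+M^l\tilde M\delta_{\bX}$. Adding this to the downstream half and collecting the two $\delta_Y$ contributions and the $M_k,\tilde M$ coefficients on $\delta_{\bX}$ produces exactly $L(\tih_k^*;\btiD)\le L(\tih^*;\btiD)+2M^l\delta_Y+M^l(M_k+\tilde M)\delta_{\bX}+\Delta_L^k$, as claimed.
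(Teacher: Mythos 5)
Your proof is correct and is essentially the paper's own argument: your auxiliary minimizer $\hat h$ is exactly the paper's $\tih^{\dagger}$ (the minimizer of $\bE[l(Y,h(\btiX))]$ over $\calH$), and your two ``halves'' reproduce term-by-term the paper's chain of inequalities, whose error terms $Q_1^0$, $Q_1^k$, $Q_2^0$, $Q_2^k$ are bounded by the same label/covariate Lipschitz swaps and combined with the same three optimality comparisons ($\tih_k^*$ vs.\ $h_k^*$ on $\btiD$, $\tih^{\dagger}$ vs.\ $\tih^*$ on $(\btiX,Y)$, and $h^*$ vs.\ $\tih^{\dagger}$ on $\bD$) plus the identity defining $\Delta_L^k$. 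The only difference is organizational, so there is nothing to bridge.
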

Under the moderate use of $\delta_{\bX},\delta_Y$, the risk of $\tih^*$ is able to capture the downstream risk of $\tih_k^*$ if the utility gap $\Delta_L^k$, reflecting the discrepancy of capacity between $\calH$ and $\calH_k$ on $\bD$, is not too large. Many popular loss functions are Lipschitz, such as the mean absolute error, hinge loss, logistic loss, etc. For some non-Lipschitz losses, a similar bound can be derived. As an example, the mean-squared error is decomposed into $\bE[(\tiY - \tih^*(\btiX))^2]\leq \bE[(\tiY - \tih_k^*(\btiX))^2]+\bE[(\tih_k^*(\btiX) - \tih^*(\btiX))^2]$, so a similar bound as Lemma~\ref{lem:util} is obtained. For (generalized) cross-entropy loss, one can clip the probabilities to prevent them from diverging in the boundary so that the loss becomes Lipschitz.

The downstream fairness improvement $\tilde{\Delta}_F^k$ is characterized by the aforementioned informative quantities associated with $\tih^*$ and $h_k^*$. Our analysis aims to explain how the improvement of $\tilde{\Delta}_F$ leads to improving $\tilde{\Delta}_F^k$ and what affects the accurate relation between those. Recall that $\tilde{\Delta}_F^k=\rho(h^*_k(\bX),\bA)-\rho(\tih^*_k(\btiX),\bA)$ and $\tilde{\Delta}_F=\rho(h^*(\bX),\bA)-\rho(\tih^*(\btiX),\bA)$. The derived bound, as follows, asserts that the accuracy of $\tih^*$ is an important component to guarantee the fairness improvement of downstream models.
\begin{theorem}
\label{thm:uplwhk}
Suppose the assumptions in Lemma~\ref{lem:util} hold. Let $C_{M,\delta_{\bX},\delta_Y}=2M^l \delta_Y + M^l(M_k+\tilde{M}) \delta_{\bX} + \Delta_L^k$. The downstream fairness improvement $\tilde{\Delta}_F^k$ is lower bounded by $\tilde{\Delta}_F^k \geq \tilde{\Delta}_F - \Delta_F^k - d(\tih^*(\btiX),\tih_k^*(\btiX))-\lambda_F^{-1}C_{M,\delta_{\bX},\delta_Y}$ and upper bounded by $\tilde{\Delta}_F^k \leq \tilde{\Delta}_F - \Delta_F^k + d(\tih^*(\btiX),\tih_k^*(\btiX))+\lambda_F^{-1}C_{M,\delta_{\bX},\delta_Y}$.
\end{theorem}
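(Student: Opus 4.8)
The plan is to first collapse the two stated inequalities into a single symmetric estimate, and then to bound that estimate by splitting it into a ``model-closeness'' contribution and a ``risk/optimality'' contribution.

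First I would expand the three correlation-based quantities. Writing out $\tilde{\Delta}_F^k$, $\tilde{\Delta}_F$, and $\Delta_F^k$ in terms of $\rho$ and cancelling the common terms $\rho(h^*(\bX),\bA)$ and $\rho(h_k^*(\bX),\bA)$, one finds $\tilde{\Delta}_F^k - \tilde{\Delta}_F + \Delta_F^k = \rho(\tih^*(\btiX),\bA) - \rho(\tih_k^*(\btiX),\bA)$. Hence the claimed lower and upper bounds are precisely the two sides of the single two-sided estimate $|\rho(\tih^*(\btiX),\bA) - \rho(\tih_k^*(\btiX),\bA)| \leq d(\tih^*(\btiX),\tih_k^*(\btiX)) + \lambda_F^{-1} C_{M,\delta_{\bX},\delta_Y}$, and it suffices to prove this. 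Intuitively, the task reduces to controlling how differently the upstream minimizer $\tih^*$ and the downstream minimizer $\tih_k^*$, both trained on $\btiD$, correlate with $\bA$.

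Next I would produce the two additive contributions on the right. For the $d(\tih^*(\btiX),\tih_k^*(\btiX))$ term I would use the variational definition of HGR (Definition~\ref{def:hgr}) together with the distance-like metric $d$ and its triangle-inequality property (Lemma~\ref{lem:triangle} in the SM): taking a near-optimal witness pair $(f,g)$ for $\rho(\tih^*(\btiX),\bA)$ and re-using $f$ on $\tih_k^*(\btiX)$ (legitimate since both outputs lie in $\calY$ and are functions of the same $\btiX$) yields, after re-standardizing $f(\tih_k^*(\btiX))$, a lower bound on $\rho(\tih_k^*(\btiX),\bA)$ whose gap from $\rho(\tih^*(\btiX),\bA)$ is governed by the closeness of the two outputs, i.e. by $d(\tih^*(\btiX),\tih_k^*(\btiX))$ via Cauchy--Schwarz. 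For the $\lambda_F^{-1} C_{M,\delta_{\bX},\delta_Y}$ term I would invoke the optimality of $G^*$ in the penalized objective \eqref{opt:min} exactly as in the lower-bound argument of Theorem~\ref{thm:uplwh0}: because the objective weights the fairness penalty by $\lambda_F$, any excess risk of $\tih_k^*$ relative to $\tih^*$ on $\btiD$ converts into at most a $\lambda_F^{-1}$-scaled correlation gap, and that excess risk is controlled by $L(\tih^*;\btiD)\leq L(\tih_k^*;\btiD)\leq L(\tih^*;\btiD)+C_{M,\delta_{\bX},\delta_Y}$ from Lemma~\ref{lem:util}.

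Finally I would add the two contributions and symmetrize (swapping the roles of $\tih^*$ and $\tih_k^*$) to obtain the absolute-value estimate, which unpacks into both the lower and the upper bound. The hard part will be the closeness step: the HGR-optimal witness $f$ need not be Lipschitz, so $\|f(\tih^*(\btiX)) - f(\tih_k^*(\btiX))\|_2$ is not directly controlled by $\Delta_Y(\tih^*(\btiX),\tih_k^*(\btiX))$, and the re-standardization of $f(\tih_k^*(\btiX))$ introduces a normalization error. Separating this normalization error (which I would absorb into the risk term $\lambda_F^{-1}C_{M,\delta_{\bX},\delta_Y}$) from the genuine output-distance error (which yields $d(\tih^*(\btiX),\tih_k^*(\btiX))$) is the delicate bookkeeping that makes the two quantities appear as clean additive pieces.
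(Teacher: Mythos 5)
Your opening reduction is correct and is, in substance, the same algebra the paper uses: expanding definitions gives $\tilde{\Delta}_F^k - \tilde{\Delta}_F + \Delta_F^k = \rho(\tih^*(\btiX),\bA) - \rho(\tih_k^*(\btiX),\bA)$, so the theorem is equivalent to the two-sided estimate $|\rho(\tih^*(\btiX),\bA) - \rho(\tih_k^*(\btiX),\bA)| \leq d(\tih^*(\btiX),\tih_k^*(\btiX)) + \lambda_F^{-1}C_{M,\delta_{\bX},\delta_Y}$. The genuine gap is in your mechanism for the $d$-term, and it is not ``delicate bookkeeping'' --- it is a wrong-direction inequality. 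If $(f,g)$ is a (near-)optimal witness pair for $\rho(\tih^*(\btiX),\bA)$ and $\hat{f}$ is the re-standardization of $f$ applied to $\tih_k^*(\btiX)$, then feasibility of $(\hat f, g)$ gives $\rho(\tih_k^*(\btiX),\bA) \geq \bE[\hat f(\tih_k^*(\btiX))g(\bA)]$, and Cauchy--Schwarz bounds the resulting gap by $\sqrt{2-2\bE[f(\tih^*(\btiX))\hat f(\tih_k^*(\btiX))]}$. But by the supremum definition of HGR, $\bE[f(\tih^*(\btiX))\hat f(\tih_k^*(\btiX))] \leq \rho(\tih^*(\btiX),\tih_k^*(\btiX))$, so this square root is \emph{at least} $d(\tih^*(\btiX),\tih_k^*(\btiX))$, not at most; the transported witness can even be anti-correlated with its counterpart, and nothing in your argument prevents that. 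The ingredient you are missing is the binary-output assumption (in the paper, $\tih^*$ and $\tih_k^*$ are threshold-binarized; see the discussion closing Section~\ref{sec:fair_util_control}). For binary outputs every admissible output-side witness is an affine standardization, unique up to sign, so the cross-moment of the two output witnesses equals $\text{Corr}$ of the binarized outputs, which for binary variables is exactly $\pm\rho(\tih^*(\btiX),\tih_k^*(\btiX))$. This is exactly how \eqref{eqn:lb} in the proof of Theorem~\ref{thm:uplwh0} works: it keeps \emph{both} optimal output witnesses and swaps only the $\bA$-side witness $g$ (whose suboptimality for the other pair gives the correct inequality direction), then applies Cauchy--Schwarz and the binariness identity to land exactly on $d$.

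Your mechanism for the $\lambda_F^{-1}C_{M,\delta_{\bX},\delta_Y}$ term is also misattributed. The lower-bound argument of Theorem~\ref{thm:uplwh0} compares the pair $(G^*,\tih^*)$ against the identity map paired with $h^*$, i.e.\ across the data sets $\btiD$ versus $\bD$; here you need to compare two models $\tih^*$ and $\tih_k^*$ on the \emph{same} transformed data $\btiD$, about which the outer optimality of $G^*$ says nothing. Nor does $\tih^*$ minimize $L + \lambda_F\rho$ over $\calH$: the inner optimization of \eqref{opt:min} deliberately carries no fairness penalty. In the paper's proof this slack enters differently: the risk differences supplied by Lemma~\ref{lem:util}, $0 \leq L(\tih_k^*;\btiD)-L(\tih^*;\btiD) \leq C_{M,\delta_{\bX},\delta_Y}$, are added and subtracted inside the algebraic decomposition, and the $d$-term is supplied separately by the binarized Cauchy--Schwarz bound (plus, for the upper bound, the triangle inequality of Lemma~\ref{lem:triangle} with pivot $\tiY$). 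Relatedly, your plan to ``absorb the normalization error into $\lambda_F^{-1}C_{M,\delta_{\bX},\delta_Y}$'' has no mechanism behind it: that constant is built from Lipschitz constants, $\delta_{\bX}$, $\delta_Y$, and $\Delta_L^k$, none of which control how a non-affine HGR witness distorts under composition with a different model. In short: keep your first step, but replace the witness-transport and the $G^*$-optimality steps by the binary-output argument of \eqref{eqn:lb} together with the Lemma~\ref{lem:util} add-and-subtract.
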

To see how much the upstream improvement $\tilde{\Delta}_F$ should be made to guarantee $\tilde{\Delta}_F^k\geq 0$, a sufficient condition derived from the lower bound in Theorem~\ref{thm:uplwhk} are
{\small 
\begin{align}
\label{eqn:lowercondi}
    \tilde{\Delta}_F \geq \Delta_F^k + d(\tih_k^*(\btiX),\tiY)+d(\tih^*(\btiX),\tiY)+ \lambda_F^{-1} C_{M,\delta_{\bX},\delta_Y}.
\end{align}}
To guarantee a fairness improvement of $\tih_k^*$, the fairness improvement of $\tih^*$ has to be larger than the sum of the terms in the right-hand side of (\ref{eqn:lowercondi}): $\Delta_F^k$, the difference of unfairness on $\bD$, which measures how much $h^*$ experiences more severe discrimination than $h_k^*$; the degree of accuracy of $\tih_k^*$ and $\tih^*$ measured by $d$ metric, which indirectly relate to their utilities (or losses) on $\btiD$ as discussed in the previous section; and the utility deviation scaled by $\lambda_F^{-1}$. Thus, a large improvement of $\tilde{\Delta}_F$ should precede if $\rho(h^*(\bX),\bA) > \rho(h^*_k(\bX),\bA)$ and $\tih_k^*$ and $\tih^*$ are too distant or suboptimal in utility. 

Our theories underline the role of the utility of $\tih^*$ and $\tih_k^*$ to guarantee the downstream fairness improvement. This emphasis on the utility helps establish a general guideline for using our framework \eqref{opt:min} more effectively. 
\begin{remark}
\label{remark:ex}
     If the given data $\bD$ involves a difficult supervised learning task, i.e., $L(h^*;\bD)$ is substantially large, $L(\tih^*;\btiD)$ could get much worse if $\delta_Y=0$ by the trade-off between utility and fairness discussed after Theorem~\ref{thm:uplwh0}. Thus to satisfy \eqref{eqn:lowercondi}, we recommend using 
     a large $\lambda_F$ or $\delta_Y>0$. For the latter choice ($\delta_Y>0$), it can improve the fairness and reduce $L(\tih^*;\btiD)$ simultaneously, as discussed in Remark~\ref{remark:opt}. 
     On the other hand, if the given data $\bD$ has a high-accuracy supervised learning task on $\calH$ (i.e., $L(h^*;\bD)\approx 0$) and $L(\tih^*;\btiD)$ tends to be small under small $\delta_X$ and $\delta_y=0$, thus \eqref{eqn:lowercondi} is relatively easily satisfied.
     That is, 
     one can avoid using $\delta_Y>0$ unless $L(\tih^*;\btiD)$ is too large. Note that using the auxiliary label $\tiY(\neq Y)$ in the training of $\tih_k^*$ would deteriorate the model's predictive accuracy of test data. 
\end{remark}
The predictive accuracy mentioned in this remark means evaluating the degree of $\tih_k^*$'s accuracy on the test data $(\btiX',\bA',Y')$. We suppose $G_{\bX}^*$ is available in the inference phase. By denoting the test data $\btiD'=(\btiX',Y')$ for evaluation and assuming that there are no covariates and label shifts in the test data, utility control becomes less accurate when having $\tih_k^*$ on $\btiD=(\btiX,\tiY)$ than $(\btiX,Y)$. The proof of Theorem~\ref{lem:util} includes this part as a subsequential result.

We acknowledge certain limitations in our analysis regarding practical downstream application models. First, some popular supervised models, such as tree-based models, may not satisfy the Lipschitz condition. However, it is common to assume classification or regression models belong to a Lipschitz class (e.g., \cite{gao:etal:20}) for theoretical analysis. For binary classification, the binary assumption of $\tih^*$ and $\tih_k^*$ can be satisfied through a threshold-based decision on continuous outputs. Secondly, employed supervised models in the downstream stage usually have different built-in loss functions for optimization. Nevertheless, we expect this impact will not be overwhelming since it is unrealistic that a model performs too differently for a different loss function out of nowhere.

\subsubsection{Consistent fairness-utility balance among end users}
\label{sec:consistency}
Lemma~\ref{lem:util} provides deeper insights into our task-tailored pre-processing mechanism. Assuming multiple end users with different downstream models, the data distributor, without prior knowledge of the selected models, aims to ensure similar levels of utility and fairness across users. Modeling end users as randomly selecting downstream models, the following proposition demonstrates that these models achieve comparable risk levels.
\begin{proposition}[Consistency of utility]
\label{prop:coherent}
    Suppose there are $N$ downstream models each of which randomly picks among ${\calH}_k\subset{\calH}$ for $k=1,\dots,K$ with the assumptions in Lemma~\ref{lem:util}.
    Let's define $M^*=\max \{M_k\}_{k=1}^K$ and $\Delta_L^*=\max \{\Delta_L^k\}_{k=1}^K$ Let's denote by $h_{k(i)}\in {\calH}_{k(i)}$ the selected model by the $i$th downstream user. Then the variance of downstream risk is bounded above by $\text{Var}(L(\tih_{k(i)}^*;\btiD))\leq {(2M^l (\delta_Y + (M^*+\tilde{M}) \delta_{\bX} + \Delta_L^*))^2}/4$.
\end{proposition}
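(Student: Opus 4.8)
The plan is to recognize that the only source of randomness in $L(\tih_{k(i)}^*;\btiD)$ is the user's random choice of model class, and that every realization lives in a single bounded interval whose endpoints do not depend on the chosen class. Once that interval is pinned down, the variance bound is an immediate consequence of Popoviciu's inequality, which states that a random variable supported on $[a,b]$ has variance at most $(b-a)^2/4$ — this is precisely where the factor $1/4$ in the statement originates.

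First I would fix an arbitrary admissible class $\mathcal{H}_k\subset\mathcal{H}$ and apply Lemma~\ref{lem:util} to the corresponding risk minimizer $\tih_k^*$, giving the two-sided enclosure
\[
L(\tih^*;\btiD)\leq L(\tih_k^*;\btiD)\leq L(\tih^*;\btiD)+2M^l\delta_Y+M^l(M_k+\tilde{M})\delta_{\bX}+\Delta_L^k .
\]
The crucial feature is that the lower endpoint $a:=L(\tih^*;\btiD)$ is a fixed constant, independent of $k$. To make the upper endpoint independent of $k$ as well, I would replace the class-specific quantities $M_k$ and $\Delta_L^k$ by their uniform maxima $M^*=\max_k M_k$ and $\Delta_L^*=\max_k\Delta_L^k$, obtaining a common interval $[a,b]$ with width $b-a=2M^l\delta_Y+M^l(M^*+\tilde{M})\delta_{\bX}+\Delta_L^*$ (the slight constant slack between this width and the one appearing in the stated right-hand side is harmless and absorbed there).

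Next, since the $i$th user selects a class $\mathcal{H}_{k(i)}$ at random, the quantity $L(\tih_{k(i)}^*;\btiD)$ is a random variable whose law is governed by the selection distribution and which takes values in the fixed set $\{L(\tih_k^*;\btiD):k=1,\dots,K\}\subset[a,b]$. Applying Popoviciu's inequality to this bounded random variable directly yields $\text{Var}\big(L(\tih_{k(i)}^*;\btiD)\big)\leq (b-a)^2/4$, which is the claimed bound; the extremal value $(b-a)^2/4$ is attained by a two-point selection placing equal mass on the two endpoints, so the constant cannot be improved in general.

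There is no deep obstacle here; the content lies entirely in the first two steps, and the argument is short once the right framing is in place. The only point requiring genuine care is the observation that $a$ is truly constant across classes — it is the risk of the single upstream minimizer $\tih^*$ and does not move with $k$ — so the entire spread of $L(\tih_{k(i)}^*;\btiD)$ is controlled by the width $b-a$ rather than by the absolute magnitudes of the individual risks. Verifying that the uniform endpoint $b$ dominates each class-specific upper bound through $M_k\leq M^*$ and $\Delta_L^k\leq\Delta_L^*$, and then matching constants to the exact form of the stated bound, is a routine computation that I would not grind through.
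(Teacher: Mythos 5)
Your proposal is correct and follows essentially the same route as the paper's own proof: invoke Lemma~\ref{lem:util} to enclose every $L(\tih_{k(i)}^*;\btiD)$ in the fixed interval with left endpoint $L(\tih^*;\btiD)$, uniformize the width via $M^*$ and $\Delta_L^*$, and conclude with Popoviciu's inequality. The small mismatch you flag between the derived width $2M^l\delta_Y+M^l(M^*+\tilde{M})\delta_{\bX}+\Delta_L^*$ and the parenthesization in the stated bound is present in the paper itself (its proof produces exactly your width), so your treatment of it is consistent with the source.
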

This bound explains that the degree of inconsistent risk values, rephrased by the variance of the random downstream model, can be limited if the size of $\delta_{\bX}$ and $\delta_{Y}$ are not too large or the downstream model shows similar performance with the upstream model. The consistency of utility eventually affects the consistency of fairness in the sense that $d(\tih^*_k(\btiX),\tiY)$ can influence the range of $\tilde{\Delta}_F^k$, $2d(\tih^*(\btiX),\tih_k^*(\btiX))+\lambda_F^{-1}C_{M,\delta_{\bX},\delta_Y}$, through the relationship $d(\tih^*(\btiX),\tih_k^*(\btiX))\leq d(\tih^*_k(\btiX),\tiY) + d(\tih^*(\btiX),\tiY)$. Specifically, if the values of $d(\tih^*_k(\btiX),\tiY)$ for $k=1,\dots,K$ are similar, the ranges of $\tilde{\Delta}_F^k $ for different $\tih_k^*$ may not vary much, accordingly reducing the variance of $\tilde{\Delta}_F^k$ under the selection mechanism of Proposition~\ref{prop:coherent}.

\subsubsection{Invariant fairness guarantee for feature engineering}
\label{sec:invariance}

In addition to the consistency property, we further observe that the fairness guarantee of the downstream model holds on post-processed features created from $\btiX$. A downstream user who receives the transformed data $\btiX$ can explore additional features using $\btiX$ before they fit a supervised model. For instance, one may add extra nonlinear features to a linear model for both interpretability and flexibility. Unless the end user imports data other than $\btiX$, the fairness improvement can be controlled by $\tih^*$ as the following proposition implies.  
\begin{proposition}[Post-processing]
    Let $\calH_k(f_e)$ be a composite function class defined as $\{h_k\circ f_e: h_k \in \calH_k\}$ with the post-processed feature-engineering map $f_e: \calX \rightarrow {\cal X}'$ and the supervised model $h_k:{\cal X}'\rightarrow \calY$. If $\calH_k(f_e)\subset \calH$, then the fairness improvement and utility preservation of $\tih_{k|f_e}^*(\btiX)$ can be controlled where $\tih_{k|f_e}^*=\arg_{h_{k}}\min\bE[l(\tiY,h_k(f_e(\btiX)))]$.
\end{proposition}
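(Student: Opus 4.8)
The plan is to recognize that fitting a downstream model on the engineered features $f_e(\btiX)$ is nothing but fitting an ordinary downstream model whose hypothesis space is the composite class $\calH_k(f_e)$, and then to invoke Lemma~\ref{lem:util} and Theorem~\ref{thm:uplwhk} verbatim with $\calH_k$ replaced by $\calH_k(f_e)$. First I would rewrite the defining optimization. Since the objective $\bE[l(\tiY,h_k(f_e(\btiX)))]$ depends on $h_k$ only through the composite $g=h_k\circ f_e$, minimizing over $h_k\in\calH_k$ is identical to minimizing over $g\in\calH_k(f_e)$:
\[
\tih_{k|f_e}^*\circ f_e = \arg_{g\in\calH_k(f_e)}\min \bE[l(\tiY,g(\btiX))].
\]
Hence, viewed as a predictor of $\tiY$ from $\btiX$, the quantity $\tih_{k|f_e}^*(\btiX)$ is exactly the risk minimizer over $\calH_k(f_e)$ on $\btiD$, i.e. it plays the role of $\tih_k^*$ in Section~\ref{sec:fair_util_control} when the downstream class is taken to be $\calH_k(f_e)$. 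Likewise, the corresponding minimizer on $\bD$, call it $h_{k|f_e}^*$, plays the role of $h_k^*$.

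Next I would check that the hypotheses required by the downstream results survive the substitution. Assumption~\ref{assm:dominating} holds for the new class by the stated condition $\calH_k(f_e)\subset\calH$. The Lipschitz structure transfers through composition: if $f_e$ is $M_{f_e}$-Lipschitz from $(\calX,\Delta_{\bX})$ to ${\cal X}'$ equipped with a suitable metric $\Delta_{{\cal X}'}$, and each member of $\calH_k$ is $M_k$-Lipschitz, then every $g=h_k\circ f_e$ is $M_kM_{f_e}$-Lipschitz w.r.t. $\Delta_{\bX}$. Thus the only change in the constants is that $M_k$ is replaced by $M_kM_{f_e}$; the loss $l$ and its constant $M^l$ are untouched, and $\tilde M$ is unchanged because $\calH$ itself is unchanged.

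With these relabelings, Lemma~\ref{lem:util} gives the utility-preservation bound
\[
L(\tih_{k|f_e}^*;\btiD)\le L(\tih^*;\btiD)+2M^l\delta_Y+M^l(M_kM_{f_e}+\tilde M)\delta_{\bX}+\Delta_{L,f_e}^k,
\]
where $\Delta_{L,f_e}^k=L(h_{k|f_e}^*;\bD)-L(h^*;\bD)$, and Theorem~\ref{thm:uplwhk} gives the two-sided control of the fairness improvement $\rho(h_{k|f_e}^*(\bX),\bA)-\rho(\tih_{k|f_e}^*(\btiX),\bA)$ with $C_{M,\delta_{\bX},\delta_Y}$ updated to use $M_kM_{f_e}$ and $\Delta_{L,f_e}^k$. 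This is precisely the claimed control of fairness improvement and utility preservation for $\tih_{k|f_e}^*(\btiX)$.

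The main obstacle I anticipate is the Lipschitz bookkeeping for the composite class rather than any genuinely new argument. One must fix a metric $\Delta_{{\cal X}'}$ on the intermediate space ${\cal X}'$ and assume $f_e$ is Lipschitz with respect to it, since otherwise the downstream constant $M_k$ cannot be propagated to $\calH_k(f_e)$ and the reduction to Theorem~\ref{thm:uplwhk} breaks down. Once $f_e$ is Lipschitz and the composite class is identified with a legitimate downstream class satisfying Assumption~\ref{assm:dominating}, the statement follows by direct reuse of the already-established downstream bounds.
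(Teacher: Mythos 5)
Your proposal is correct and takes essentially the same route as the paper: the paper provides no separate proof for this proposition, treating it exactly as you do---as an immediate application of Lemma~\ref{lem:util} and Theorem~\ref{thm:uplwhk} with the composite class $\calH_k(f_e)$ playing the role of the downstream class, which satisfies Assumption~\ref{assm:dominating} by the hypothesis $\calH_k(f_e)\subset\calH$. Your explicit Lipschitz bookkeeping (requiring $f_e$ to be Lipschitz so that the composite class inherits the constant $M_kM_{f_e}$) is a point the paper leaves implicit, and flagging it as a needed assumption for the quantitative bounds is a sound refinement rather than a departure.
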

In a nutshell, if the use of customized features $f_e$ does not exceed the capacity over the auxiliary class $\calH$, the fairness improvement when using post-processed features can also be controlled by $\tih^*$. Hence, as long as $\calH$ in solving \eqref{opt:min} is sufficiently large to cover a wide range of downstream models, the end users have great flexibility and the data provider can foresee their degree of fairness and utility in advance. 

\subsubsection{Extension to separation} The above trade-off and fairness improvement analysis for independence can be extended to the notion of separation, i.e., $\tih^*(\btiX)\perp \bA|Y$, by the conditional HGR correlation. See SM~\ref{supp:sep} to see more details.

\section{Algorithm}
\label{sec:algorithm}

\subsection{Approximating HGR correlation}
\label{sec:approx_hgr}
The HGR correlation is not directly accessible in general due to the nonlinear nature of potential functions. Estimation of $\rho$ has been readily attempted in various areas \cite{brei:frie:85,wang:etal:19}. Particularly in the fairness literature, \cite{lee:etal:22} employed a relaxed version of HGR computation via neural networks. Especially, \cite{mary:etal:19} exploited the well-known approximation property of the HGR correlation for fair supervised learning via in-processing. See \cite{reny:59} for more details as well.
\begin{proposition}
\label{thm:hgr}
For discrete random variables $S_1,S_2$, let's denote by $P_{S_1,S_2}$, $P_{S_1}$, and $P_{S_2}$ the joint distribution of $S_1$ and $S_2$, and marginal mass functions of $S_1$ and $S_2$ respectively. Then $\rho(S_1,S_2)^2 \leq \chi^2(P_{S_1,S_2},P_{S_1}\otimes P_{S_2})$.
\end{proposition}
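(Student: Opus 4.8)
The plan is to work directly from the variational definition $\rho(S_1,S_2)=\sup_{f,g}\bE[f(S_1)g(S_2)]$ and reduce the bound to a single application of the Cauchy--Schwarz inequality. Fix any admissible pair $(f,g)$, i.e. one satisfying $\bE[f(S_1)]=\bE[g(S_2)]=0$ and $\bE[f(S_1)^2]=\bE[g(S_2)^2]=1$. First I would exploit the zero-mean constraints: since $\sum_{s_1,s_2} f(s_1)g(s_2)P_{S_1}(s_1)P_{S_2}(s_2)=\bE[f(S_1)]\,\bE[g(S_2)]=0$, I may subtract the product measure without changing the value of the expectation, obtaining
\[
\bE[f(S_1)g(S_2)]=\sum_{s_1,s_2} f(s_1)g(s_2)\bigl(P_{S_1,S_2}(s_1,s_2)-P_{S_1}(s_1)P_{S_2}(s_2)\bigr).
\]
This is the crucial reduction, as it brings the signed difference $P_{S_1,S_2}-P_{S_1}\otimes P_{S_2}$, which is exactly what enters the $\chi^2$ divergence, into the expression.

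Next I would split each summand symmetrically and apply Cauchy--Schwarz. Restricting the sum to the support of $P_{S_1}\otimes P_{S_2}$, where $P_{S_1}(s_1)P_{S_2}(s_2)>0$, I write each term as the product of $f(s_1)g(s_2)\sqrt{P_{S_1}(s_1)P_{S_2}(s_2)}$ and $(P_{S_1,S_2}(s_1,s_2)-P_{S_1}(s_1)P_{S_2}(s_2))/\sqrt{P_{S_1}(s_1)P_{S_2}(s_2)}$. Cauchy--Schwarz then yields
\[
\bigl(\bE[f(S_1)g(S_2)]\bigr)^2 \leq \Bigl(\sum_{s_1,s_2} f(s_1)^2 g(s_2)^2 P_{S_1}(s_1)P_{S_2}(s_2)\Bigr)\,\chi^2(P_{S_1,S_2},P_{S_1}\otimes P_{S_2}).
\]

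The final step is to identify the two factors. The first factor separates as $\bigl(\sum_{s_1} f(s_1)^2 P_{S_1}(s_1)\bigr)\bigl(\sum_{s_2} g(s_2)^2 P_{S_2}(s_2)\bigr)=\bE[f(S_1)^2]\,\bE[g(S_2)^2]=1$ by the unit-variance constraints, and the second factor is precisely the claimed $\chi^2$ divergence. Hence $(\bE[f(S_1)g(S_2)])^2\le \chi^2$ for every admissible $(f,g)$, and taking the supremum over such pairs gives $\rho(S_1,S_2)^2\le \chi^2(P_{S_1,S_2},P_{S_1}\otimes P_{S_2})$.

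I expect the only delicate point to be the support bookkeeping: one must confirm that the difference $P_{S_1,S_2}-P_{S_1}\otimes P_{S_2}$ is supported where $P_{S_1}(s_1)P_{S_2}(s_2)>0$, so that the square roots in the Cauchy--Schwarz factorization are well defined and no division by zero occurs. This is immediate, because $P_{S_1}(s_1)=0$ forces $P_{S_1,S_2}(s_1,\cdot)=0$ (and symmetrically), so every term with a vanishing denominator also has a vanishing numerator and contributes nothing. An alternative route would be to recognize $\rho$ as the second largest singular value of the matrix with entries $P_{S_1,S_2}(s_1,s_2)/\sqrt{P_{S_1}(s_1)P_{S_2}(s_2)}$ and to note that $\chi^2$ equals the sum of the squared singular values minus one; but the Cauchy--Schwarz argument above is shorter and avoids the spectral machinery.
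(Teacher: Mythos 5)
Your proof is correct and complete. One thing to be aware of: the paper never actually proves this proposition --- it invokes the bound as a well-known fact, pointing to \cite{reny:59} and \cite{mary:etal:19}, and its supplementary proof section contains no argument for it. What you have written is the standard self-contained derivation (essentially R\'enyi's relation between maximal correlation and mean square contingency), and every step checks out: the zero-mean constraints justify the centering $\bE[f(S_1)g(S_2)]=\sum_{s_1,s_2}f(s_1)g(s_2)\bigl(P_{S_1,S_2}(s_1,s_2)-P_{S_1}(s_1)P_{S_2}(s_2)\bigr)$; the weighted Cauchy--Schwarz split against $\sqrt{P_{S_1}(s_1)P_{S_2}(s_2)}$ makes the first factor collapse to $\bE[f(S_1)^2]\,\bE[g(S_2)^2]=1$ and the second factor become exactly $\chi^2(P_{S_1,S_2},P_{S_1}\otimes P_{S_2})$; and your support bookkeeping is right, since $P_{S_1}(s_1)=0$ forces $P_{S_1,S_2}(s_1,\cdot)\equiv 0$ (and symmetrically), which is also the convention under which the $\chi^2$ sum is finite on the product support. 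Passing from the per-pair bound to the supremum is legitimate because $\rho\geq 0$ (flip the sign of $f$ if necessary), so $|\bE[f(S_1)g(S_2)]|\leq\sqrt{\chi^2}$ for all admissible pairs yields $\rho^2\leq\chi^2$. As a side benefit, the spectral alternative you sketch --- $\rho$ as the second-largest singular value of the matrix with entries $P_{S_1,S_2}(s_1,s_2)/\sqrt{P_{S_1}(s_1)P_{S_2}(s_2)}$, and $\chi^2$ as its squared Frobenius norm minus one --- is precisely how one sees the paper's follow-up remark that equality holds when both $S_1$ and $S_2$ are binary: in that case the matrix is $2\times 2$, so there is only one nontrivial singular value and the inequality collapses to an identity.
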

The equality holds when both $S_1$ and $S_2$ are binary. In the work of \cite{mary:etal:19}, they approximated the $\chi^2$-divergence by density estimation with the Gaussian kernel to deal with continuous variables. Recently, \cite{sohn:etal:23} approached approximating the $\chi^2$-divergence via the dual representation as follows, 
\begin{align*}
    \chi^2(P_{\barh^*(\bbarX),\bA},P_{\barh^*(\bbarX)}\otimes P_{\bA}) \geq \sup_{V\in{\cal V}} R_V(\barh^*(\bbarX),\bA), 
\end{align*}
where the dual of $\chi^2$-divergence on the right term comes with $R_V(\barh^*(\bbarX),\bA) = \bE_{P_{\barh^*(\bbarX),\bA}}[V(\barh^*(\bbarX),\bA)] - \bE_{P_{\barh^*(\bbarX)}\otimes P_{\bA}}[f^*(V(\barh^*(\bbarX),\bA))]$ where $f^*(x)=x^2/4+x$. Under regular assumptions, the approximation is tight with optimal $V^*$. Based on this background, $\rho(\barh^*(\bbarX),\bA)$ can be approximated by $R_{V^*}(\barh^*(\bbarX),\bA)$ while $V^*$ is found through an additional maximization step for $R_V(\barh^*(\bbarX),\bA)$ w.r.t. $V\in {\cal V}$. Likewise, we can find a dual term for separation, following the idea of \cite{roma:etal:20} that measures discrepancy between $(\barh^*(\bbarX),\bA,Y)$ and $(\barh^*(\bbarX),\bA',Y)$ where $\bA'$ independently follows $A|Y$. See SM~\ref{supp:sep-dual}.

\subsection{Min-max bilevel optimization}
The critical premise of the proposed framework is that $\calH$ should be sufficiently large such that the class includes as many downstream supervised models as possible. In this regard, we suggest to choose $\calH$, $\cal G$ and $\cal V$ as families of neural networks for sufficient flexibility and convenient optimization (i.e., with the use of gradient descent/ascent algorithms). With the extra maximization step for the approximation of the HGR correlation, the original optimization \eqref{opt:min} is now written as a constrained min-max bilevel optimization 
\begin{align}
\label{opt:minmax}
    \min_{G \in {\cal G}}\max_{V\in {\cal V}} \bE[l(\barY,\barh^*(\bbarX))] + \lambda_F R_V(\barh^*(\bbarX),\bA),
\end{align}
where $G$ and $\barh^*$ are subjected to the constraints in \eqref{opt:min}. In this work, the constraints $\Delta_{\bX}(\bX,\bbarX)\leq \delta_{\bX}$ and $\Delta_{Y}(Y,\barY)\leq \delta_{Y}$ are handled under a Lagrangian formulation because it can be easily updated by the dual descent approach as \cite{fior:etal:21}. 

Bilevel optimization is widely applied in machine-learning problems such as meta-learning and hyperparameter optimization. Typically, its solution is obtained through alternating gradient-based updates between inner and outer variables due to the lack of a closed-form solution for the inner variable. This work employs an iterative differentiation scheme for sequential parameter updates, offering simple implementation and supported by its decent performance in various applications \cite{finn:etal:17,ji:etal:21}. To write the algorithm, let's denote by $L(G_{\bX};G_Y,\bar{h}^*,V,\lambda_{\bX})=\bE[l(\barY,\barh^*(\bbarX))] + \lambda_F R_V(\bar{h}^*(\bbarX),\bA)+\lambda_{\bX}\tau(\Delta_{\bX}(\bX,\bbarX)-\delta_{\bX})$ for $G_{\bX}$ and $L(G_Y;G_X,\bar{h}^*,\lambda_Y)=\bE[l(\barY,\barh^*(\bbarX))]+\lambda_Y\tau(\Delta_{Y}(Y,\barY)-\delta_{Y})$ for $G_Y$, respectively where $\tau(x)=\max\{x,0\}$. The suggested algorithm to solve \eqref{opt:minmax} appears in Algorithm~\ref{alg}.

Algorithm~\ref{alg} shows the update step for each component in detail. In each step, it updates the lower (max-)minimization for $T'$ times, the upper minimization, and updates the Lagrangian multipliers $\lambda_{\bX}$ and $\lambda_Y$ for the constraints of $\Delta_X$ and $\Delta_Y$ respectively. The superscript $(t)$ denotes the $t$th iterate of its base expression. If $\Delta_{\bX} > \delta_{\bX}$ and $\Delta_Y > \delta_Y$, the multipliers will gradually increase until the constraints are satisfied. As needed, one may force every single variable $X_j$ to have each $\Delta_{X_j}\leq \delta_{X_j}$ respectively, which then requires $d_{\bX}$ number of independent multipliers. 

For coherency with our previous argument, Algorithm~\ref{alg} is written in terms of the population level. Since the joint distribution $(\bX,\bA,Y)$ is unknown, the expectation is commonly estimated by its empirical version based on $n$ size random samples $(\bX_i,\bA_i,Y_i)$ for $i=1,\dots,n$. For instance, $\frac{1}{n}\sum_{i=1}^n L(\bar{Y}^{(t)}_i, \bar{h}^{(t+1)}(\bar{\bX}^{(t)}_i))$ is an unbiased estimator of $\bE[L(\bar{Y}^{(t)}, \bar{h}^{(t+1)}(\bar{\bX}^{(t)}))]$ for a given $\bar{h}^{(t+1)}$ if the random samples are identically distributed. For computational efficiency, the algorithm adopts the mini-batch scheme. Other population quantities are estimated in the same manner. 

\begin{algorithm}[ht!]
\scriptsize
\caption{Task-tailored Pre-processing}
\label{alg}
\begin{algorithmic}[1]
\REQUIRE Data $(\mathbf{X}, \mathbf{A}, Y)$; initialized neural nets $\bar{h}^{(0)}$, $V^{(0)}$, $G^{(0)} = (G_{\mathbf{X}}^{(0)}, G_Y^{(0)})$; variables $\lambda_{\mathbf{X}}^{(0)} \gets 0$, $\lambda_Y^{(0)} \gets 0$; learning rate $r_{\cdot} > 0$ for each component; total number of iterations $T$. Note $\bar{h}^{(t,1)}=\bar{h}^{(t)}$ and $\bar{h}^{(t+1)}=\bar{h}^{(t,T')}$, and for $V^{(t)}$.
\FOR{$t = 1$ to $T$}
    \FOR{$t'=1$ to $T'$}
    \vspace{0.2cm}
    \STATE $(\bar{\mathbf{X}}^{(t)}, \bar{Y}^{(t)}) = (G_{\mathbf{X}}^{(t)}(\mathbf{X}, \mathbf{A}), G_Y^{(t)}(\mathbf{X}, \mathbf{A}, Y))$
    \vspace{0.2cm}
    \STATE $\bar{h}^{(t,t'+1)} \gets \bar{h}^{(t,t')} - r_h \partial {\bE}[L(\bar{Y}^{(t)}, \bar{h}(\bar{\mathbf{X}}^{(t)}))]/\partial \bar{h}^{(t,t')}$
    \vspace{0.2cm}
    \STATE $V^{(t,t'+1)} \gets V^{(t,t')} + r_V \partial R_{V^{(t)}}(\bar{h}^{(t,t'+1)}(\bar{\mathbf{X}}^{(t)}), \mathbf{A})/\partial V^{(t,t')}$
    \vspace{0.2cm}
    \ENDFOR
    \vspace{0.2cm}
    \STATE $\lambda_{\mathbf{X}}^{(t+1)} \gets \lambda_{\mathbf{X}}^{(t)} + r_{\mathbf{X}} \tau(\Delta_{\mathbf{X}}(\mathbf{X}, \bar{\mathbf{X}}^{(t)}) - \delta_{\mathbf{X}})$
    \vspace{0.2cm}
    \STATE $\lambda_Y^{(t+1)} \gets \lambda_Y^{(t)} + r_Y \tau(\Delta_Y(Y, \bar{Y}^{(t)}) - \delta_Y)$
    \vspace{0.2cm}
    \STATE $ G_{\mathbf{X}}^{(t+1)} \gets G_{\mathbf{X}}^{(t)} - r_G \partial L(G_{\mathbf{X}}^{(t)};G_Y^{(t)},\bar{h}^{(t+1)},V^{(t+1)},\lambda_{\bX}^{(t+1)})/\partial  G_{\mathbf{X}}^{(t)}$ 
    \vspace{0.2cm}
    \STATE $G_Y^{(t+1)} \gets G_Y^{(t)} - r_G\partial L(G_Y^{(t)};G_{\mathbf{X}}^{(t)},\bar{h}^{(t+1)},\lambda_Y^{(t+1)})/\partial G_Y^{(t)}$
    \vspace{0.2cm}
\ENDFOR
\RETURN $G^{(T)}$
\end{algorithmic}
\end{algorithm}

\section{Simulation}
\label{sec:simul}
This section presents our empirical results. All methods are implemented 5 times independently. We report averaged metric values and standard errors (or standard deviation unless otherwise mentioned) in parentheses based on the five runs. 
SM~\ref{sup:simul} includes more details about the simulation setups and implementation of competing methods, such as benchmark data sets, network architectures, and hyperparameters. Our implementation is available at the author's Gihub\footnote{\url{https://anonymous.4open.science/r/fair-downstream-learning/}}.

\subsection{Application for Tabular Data}

\subsubsection{Setting}
Our arguments are validated through comparison studies with existing pre-processing approaches. Competing methods include: reflecting instance weights for training (Reweighing) \citep{kami:cald:12}, 
generating fairness-aware synthetic data by a generative adversarial network (FairTabGAN) \citep{raja:etal:22}, linearly interpolating data instances across different sensitive groups (MultiFair) \citep{tian:etal:24}, and label randomization via Bayes classifier (FairRR) \citep{zeng:etal:24}. The FairRR is considered a primary competitor, since it explicitly employs a supervised learning model, similar to the upstream model in our context, to obtain a fair label. Other competing methods, in contrast, can be seen to belong to data fairness that directly modifies the data distribution. FairRR is implemented using logistic regression (FairRR-LR) and random forest (FairRR-RF). 

The simulation considers five downstream models readily available in {\it scikit-learn}\footnote{scikit-learn is one of the most popular machine learning (ML) packages in Python. 
This open-source platform provides easy-to-use functions for end users to construct and implement a machine-learning pipeline. For more details, refer to https://scikit-learn.org/.} for comparison: RandomForestClassifier, LinearSVC with the kernel trick Nystroem, KNeighborsClassifier, MLPClassifier, and HistGradientBoostingClassifier. 
The Kernel expanded LinearSVC model fits a linear support vector machine after expanding the feature space of $\btiX$ via the kernel trick to see whether the post-processing property holds (Section~\ref{sec:invariance}).

Three popular benchmark data sets are considered: (\texttt{A}) Adult\footnote{We use the data used in the work of \cite{zeng:etal:24}.}, (\texttt{E}) ACSEmployment\footnote{https://github.com/socialfoundations/folktables \label{data:folk}}, and (\texttt{P}) ACSPublicCoverage\footref{data:folk}. For Adult, the task is to predict whether an individual income exceeds \$50,000 or not, where $A_{\text{bin}}$ is `Gender'; ACSEmployment data is to predict whether an individual is employed or not, where $A_{\text{bin}}$ is `Disability'; and ACSPublicCoverage is to predict whether an individual has public insurance or not, where $A_{\text{bin}}$ is `Disability'. Because all data sets include Race (white vs non-white) information, we create ${\bf A}_{\text{dis}}$ (with 4 levels) by concatenating the Race as an additional sensitive attribute in addition to the prespecified $A_{\text{bin}}$. We abuse the notation $A$ for both $A_{\text{bin}}$ and ${\bf A}_{\text{dis}}$ if there is no confusion.

Since competing methods only apply to discrete-format sensitive attributes or classification problems, our simulations focus on handling $A_{\text{bin}}$ or $\bf A_{\text{dis}}$. However, we note that our framework is universally applicable to any format of sensitive attributes and outcome variables. 

To assess the classification performance of the models, the area under the curve (AUC) is measured. Higher AUC implies better predictive accuracy. The deterministic label $\hat{Y}$ of each model is obtained by maximizing the AUC value. Fairness evaluation is based on the following metrics: $\text{SP} = \sum_{a} |P(\hat{Y}=1|A=a)/P(\hat{Y}=1)-1|$ and $\text{EO} = \sum_{y,a} |P(\hat{Y}=y|A=a,Y=y)/P(\hat{Y}=y|Y=y)-1|$
for statistical parity (independence) and equalized odds (separation), respectively. In addition, we use Kolmogorov-Smirnov (KS) statistics $\text{KS-SP} = \sum_{a} \text{KS}(\tih_k^*(\btiX)|_{A=a},\tih_k^*(\btiX))$ and $\text{KS-EO} = \sum_{y,a} \text{KS}(y+(-1)^y\tih_k^*(\btiX)|_{A=a,Y=1-y},y+(-1)^y\tih_k^*(\btiX)|_{Y=1-y})$. For instance, $1-\tih_k^*(\btiX)|_{A=a,Y=0}$ denotes the one-minus score of the $k$th downstream model given ${A=a,Y=0}$. The models have better fairness satisfaction as (KS-)SP and (KS-)EO are lowered. Table~\ref{tab:basic_score} shows the predictive precision and degree of discrimination for MLPClassifier and RandomForestClassifier for $A_{\text{bin}}$ that are trained and validated in the original data $\bD$ without fairness control (Refer to Table~\ref{tab:performance-metrics-all} to see the scores of all supervised models in the study).
\begin{table}[ht!]
\centering
\caption{Prediction/fairness scores of MLPClassifier (MP) and RandomForestClassifier (RF) on $\bD$: standard deviations appear in parentheses.}
\label{tab:basic_score}
\begin{tabular}{c|c|ccc}
\hline
${\calH}_k$  &  Score  & \texttt{A}  & \texttt{E}    &  \texttt{P}    \\
\hline
\multirow{5}{*}{\textsc{MP}} 
 & AUC    & 0.879 (0.002)   & 0.878 (0.002)  & 0.725 (0.002)        \\
 & SP     & 0.832 (0.012)   & 0.654 (0.017)  & 1.073 (0.175)        \\
 & EO     & 0.446 (0.028)   & 0.532 (0.025)  & 0.918 (0.125)        \\
 & KS-SP  & 0.349 (0.009)   & 0.367 (0.007)  & 0.483 (0.013)        \\
 & KS-EO  & 0.429 (0.005)   & 0.521 (0.007)  & 0.784 (0.029)        \\
\hline
\multirow{5}{*}{\textsc{RF}} 
 & AUC    & 0.891 (0.001)   & 0.862 (0.001)  & 0.727 (0.002)        \\
 & SP     & 0.759 (0.040)   & 0.580 (0.010)  & 0.937 (0.022)        \\
 & EO     & 0.298 (0.025)   & 0.447 (0.011)  & 0.870 (0.035)        \\
 & KS-SP  & 0.308 (0.007)   & 0.340 (0.003)  & 0.434 (0.012)        \\
 & KS-EO  & 0.387 (0.024)   & 0.474 (0.014)  & 0.702 (0.033)        \\
\hline
\end{tabular}
\end{table}

\begin{figure*}[ht!] 
    \centering
    \subfloat[Trade-off plots of Adult with $\delta_{X}\in \{0.01,0.10,0.30\}$, $\delta_Y=0$, and $\lambda_F = 10$]{%
        \includegraphics[width=\linewidth]{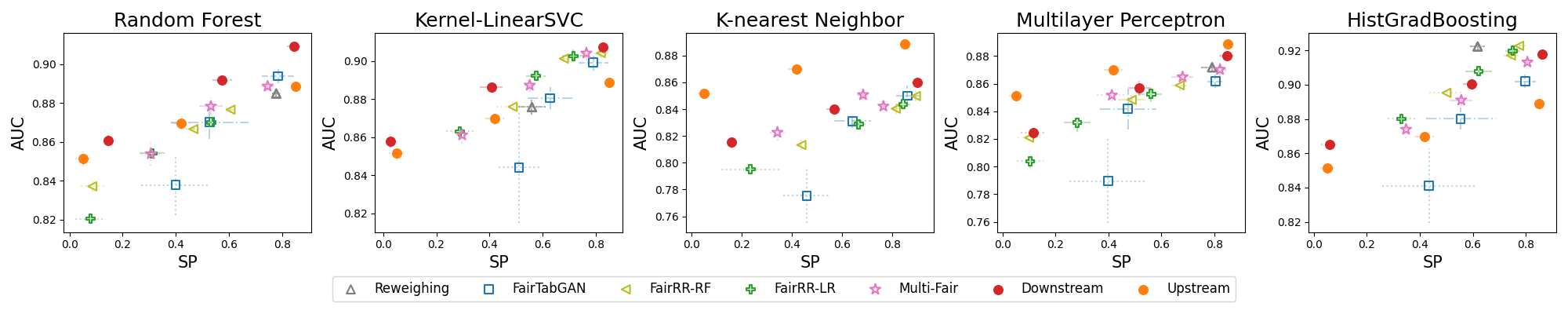}
    }\par
    \subfloat[Trade-off plots of ACSEmployment  with $\delta_{X}\in \{0.01,0.05,0.10\}$,  $\delta_Y=0$, and $\lambda_F = 10$]{%
        \includegraphics[width=\linewidth]{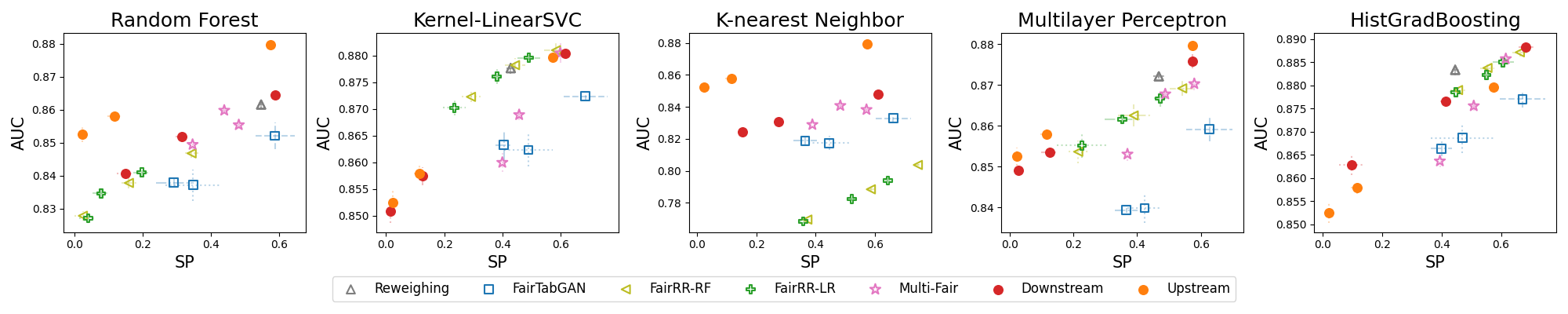}
    }\par
    \subfloat[Trade-off plots of ACSPublicCoverage with $\delta_{X}\in \{0.01,0.05,0.10\}$, $\delta_Y=0.1$, and $\lambda_F = 10$]{%
        \includegraphics[width=\linewidth]{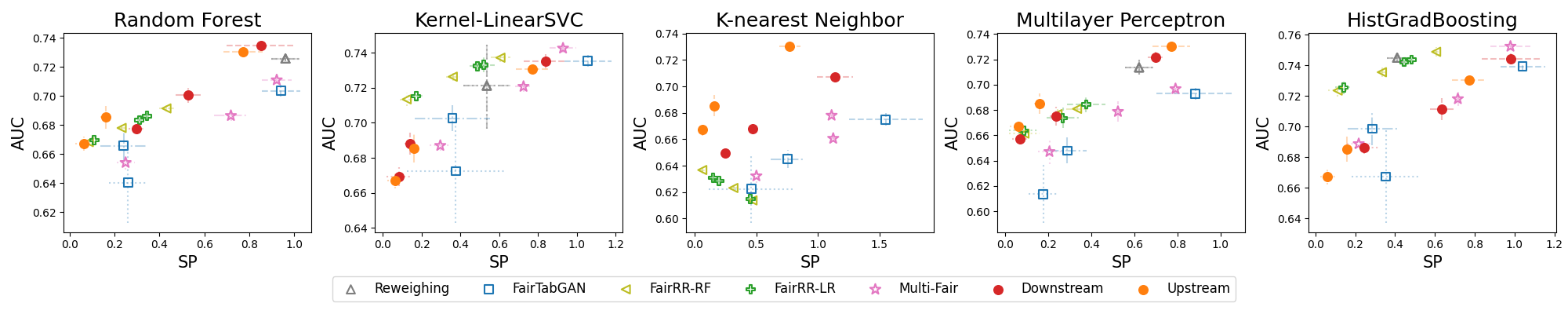}
    }\par
    \caption{Comparison for statistical parity in handling $A_{\text{bin}}$: The closer to the upper left the scores are, the higher performance is achieved. Downstream and Upstream indicate $\tih_k^*$ and $\tih^*$ respectively. The dot and each bar imply the average and $2\times$standard error in each axis. The line styles distinguish Budgets 1 (dashed line), 2 (dash-dot line), and 3 (dotted line).}
    \label{fig:sp}
\end{figure*}

\begin{figure*}
    \centering
    \includegraphics[width=1.0\linewidth]{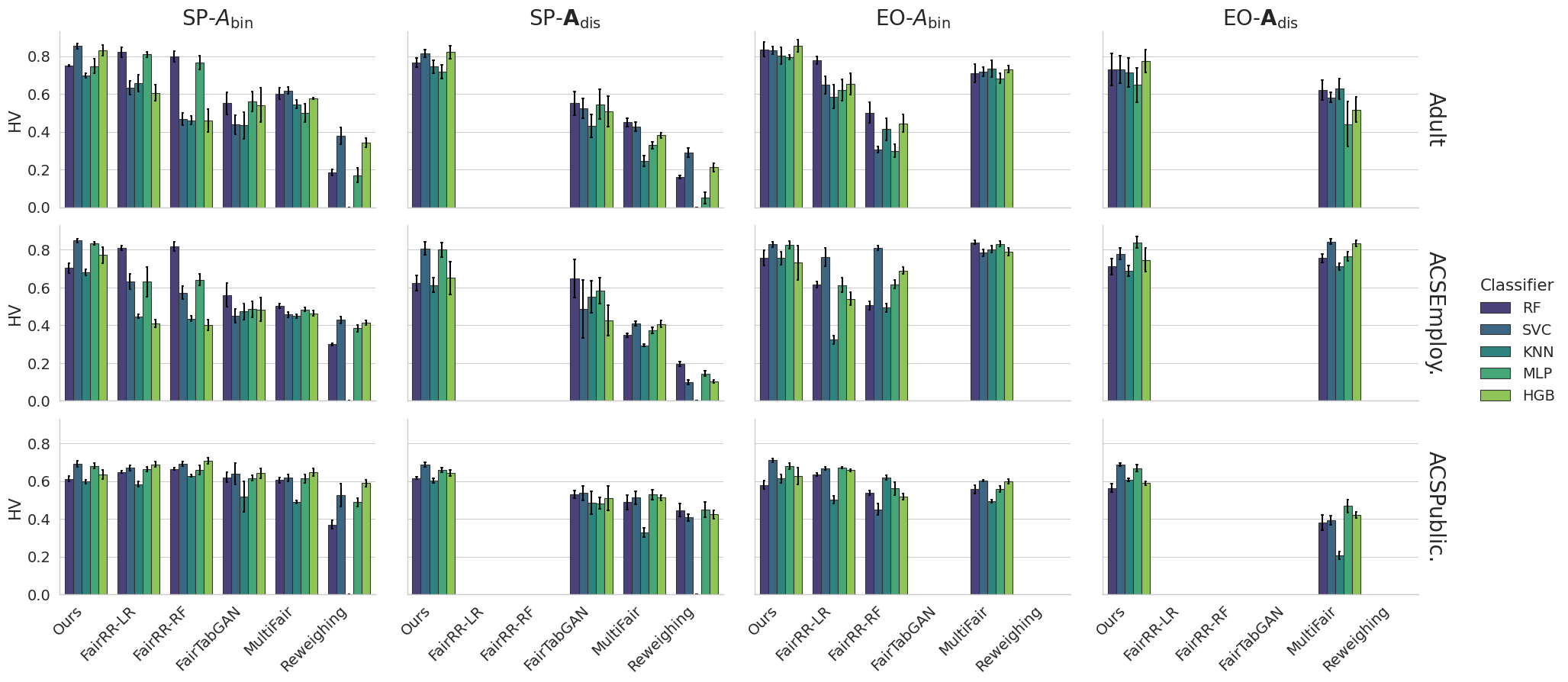}
    \caption{Comparison of HV scores: The hypervolume indicators are calculated based on the three score pairs (by the three budgets) for each competing method and downstream model. 2$\times$Standard errors are drawn from the 5 independent runs. A higher score implies that its corresponding method tends to have a more efficient and diverse trade-off curve. See also Figure~\ref{fig:hv-indicator-ks} for the KS-type fairness score.}
    \label{fig:hv-indicator}
    \vspace{-0.2in}
\end{figure*}

The specific configuration of solving \eqref{opt:minmax} is as follows. The auxiliary model $h$ is trained by the generalized cross-entropy loss. We set $T'=1$ in Algorithm~\ref{alg}. The metrics $\Delta_{\bX}$ and $\Delta_{Y}$ are defined as the mean-absolute error and the categorical hinge loss, for continuous and categorical variables, respectively. The fairness budget $\delta_X=\delta_{X_i}=\cdots=\delta_{X_{d_{\bX}}}$ is assigned to each variable $i=1,\dots,d_{\bX}$, i.e., each $X_i$ has $\Delta_{X_i}(X_i,\tiX_i)\leq \delta_{X_i}$ and $\delta_{X_i}$ is the same for all variables. For each data, we consider three fairness budgets with respect to $\delta_{X}$; Budgets 1,2, and 3 in ascending order of $\delta_X$. Supported by Remark~\ref{remark:ex}, $\delta_{Y}>0$ is used for PublicCoverage since it has relatively weak predictive accuracy as shown in Table~\ref{tab:basic_score}. We choose $\lambda_F\in \{0.1,1,10,100\}$ that achieves a better fairness-utility trade-off that varies with the size of $\delta_{X}$.  

Except for Reweighing, other competing methods use three hyperparameters to balance the degree of fairness-utility trade-off. The hyperparameters are chosen such that their algorithms find computationally feasible solutions that illustrate trade-off phenomena over a wide score range as much as possible. For clarity, we also use the term ``budget'' to denote the hyperparameter settings of other methods (e.g., in Fig. \ref{tab:consistency}).

\subsubsection{Qualitative and Quantitative Analysis}\label{simul:quali-quant-analysis}
We first illustrate trade-offs between utility and fairness. Figure~\ref{fig:sp} visualizes the trade-off between AUC and SP of the methods in comparison (See also figures in SM~\ref{supp:trade-off-curves} for KS-SP, EO, KS-EO, and the ones for $\bf{A}_{\text{dis}}$, respectively). These trade-off plots show that ours achieves a slightly better trade-off inclination for each downstream model than competitors in general. 
FairRR significantly fails for the KNN model and doesn't consistently show a utility-fairness trade-off inclination (e.g., fairness and utility decrease simultaneously for the ACSPublicCoverage data). Nevertheless, it works well on the linear model with the kernel trick and the boosting model. FairTabGAN has far larger uncertainty bounds than others and compromises more accuracy than ours in most cases. MultiFair shows better trade-off tendency than FairTabGAN overall, but it covers a relatively narrower range of fairness than ours. Its linear interpolation structure may fail to tackle more nonlinear and complex discriminative effects among variables. Reweighing shows stronger performance for the boosting model, but it is defeated by other downstream models, and also does not support controlling the level of fairness. While our method achieves more efficient and diverse trade-off curves in general, we acknowledge that it involves relatively more expensive computation, e.g., GPU dependency.

To further support our arguments on the trade-off curves, we evaluate the Hypervolume (HV) indicator, a widely adopted metric in multi-objective optimization for assessing the quality of a Pareto front \citep{zitz:thie:02,guer:etal:21}. A Pareto front is defined as a set of solutions that are not dominated by any other solution. For instance, considering tuples of (1-AUC, Scaled SP), i.e., scaling SP to lie in $[0,1]$,  where lower is better, if we have $r_1=(0.1,0.1)$, $r_2=(0.2,0.2)$, and $r_3=(0.05,0.15)$, $r_2$ is dominated by both $r_1$ and $r_3$. However, $r_1$ and $r_3$ do not dominate each other; thus, $\{r_1,r_3\}$ constitutes the Pareto front. The HV indicator quantifies the volume of the objective space dominated by an estimated front $S$ and bounded by a reference point $r_*$ (e.g., $r_*=(1,1)$, representing a totally inaccurate and discriminatory model). Formally, it is defined as the Lebesgue measure $\lambda(\cup_{p\in S}\{q: p \leq q \leq r_*\})$ \citep{guer:etal:21}. In our context, a higher HV score signifies that a method achieves better accuracy across a wider range of fairness levels. Figure~\ref{fig:hv-indicator} shows that our method achieves higher HV values in general. Notably, Reweighing yields the lowest score because its frontier consists of only a single point. This diversity aspect of the HV indicator is practically important for providing flexible options for decision-making.

In addition to evaluating the trade-off for each model, we further assess the consistency among downstream models. Let $s_{\text{AUC-SP/EO},k,i}$ and $s_{\text{(KS-)SP/EO},k,i}$ be AUC and (KS-)SP/EO values respectively under the SP/EO penalization, for the $i$th run of the $k$th downstream model. Based on the discussion of Section~\ref{sec:consistency}, we define consistency scores for the utility and fairness as the sample standard deviation $\sigma_{\text{AUC-SP/EO},i}=\sigma(s_{\text{AUC-SP/EO},k,i})$ and $\sigma_{\text{(KS-)SP/EO},i}=\sigma(s_{\text{(KS-)SP/EO},k,i})$ across $k=1,\dots, 5$. Then we report the average and standard deviation across $i=1,\dots,5$ in Figure~\ref{tab:consistency}. 
Intuitively, the higher consistency scores imply that the transformed data set $\btiD$ would produce substantially different levels of fairness and utility for different downstream models. 
Figure~\ref{tab:consistency} summarizes the consistency scores of all competing methods for SP in $A_{\text{bin}}$. We observe that when the substantially non-Lipschitz boosting model is omitted (i.e., hollow boxplots in Fig.\ref{tab:consistency}), the consistency scores generally improve (see Figure~\ref{fig:consistency-comparison-hgb}  together for other cases in SM~\ref{supp:const-scores}). This emphasizes the importance of Lipschitzness of downstream models discussed in Section~\ref{sec:consistency}. 
Our approach tends to have lower consistency scores than FairRR, the major competing method, but it seems comparable to the data-fairness methods. That is possibly because the data-fairness methods do not involve the complexity of the supervised model to enforce fairness.
\begin{figure}
    \vspace{-0.2in}
    \centering
    \includegraphics[width=1.0\linewidth]{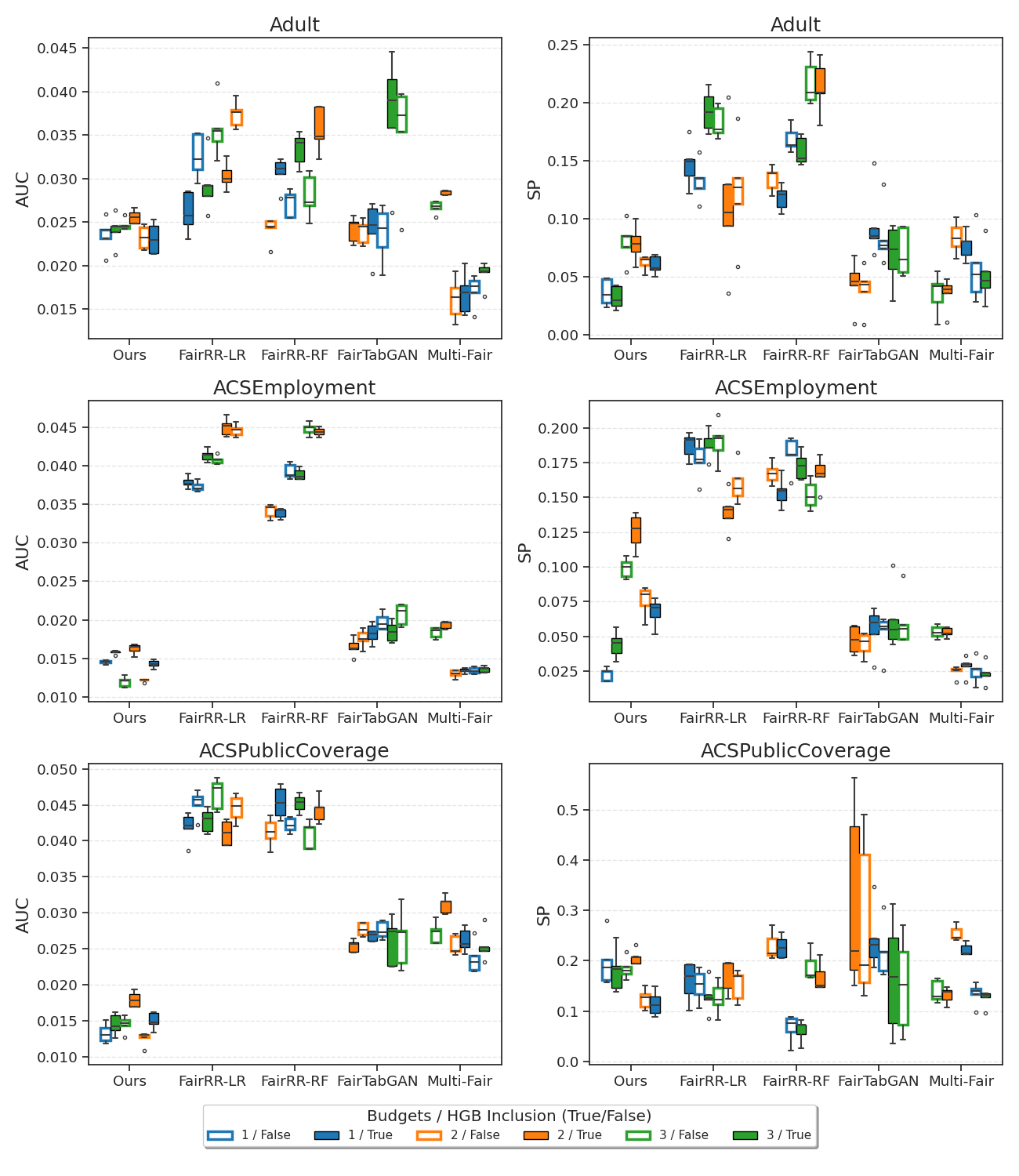}
    \vspace{-0.2in}
    \caption{Consistency of AUC and SP in $A_{\text{bin}}$: Boxplots compare consistency scores across competing methods and the inclusion of the boosting model in calculating the scores.}
    \label{tab:consistency}
    \vspace{-0.2in}
\end{figure}

\subsubsection{Effects of using $\delta_Y>0$}
\label{sec:using_deltay} Also, using $\tiY$ can improve consistency when the target data has an inherently difficult prediction task. Recall that the degree of accuracy of the upstream and downstream models influences the degree of fairness improvement (Section~\ref{sec:down}). Table~\ref{tab:consistency_public} compares the consistency scores under $\delta_Y=0$ and $\delta_Y=0.1$ of ACSPublicCoverage that has a weaker prediction power than others (Table~\ref{tab:basic_score}). Interestingly, the table shows that using $\tiY(\neq Y)$ in training yields better consistency of utility and fairness among downstream models with the improved utility on the transformed label.

\subsubsection{Tuning Guidance}
For users' convenience in hyperparameter tuning, we recommend using, or searching from, a relatively higher fixed $\lambda_F$ and altering $\delta_{\bX}>0$ to capture visually well interpretable trade-off curves or higher HV scores. Then try to tune $\delta_Y>0$ when adjusting $\lambda_F$ and $\delta_{\bX}>0$ is not satisfactory, referring to discussion the above Section~\ref{sec:using_deltay}.

\subsubsection{Comparison to in-processing}
Finally, we close this section with a brief mention of the comparison to in-processing frameworks. Overall, in-processing methods achieve slightly more efficient trade-off curves than ours (see SM~\ref{supp:in-proc}), not surprisingly, because they directly optimize a classification model itself to be most accurate and fair. Notwithstanding, pre-processing enjoys fundamental advantages over in-processing. It enables fairness in legacy systems where modifying an objective or model is impossible. Furthermore, it empowers distributors to guarantee fairness proactively, ensuring that fairness criteria have to be met regardless of the downstream user's ethicality or capability. 

\begin{table}[ht!]
\centering
\caption{Comparison of $\delta_Y=0$ and $\delta_Y=0.1$ in ACSPublicCoverage: AUC-MP and AUC-RF stand for AUC values found from transformed labels $\tiY'$ of test data. Consistency scores of utility are also made from the transformed label. This result justifies the argument in Remark~\ref{remark:ex}.}
\label{tab:consistency_public}
\setlength{\tabcolsep}{2pt} 
{\scriptsize 
\begin{tabular}{c|ccc|ccc}
\hline 
 & \multicolumn{3}{c|}{$\delta_Y=0$} & \multicolumn{3}{c}{$\delta_Y=0.1$} \\
\hline
 & Budget 1 & Budget 2 & Budget 3 & Budget 1 & Budget 2 & Budget 3 \\
\hline
AUC-MP & .715 (.004) & .674 (.008) & .658 (.005) & .804 (.004) & .761 (.006) & .748 (.005) \\ 
AUC-RF & .731 (.002) & .706 (.004) & .678 (.002) & .816 (.004) & .781 (.006) & .762 (.004) \\  
\hline 
$\sigma_{\text{AUC-SP}}$ & .023 (.001) & .028 (.001) & .024 (.001) & \textbf{.015 (.001)} & \textbf{.018 ($\approx 0$)} & \textbf{.015 (.001)} \\
$\sigma_{\text{SP}}$ & .287 (.109) & .292 (.008) & .128 (.011) & \textbf{.181 (.019)} & \textbf{.207 (.007)} & \textbf{.115 (.011)} \\
$\sigma_{\text{KS-SP}}$ & .050 (.003) & .123 (.004) & .054 (.005) & \textbf{.042 (.006)} & \textbf{.087 (.003)} & \textbf{.044 (.003)} \\
\hline 
\end{tabular}
}
\end{table}

\subsection{Application for CelebA}

We further apply our framework to an image classification problem on CelebA data. CelebA is a collection of face images, widely used in diverse areas of machine learning, such as face recognition, image synthesis, etc. This simulation study chooses binary variables Smiling ($Y=1$ for smile faces) and Blond Hair ($Y=1$ for blond hair) as the target variables for SP and EO, respectively, where $A$ is whether the person in the image is male ($A=1$) or female ($A=0$). For the upstream classification model, a ResNet18 model \cite{he:etal:16} is trained with the generalized cross-entropy. The downstream task scales down all hidden filters of the same model by half and trains it by the cross-entropy loss. To generate synthesized images, $G_{\bX}$ adopts the U-net structure \cite{ronn:etal:15} that bridges the encoder and decoder layers. For the approximation of HGR correlation, $V$ employs a common feed-forward model whose hidden node size is set to 256. With the adoption of the $L_1$ distance for $\Delta_{\bX}$, three different levels of $\delta_{\bX}\in\{0,0.007,0.01\}$ for SP and $\delta_{\bX}\in\{0,0.010,0.015\}$ for EO are considered with $\delta_Y=0$. Further details in this section appear in SM~\ref{supp:simul:celeba}. 

Interestingly, our results find that the converter $G_{\bX}^*$ tends to alter semantic features, balancing the classification probabilities of each group. To see this, we focus on images for which at $\delta_{\bX}=0$ the upstream model predicts correctly, but at $\delta_{\bX}>0$ of the upstream model predicts incorrectly. Figures~\ref{fig:celeba_sp} and~\ref{fig:celeba_eo} draw one such image example within each group of $(Y,A)$ under the three choices of $\delta_{\bX}$, for SP and EO, respectively. In Figure~\ref{fig:celeba_sp} with the control of SP, generated female image examples become less smiling and the male examples become more smiling as $\delta_{\bX}$ increases, and Table~\ref{tab:celeb_info} supports such evolution of semantic features in each group. Likewise, Figure~\ref{fig:celeba_eo} shows that the hair color in the male image examples becomes lighter blond and in the female examples darker. The changes in probability for each group are summarized in Table \ref{tab:celeb_info_eo}.

\begin{table}[ht!]
\caption{Results of the upstream model with different $\delta_{\bX}$ for predicting Smiling. standard deviations appear in parentheses.}
\label{tab:celeb_info}
\centering
\begin{tabular}{c|cc}
\toprule
& \multicolumn{2}{c}{Conditional Probabilities on $A$}              \\
\midrule
$\delta_{\bX}$ & {$P(\tih^*(\btiX)=1|A=1)$} & {$P(\tih^*(\btiX)=1|A=0)$} \\
\midrule
.0 & {.397 (.013)}    &  {.531 (.014)}    \\
.007 & {.403 (.010)}    &  {.523 (.006)}    \\
.010 & {.427 (.016)}    &  {.511 (.023)}    \\
\bottomrule
\end{tabular}
\end{table}

\begin{figure}[!t] 
    \centering
    \subfloat[ $Y=1$ and $A=1$]{%
        \includegraphics[width=\linewidth]{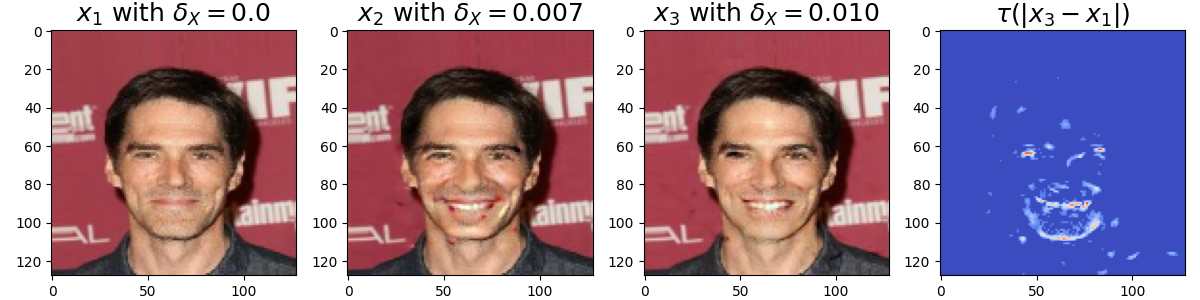}
    }\par
    \subfloat[$Y=1$ and $A=0$]{%
        \includegraphics[width=\linewidth]{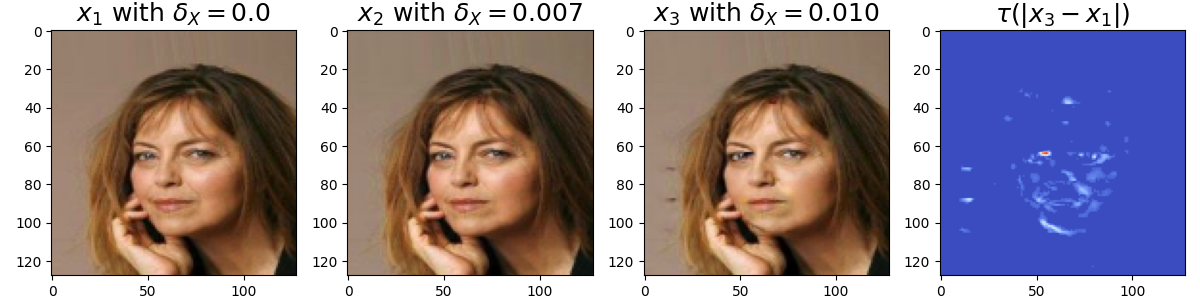}
    }\par
    \subfloat[$Y=0$ and $A=1$]{%
        \includegraphics[width=\linewidth]{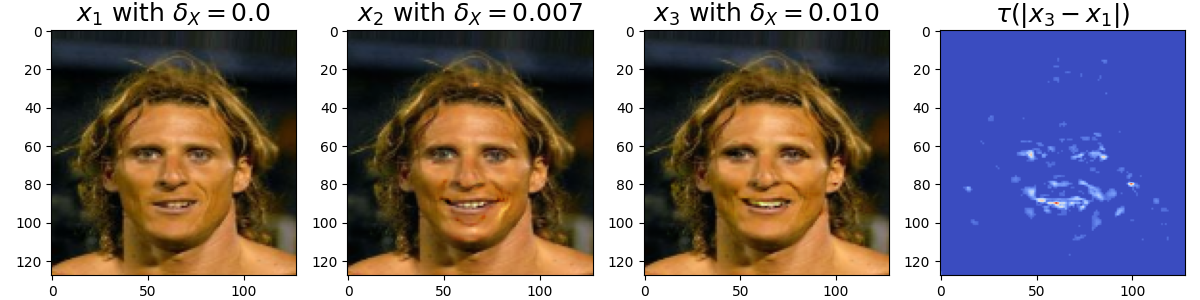}
    }\par
    \subfloat[$Y=0$ and $A=0$]{%
        \includegraphics[width=\linewidth]{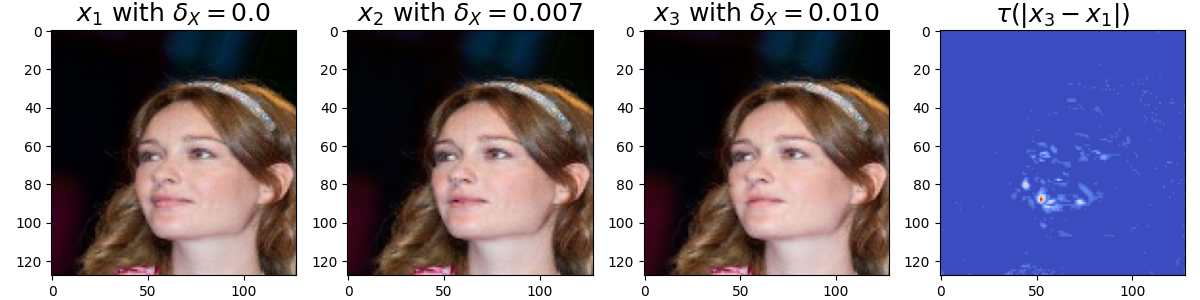}
    }
    \caption{Results of $\btiX$ trained under statistical parity by differing $\delta_{\bX}$. The filtering function $\tau$ replaces pixels by 0 that are smaller than the size of $95\%$ quantile of $|\textsc{x}_3-\textsc{x}_1|$. Male/female images become more/less smiling to balance the smiling probability of each subgroup in Table~\ref{tab:celeb_info}.}
    \label{fig:celeba_sp}
\end{figure}

\begin{figure}[!t] 
    \centering
    \subfloat[$Y=1$ and $A=1$]{%
        \includegraphics[width=\linewidth]{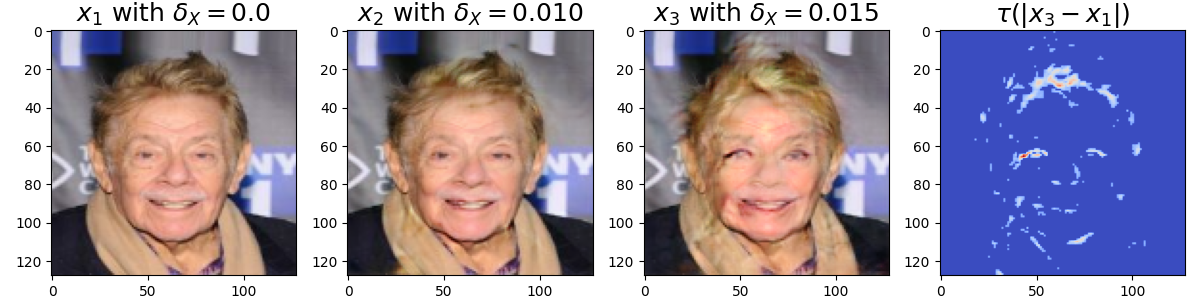}
    }\par
    \subfloat[$Y=1$ and $A=0$]{%
        \includegraphics[width=\linewidth]{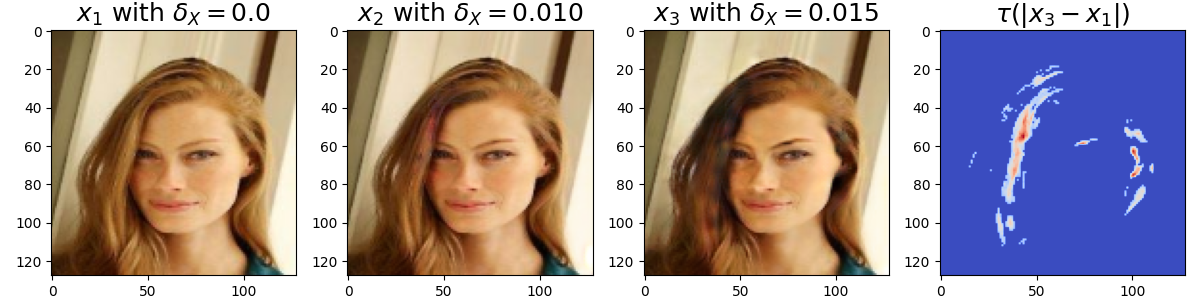}
    }\par
    \subfloat[$Y=0$ and $A=1$]{%
        \includegraphics[width=\linewidth]{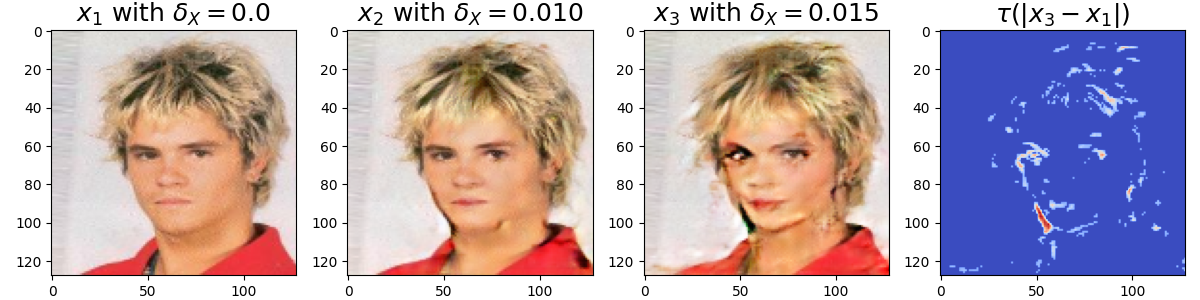}
    }\par
    \subfloat[$Y=0$ and $A=0$]{%
        \includegraphics[width=\linewidth]{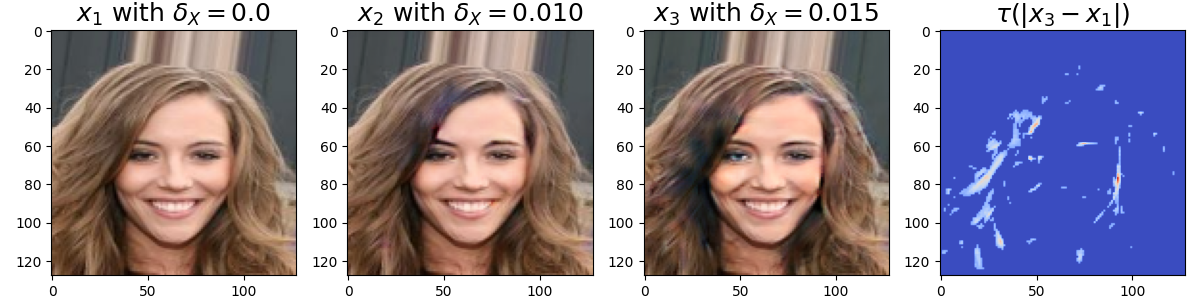}
    }
    \caption{Results of $\btiX$ trained under equalized odds. $\tau$ is the same as Figure~\ref{fig:celeba_sp}. As $\delta_{\bX}$ increases, Male/female images tend to have lighter/darker hair color to balance the conditional probability of each subgroup in Table~\ref{tab:celeb_info_eo}.}
    \label{fig:celeba_eo}
\end{figure}

\begin{table}[ht!]
\caption{Results of the upstream model for different $\delta_{\bX}$ for predicting Blond Hair. Denote $P(\tih^*(\btiX)=y|Y=y,A=a)$ as $P_{y,a}(\hat{Y}=y)$. standard deviations appear in parentheses.}
\label{tab:celeb_info_eo}
\centering
\begin{tabular}{c|cccc}
\toprule
&\multicolumn{4}{c}{Conditional Probabilities on $A$ and $Y$}                  \\
\midrule
$\delta_{\bX}$ & $P_{1,1}(\hat{Y}=1)$  & $P_{1,0}(\hat{Y}=1)$ & $P_{0,1}(\hat{Y}=0)$  & $P_{0,0}(\hat{Y}=0)$ \\
\midrule
.0& .663  (.060)   & .943 (.009)   &  .962 (.008)    &   .882  (.013) \\
.010& .738  (.049)   & .934 (.006)   &  .940 (.009)    &   .885  (.015) \\ 
.015& .784  (.057)   & .921 (.011)   &  .912 (.020)    &   .884  (.011) \\ 
\bottomrule
\end{tabular}
\end{table}
Table~\ref{tab:celeb_down} compares the downstream performance with the existing model (FAAP, \cite{wang:etal:22}). In contrast to ours, FAAP finds a fair pre-processing model for a pre-trained upstream model. Their training does not update the upstream model during the optimization of $G$, so the generated data may fail to consider the downstream performance. For a fair comparison, we implement FAAP by replacing $\bar{h}^{(t,t')}$ with the pre-trained $h^*$ and fix it during iterations of Algorithm~\ref{alg}. The  FAAP model produces data for fitting the downsized ResNet18 model. As Table~\ref{tab:celeb_down} implies, FAAP does not improve the downstream fairness despite of the upstream improvement, while showing our downstream improvement is statistically significant at both stages. As the computational scalability of FairRR highly depends on the implementation of the underlying classifier, we do not address it in this high-dimensional example.

\begin{table}[ht!]
\caption{Comparison between ours and FAAP: Up and Down stand for ResNet18 and downsized ResNet18, respectively. $\Delta$EO and $\Delta$KS-EO denote the average score difference from Original for each metric. Values in parentheses beside the average difference are standard errors. FAAP does not meaningfully improve the downstream fairness (marked $\ddagger$) as much as the upstream fairness in contrast to ours (marked $\dagger$). }
\label{tab:celeb_down}
\centering
\begin{tabular}{c|c|c|ccc}
\toprule
  $\delta_{\bX}$        &  Type           &  $h$    & AUC & EO & KS-EO \\
\midrule
\multirow{2}{*}{.0} &\multirow{2}{*}{Original} & Up & .975 (.003)  & .389 (.058) &  .651 (.050)  \\
                          &                    & Down & .981 (.005)  & .342 (.070) &  .796 (.052)  \\
\midrule
$\delta_{\bX}$        &  Type           &  $h$    & AUC & $\Delta$EO & $\Delta$KS-EO \\
\midrule
\multirow{4}{*}{.010} &\multirow{2}{*}{Ours} & Up   &  .968 (.003)  &  .117 (.038) &  .126 (.039) \\
                      &                      & Down   &  .973 (.010)  &  {$.229~(.034)^{\dagger}$} & {$.182~ (.058)^{\dagger}$} \\
&\multirow{2}{*}{FAAP}                       & Up    & .974 (.002)  &  .190 (.043) & .215 (.054) \\
&                                            & Down    & .981 (.006)  &  {$\approx 0~(.075)^{\ddagger}$}  &  {$-.005~(.046)^{\ddagger}$} \\
\midrule
\multirow{4}{*}{.015} &\multirow{2}{*}{Ours} & Up    &  .963 (.004)   & .207 (.043)  &  .248 (.042) \\
&                                            & Down    &  .971 (.004)   &  {$.231~(.039)^{\dagger}$}  & {$.172~(.034)^{\dagger}$}  \\
&\multirow{2}{*}{FAAP}                       & Up    &  .975 (.002)   &  .160 (.030) &  .204 (.026) \\
&                                            & Down    &  .979 (.005)  & {$-.067~(.079)^{\ddagger}$} & {$.107~(.073)^{\ddagger}$} \\
\bottomrule
\end{tabular}
\end{table}

To yield a large fairness improvement, a large size of $\delta_{\bX}$ is needed. However, the processed image $\btiX$ under large $\delta_{\bX}$ may look distorted and not realistic, because the constraint $\Delta_{\bX}\leq \delta_{\bX}$ does not necessarily preserve semantic features of real face images. As a remedy, one can add a variational loss, such as $f$-divergence or Wasserstein distance between $\bX$ and $\btiX$, to the main objective. Figure~\ref{fig:celeba_eo2} in SM~\ref{supp:variational loss} shows that this variant can produce improved images, but the main text displays the results without using the variational loss for the sake of consistency with the established theories.

\section{Discussion}

While the proposed framework appeals to remarkable advantages such as the control of fairness and utility in the downstream stage, it opens interesting directions to further improvement and extensions. First, our framework requires converting the test data through $G_{\bX}^*$ to align the shifted domain with the training data when evaluating $\tih_k^*$ or $\tih^*$ on the test data. From the perspective of domain adaptation, this conversion process is an effort to alleviate the covariate shift. This adaptation step for the test data may complicate a policy or procedure of distributing data from the centralized manager to downstream users in real applications, where accessing $G_{\bX}^*$ is limited for some reasons. Versatile ideas in domain adaptation would tackle such issues.

Our framework can be extended to other domains, such as text classification \cite{li:etal:22} with a large language model (LLM). We set aside this application for future research since training LLMs immediately following Algorithm~\ref{alg} is computationally too expensive. Therefore, more technical variations, e.g., modifying network structure or foundational optimization for lighter computation, would be required, which is worthy of an independent study from our perspective. In this case as well, $\Delta_{\bX}$ or $\Delta_Y$ should be carefully chosen such that the transformed text is grammatically correct and sounds semantically meaningful.

\bibliographystyle{IEEEtranN}
\bibliography{sample}

\ifincludesupplementary
    \renewcommand{\thesection}{S\arabic{section}}
    \renewcommand{\theequation}{S\arabic{equation}}
    \renewcommand{\thefigure}{S\arabic{figure}}
    \renewcommand{\thetable}{S\arabic{table}}

    
\clearpage
\onecolumn 

\setlength{\parindent}{0pt} 
\setlength{\parskip}{1em}   

\begin{center}
\Large
    \textbf{\large Supplementary Material of ``Task-tailored Pre-processing: Fair Supervised Learning"} 
\end{center}

\section{Further Discussion}

\subsection{Triangle inequality of the HGR correlation}
\label{supp:triangle}
\begin{lemma}
\label{lem:triangle}
Let $d(S_1,S_2) = \sqrt{2-2\rho(S_1,S_2)}$. Suppose $S_1$ and $S_2$ are regular and have the finite mean square contingency. Then  $d(S_1,S_2)\leq d(S_1,S_3)+d(S_2,S_3)$ holds if the pivot variable $S_3$ is a binary random variable.
\end{lemma}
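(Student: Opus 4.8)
The plan is to recast the statement in the geometry of the Hilbert space $H$ of mean-zero, square-integrable random variables, equipped with $\langle U,V\rangle = \bE[UV]$ and $\|U\|=\sqrt{\bE[U^2]}$. For each $i$, let $\mathcal{L}_i\subset H$ be the closed subspace of mean-zero square-integrable functions of $S_i$. Definition~\ref{def:hgr} then reads $\rho(S_i,S_j)=\sup\{\langle U,V\rangle: U\in\mathcal{L}_i,\ V\in\mathcal{L}_j,\ \|U\|=\|V\|=1\}$ (restricting to unit vectors absorbs the variance normalization, and negating $U$ shows the supremum is nonnegative). For unit vectors one has $\|U-V\|^2 = 2-2\langle U,V\rangle$, so $d(S_i,S_j)=\sqrt{2-2\rho(S_i,S_j)}$ is precisely the minimal $L^2$ distance between the unit spheres of $\mathcal{L}_i$ and $\mathcal{L}_j$. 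In this language $d$ is a chordal distance between subspaces, which need not obey a triangle inequality in general; the binary hypothesis on $S_3$ is exactly what rescues it.

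First I would use the binary assumption to note that $\mathcal{L}_3$ is one-dimensional: a non-degenerate binary $S_3$ admits a two-dimensional space of functions, and the mean-zero constraint cuts this to a single line, spanned by a unit vector $w$ that is unique up to sign. Consequently, for $i=1,2$, the supremum defining $\rho(S_i,S_3)$ is taken against the single direction $w$, giving $\rho(S_i,S_3)=\sup_{U\in\mathcal{L}_i,\,\|U\|=1}|\langle U,w\rangle| = \|P_i w\|$, where $P_i$ denotes orthogonal projection onto the closed subspace $\mathcal{L}_i$ (which exists since $\mathcal{L}_i$ is closed).

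Next I would introduce the normalized projections $u_i := P_i w/\|P_i w\|\in\mathcal{L}_i$ (when $P_iw\neq 0$; if $P_iw=0$, then $\rho(S_i,S_3)=0$ and any unit vector of $\mathcal{L}_i$ serves as $u_i$). Two computations then do all the work. On one hand, $\langle w,u_i\rangle=\|P_iw\|=\rho(S_i,S_3)$, so that $\|w-u_i\|=\sqrt{2-2\rho(S_i,S_3)}=d(S_i,S_3)$, which turns each pivot distance into a genuine $L^2$ distance between the fixed vector $w$ and the vector $u_i$. On the other hand, since $u_1\in\mathcal{L}_1$ and $u_2\in\mathcal{L}_2$ are unit vectors, they are admissible in the supremum defining $\rho(S_1,S_2)$, whence $\rho(S_1,S_2)\geq \langle u_1,u_2\rangle$ and therefore $d(S_1,S_2)=\sqrt{2-2\rho(S_1,S_2)}\leq \sqrt{2-2\langle u_1,u_2\rangle}=\|u_1-u_2\|$.

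Finally I would close the argument with the ordinary triangle inequality in $H$, namely $\|u_1-u_2\|\leq \|u_1-w\|+\|w-u_2\|$, which combined with the previous displays yields $d(S_1,S_2)\leq \|u_1-u_2\|\leq d(S_1,S_3)+d(S_2,S_3)$. The one-dimensionality of $\mathcal{L}_3$ is the crux: it collapses the two pivot suprema onto a \emph{common} vector $w$, so the three relevant unit vectors $w,u_1,u_2$ live together and the plain Hilbert-space triangle inequality applies; without it the two pivot optimizers could point in incompatible directions and no such reduction is available. The main technical care is therefore reserved for (i) justifying $\dim\mathcal{L}_3=1$ from the non-degeneracy of $S_3$ and (ii) the degenerate case $P_iw=0$, while the regularity and finite mean-square-contingency hypotheses guarantee that the $\mathcal{L}_i$ are well-defined closed subspaces of $L^2$ and that $\rho$ is realized as the stated projection norm.
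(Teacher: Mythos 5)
Your proof is correct, and it takes a genuinely different route from the paper's. The paper first invokes R\'enyi's existence theorem for maximal correlation functions — this is precisely where the regularity and finite mean-square-contingency hypotheses enter — then introduces the sign-invariant distance $d'(U,V)=\min\{\|U-V\|,\|U+V\|\}$ on unit vectors of $L^2$, cites an external result (Chen et al., 2023) for the triangle inequality of $d'$ with an arbitrary pivot, and finally uses binaryness to argue that the pivot's optimal function is unique up to sign, so the two pivot optimizers (one from each pair) coincide and the $d'$-inequality transfers to $d$. You instead exploit binaryness at the level of subspace geometry: $\mathcal{L}_3$ is a line spanned by a unit vector $w$, the pivot correlations are realized exactly as projection norms $\rho(S_i,S_3)=\|P_iw\|$ with explicit maximizers $u_i=P_iw/\|P_iw\|$, and the conclusion follows from the ordinary Hilbert-space triangle inequality on $u_1,w,u_2$ plus the admissibility bound $\rho(S_1,S_2)\geq\langle u_1,u_2\rangle$. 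Your route buys two things: it is self-contained (no appeal to the projective-metric triangle inequality), and it never needs attainment of the supremum defining $\rho(S_1,S_2)$ — only the pivot suprema are attained, and those are attained automatically by orthogonal projection onto the closed subspaces $\mathcal{L}_i$. Consequently the regularity and finite mean-square-contingency hypotheses are superfluous in your argument; your closing attribution of their role is the only inaccuracy, since closedness of $\mathcal{L}_i$ and the projection identity hold without them, whereas in the paper's proof they are genuinely needed to get R\'enyi's maximal correlation functions for all three pairs. What the paper's route buys in exchange is a reusable intermediate fact (the $d'$ triangle inequality for arbitrary pivots), which makes transparent that the only general obstruction is a mismatch between the pivot's two optimizers. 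Your treatment of the degenerate case $P_iw=0$ is also sound: then $\langle w,u_i\rangle=0$, so $\|w-u_i\|=\sqrt{2}=d(S_i,S_3)$ for any unit $u_i\in\mathcal{L}_i$, and the same chain of inequalities goes through.
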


$S_1$ and $S_2$ are called regular if the joint distribution of $S_1$ and $S_2$ is absolutely continuous w.r.t. the product distribution of them. The mean square contingency is defined as the square roof of $\int (d\mu_{S_1,S_2}(s_1,s_2) / d\mu_{S_1\otimes S_2}(s_1,s_2)-1)^2 d \mu_{S_1\otimes S_2}(s_1,s_2)$ where $\mu_{S_1,S_2}$ and $\mu_{S_1\otimes S_2}$ are the joint and the product distribution respectively. The condition for the pivot variable could be restrictive in the analysis. It remains nontrivial to extend the result for a generic pivot quantity. Throughout this work, we assume any paired variables in $d$ are regular and have a finite mean square contingency.

\subsection{Extension to separation}
\label{supp:sep}

To begin, we define the conditional HGR correlation as follows. 
\begin{definition}[Conditional HGR correlation]
\label{def:cond_hgr}
For random vectors $\bX$, $\bA$, and a random variable $Y$ the conditional HGR correlation is defined as $\rho_{Y}(\bX,\bA) = \sup_{f,g} \bE[f(\bX,Y)g(\bA,Y)]$ where the supremum is taken over all measurable functions $f$ and $g$ satisfying $\bE[f(\bX,Y)]=\bE[g(\bX,Y)]=0$ and $\bE[f(\bX,Y)^2]=\bE[g(\bA,Y)^2]=1$. 
\end{definition}
As \cite{yu:18} shows, $\rho_{Y}(\bX,\bA) = \sup_{y;P(y)>0} \rho_{Y=y}(\bX,\bA)$ if $Y$ is discrete, where $\rho_{Y=y}(\bX,\bA)=\sup_{f,g} \bE[f(\bX,y)g(\bA,y)]$ in the definition. Clearly, if $\rho_{Y=y}(\tih^*(\btiX),\bA)=0$ for every slice of $Y=y$, then $\tih^*(\btiX)$ is conditionally independent to $\bA$ given $Y$. Thus we replace $\rho$ by $\rho_Y$ in optimization \eqref{opt:min}, to pursue better separation. When applying this conditional HGR correlation, the conditional variable is $Y$, not the modified $\tiY$ since our inferential interest is for the unseen true label of test data. Upon definition, Proposition~\ref{prop:motiv} can be restated under $\rho_Y$, and we define a similar  distance-like metric $d_{Y=y}(S_1,S_2)=\sqrt{2-2\rho_{Y=y}(S_1,S_2)}$ for generic random quantities $S_1,S_2$ with the triangle inequality. Then the next corollary reformulates the upper bound of Theorem~\ref{thm:uplwh0} and Corollary~\ref{cor:uplwh0} for $\tih^*$ under the separation penalty $\rho_Y$.
\begin{corollary}
\label{cor:sep_dominating}
    For any $\delta_{\bX},\delta_Y,\lambda_F\geq 0$, if the output of $h$ is binary, it follows that $\rho_Y(h^*(\bX),\bA) -  \rho_Y(\tih^*(\btiX),\bA) \leq \sup_{y} d_{Y=y}(\tih^*(\btiX),\tiY) + d_{Y=y}(h^*(\bX),\tiY)$. For $\delta_Y=0$, the lower bound for separation is the same as the lower bound in Theorem~\ref{thm:uplwh0}. Similarly, the lower bound for $\delta_Y \geq 0$ resembles the one in Corollary~\ref{cor:uplwh0} by expressing it by $d_{Y=y}$.
\end{corollary}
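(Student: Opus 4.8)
The plan is to exploit the slice decomposition of the conditional HGR correlation, which for discrete $Y$ reads $\rho_Y(S,\bA)=\sup_{y:P(y)>0}\rho_{Y=y}(S,\bA)$ by \cite{yu:18}, and to observe that on each slice $\{Y=y\}$ the conditional correlation $\rho_{Y=y}$ is nothing but an ordinary HGR correlation for the conditional law. Consequently, the arguments behind Theorem~\ref{thm:uplwh0} and Corollary~\ref{cor:uplwh0} transfer essentially verbatim after replacing $\rho$ and $d$ by $\rho_{Y=y}$ and $d_{Y=y}$, and the suprema over $y$ are collected only at the very end.

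For the upper bound, I would first reduce the difference of conditional correlations to a per-slice difference using the elementary inequality $\sup_y f(y)-\sup_y g(y)\le \sup_y\bigl(f(y)-g(y)\bigr)$, which gives
\begin{align*}
\rho_Y(h^*(\bX),\bA)-\rho_Y(\tih^*(\btiX),\bA)\le \sup_{y}\bigl[\rho_{Y=y}(h^*(\bX),\bA)-\rho_{Y=y}(\tih^*(\btiX),\bA)\bigr].
\end{align*}
Then, for each fixed $y$, I would rerun the binary-output upper-bound argument of Corollary~\ref{cor:uplwh0} inside the slice: because the model output is binary, it serves as a legitimate binary pivot for the restated triangle inequality of $d_{Y=y}$, so that $\rho_{Y=y}(h^*(\bX),\bA)-\rho_{Y=y}(\tih^*(\btiX),\bA)\le d_{Y=y}(\tih^*(\btiX),\tiY)+d_{Y=y}(h^*(\bX),\tiY)$. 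Taking $\sup_y$ on both sides yields the claimed bound.

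For the lower bound with $\delta_Y=0$, I note that the original derivation never used any property special to the unconditional $\rho$; it only invoked optimality of $G^*$. Since the separation variant of \eqref{opt:min} simply substitutes $\rho_Y$ for $\rho$, comparing the objective value at $G^*$ with that at the identity map (for which $\tiY=Y$ and $\barh^*=h^*$) and using $\bE[l(Y,h^*(\bX))]=e(\bA)+\check{\epsilon}$ reproduces the identical bound $\rho_Y(h^*(\bX),\bA)-\rho_Y(\tih^*(\btiX),\bA)\ge \lambda_F^{-1}\bigl(\bE[l(Y,\tih^*(\btiX))]-e(\bA)-\check{\epsilon}\bigr)$. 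For $\delta_Y\ge 0$, the lower bound is the slice-wise reverse triangle inequality: with the binary output as pivot inside $\{Y=y\}$, one has $d_{Y=y}(\tih^*(\btiX),\bA)\ge d_{Y=y}(\tiY,\bA)-d_{Y=y}(\tiY,\tih^*(\btiX))$, which after subtracting $d_{Y=y}(h^*(\bX),\bA)$ mirrors the lower bound of Corollary~\ref{cor:uplwh0} expressed through $d_{Y=y}$.

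The main obstacle is not the algebra but the measure-theoretic legitimacy of working slice by slice: I must ensure that each conditional law on $\{Y=y\}$ remains regular with finite mean square contingency, so that $\rho_{Y=y}$, $d_{Y=y}$, and the triangle inequality of Lemma~\ref{lem:triangle} are all valid conditionally, and that the binary model output stays an admissible pivot after conditioning. This is clean when $Y$ is discrete with $P(Y=y)>0$; the genuinely delicate case is continuous $Y$, where the finite maximum over slices must be replaced by an essential supremum and the decomposition of \cite{yu:18} invoked in its general form. I would therefore carry out the discrete argument in full and handle the continuous case separately under the standing regularity assumption stated after Lemma~\ref{lem:triangle}.
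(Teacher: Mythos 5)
Your proposal is correct and follows essentially the same route as the paper: the slice decomposition $\rho_Y=\sup_y\rho_{Y=y}$ from \cite{yu:18}, the elementary bound $\sup_y f(y)-\sup_y g(y)\le\sup_y(f(y)-g(y))$, a per-slice rerun of the binary-output argument of Theorem~\ref{thm:uplwh0}, and the observation that the lower bounds (optimality of $G^*$ versus the identity map for $\delta_Y=0$, reverse triangle inequality for $\delta_Y\ge 0$) transfer verbatim once $\rho$ is replaced by $\rho_{Y=y}$. One small imprecision worth fixing: in the per-slice triangle step $d_{Y=y}(h^*(\bX),\tih^*(\btiX))\le d_{Y=y}(h^*(\bX),\tiY)+d_{Y=y}(\tiY,\tih^*(\btiX))$ the binary pivot required by Lemma~\ref{lem:triangle} is $\tiY$, not the model output; the binary output assumption is instead what makes the maximal-correlation functions affine in the Cauchy--Schwarz step that yields $\rho_{Y=y}(h^*(\bX),\bA)-\rho_{Y=y}(\tih^*(\btiX),\bA)\le d_{Y=y}(h^*(\bX),\tih^*(\btiX))$.
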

Hence, the overall argument describing the utility-fairness relationship of $\tih^*$ and in the downstream model $\tih_k^*$ under independence is extended to cover the notion of separation.

\subsection{Dual approximation for separation}
\label{supp:sep-dual}

Similarly, the dual approximation can be found for the conditional HGR from the observation $\rho_{Y=y}(\barh^*(\bbarX),\bA)^2\leq \chi^2(P_{\barh^*(\bbarX),\bA|Y=y},P_{\barh^*(\bbarX)|Y=y}\otimes P_{\bA|Y=y})-1$ for all $y$. Therefore, $||\chi^2(P_{\barh^*(\bbarX),\bA|\cdot},P_{\barh^*(\bbarX)|\cdot}\times P_{\bA|\cdot})||_{\infty,{\cal Y}}$ measures the extent of separation where $||\cdot||_{\infty}$ is the uniform norm on $\calY$. However, for consistency with the independence notion, we borrow the idea of \cite{roma:etal:20} that $\barh^*(\bbarX)\perp \bA|Y$ if and only if $(\barh^*(\bbarX),\bA,Y) \overset{d}{=} (\barh^*(\bbarX),\bA',Y)$ where $\bA'$ independently follows $A|Y$. As a result, the degree of separation is characterized by  $\chi^2(P_{\barh^*(\bbarX),\bA,Y},P_{\barh^*(\bbarX),\bA',Y})$, which is equivalent to $\bE_Y[\chi^2(P_{\barh^*(\bbarX),\bA|Y},P_{\barh^*(\bbarX)|Y}\otimes P_{\bA|Y})]$. For these reasons, we define $R_V(\barh^*(\bbarX),\bA,Y)=\bE_{P_{\barh^*(\bbarX),\bA,Y}}[V(\barh^*(\bbarX),\bA,Y)] - \bE_{P_{\barh^*(\bbarX),Y}\otimes P_{\bA|Y}}[f^*(V(\barh^*(\bbarX),\bA,Y))]$ for separation.

\subsection{Further simulation results}

\subsubsection{Prediction and fairness scores evaluated on {\bf D}}
\label{supp:tab:score_output} 

Table~\ref{tab:basic_score} reports the scores of MLP and RF only because of the limited page. Refer to Table~\ref{tab:performance-metrics-all} to see all other results.  
\begin{table}[ht!]
\centering
\caption{Prediction/fairness scores of all models on ${\bf D}$}
\label{tab:performance-metrics-all}
\resizebox{\textwidth}{!}{%
\begin{tabular}{cccccccccccc}
\toprule
 & & \multicolumn{2}{c}{AUC} & \multicolumn{2}{c}{SP} & \multicolumn{2}{c}{KS-SP} & \multicolumn{2}{c}{EO} & \multicolumn{2}{c}{KS-EO} \\
\cmidrule(lr){3-4} \cmidrule(lr){5-6} \cmidrule(lr){7-8} \cmidrule(lr){9-10} \cmidrule(lr){11-12}
Dataset & Method & $A_{\text{bin}}$ & ${\bf A}_{\text{dis}}$ & $A_{\text{bin}}$ & ${\bf A}_{\text{dis}}$ & $A_{\text{bin}}$ & ${\bf A}_{\text{dis}}$ & $A_{\text{bin}}$& ${\bf A}_{\text{dis}}$ & $A_{\text{bin}}$ & ${\bf A}_{\text{dis}}$ \\
\midrule
\multirow{6}{*}{\texttt{A}} 
 & HGB & 0.927 (0.003) & 0.926 (0.003) & 0.846 (0.015) & 1.597 (0.033) & 0.347 (0.007) & 0.691 (0.008) & 0.345 (0.030) & 0.694 (0.099) & 0.382 (0.025) & 0.897 (0.072) \\
 & KNN & 0.861 (0.001) & 0.861 (0.002) & 0.901 (0.026) & 1.651 (0.027) & 0.310 (0.005) & 0.585 (0.021) & 0.269 (0.051) & 0.522 (0.092) & 0.352 (0.015) & 0.713 (0.054) \\
 & MLP & 0.879 (0.002) & 0.879 (0.006) & 0.832 (0.012) & 1.566 (0.042) & 0.349 (0.009) & 0.697 (0.011) & 0.446 (0.028) & 0.809 (0.104) & 0.429 (0.005) & 0.945 (0.026) \\
 & RF & 0.891 (0.001) & 0.891 (0.002) & 0.759 (0.040) & 1.451 (0.042) & 0.308 (0.007) & 0.610 (0.007) & 0.298 (0.025) & 0.614 (0.068) & 0.387 (0.024) & 0.808 (0.038) \\
 & SVC & 0.907 (0.003) & 0.907 (0.003) & 0.792 (0.031) & 1.550 (0.046) & 0.336 (0.008) & 0.659 (0.002) & 0.335 (0.018) & 0.684 (0.117) & 0.415 (0.025) & 0.884 (0.065) \\
 & Upstream & 0.908 (0.003) & 0.906 (0.003) & 0.848 (0.026) & 1.542 (0.028) & 0.365 (0.010) & 0.687 (0.018) & 0.374 (0.030) & 0.735 (0.101) & 0.418 (0.026) & 0.897 (0.044) \\
\midrule
\multirow{6}{*}{\texttt{E}} 
 & HGB & 0.889 (0.001) & 0.888 (0.001) & 0.704 (0.008) & 1.416 (0.009) & 0.392 (0.002) & 0.793 (0.004) & 0.582 (0.012) & 1.179 (0.011) & 0.749 (0.007) & 1.523 (0.018) \\
 & KNN & 0.849 (0.001) & 0.851 (0.001) & 0.672 (0.006) & 1.348 (0.017) & 0.334 (0.003) & 0.665 (0.007) & 0.550 (0.010) & 1.089 (0.022) & 0.438 (0.007) & 0.879 (0.020) \\
 & MLP & 0.878 (0.002) & 0.880 (0.001) & 0.654 (0.017) & 1.329 (0.033) & 0.367 (0.007) & 0.748 (0.022) & 0.532 (0.025) & 1.091 (0.058) & 0.521 (0.007) & 1.118 (0.052) \\
 & RF & 0.862 (0.001) & 0.865 (0.002) & 0.580 (0.010) & 1.228 (0.047) & 0.340 (0.003) & 0.698 (0.007) & 0.447 (0.011) & 0.963 (0.060) & 0.474 (0.014) & 0.976 (0.028) \\
 & SVC & 0.883 (0.002) & 0.884 (0.001) & 0.708 (0.008) & 1.451 (0.030) & 0.406 (0.004) & 0.823 (0.013) & 0.620 (0.012) & 1.283 (0.049) & 0.682 (0.033) & 1.453 (0.047) \\
 & Upstream & 0.886 (0.001) & 0.886 (0.001) & 0.700 (0.023) & 1.388 (0.030) & 0.387 (0.008) & 0.776 (0.015) & 0.580 (0.029) & 1.144 (0.040) & 0.599 (0.019) & 1.251 (0.043) \\
\midrule
\multirow{6}{*}{\texttt{P}} 
 & HGB & 0.760 (0.003) & 0.755 (0.003) & 1.147 (0.074) & 2.233 (0.183) & 0.550 (0.008) & 1.125 (0.017) & 1.062 (0.040) & 2.128 (0.081) & 0.959 (0.029) & 1.981 (0.055) \\
 & KNN & 0.702 (0.003) & 0.695 (0.003) & 2.331 (0.528) & 4.667 (1.072) & 0.413 (0.007) & 0.837 (0.018) & 1.486 (0.232) & 3.083 (0.534) & 0.650 (0.028) & 1.352 (0.051) \\
 & MLP & 0.725 (0.002) & 0.721 (0.005) & 1.073 (0.175) & 1.709 (0.108) & 0.483 (0.013) & 0.923 (0.046) & 0.918 (0.125) & 1.496 (0.100) & 0.784 (0.029) & 1.522 (0.093) \\
 & RF & 0.727 (0.002) & 0.717 (0.002) & 0.937 (0.022) & 1.964 (0.208) & 0.434 (0.012) & 0.877 (0.017) & 0.870 (0.035) & 1.782 (0.115) & 0.702 (0.033) & 1.438 (0.048) \\
 & SVC & 0.749 (0.003) & 0.745 (0.003) & 1.201 (0.071) & 2.437 (0.217) & 0.545 (0.009) & 1.125 (0.022) & 1.158 (0.038) & 2.409 (0.123) & 0.970 (0.032) & 2.057 (0.051) \\
 & Upstream & 0.748 (0.002) & 0.737 (0.007) & 1.050 (0.098) & 2.136 (0.233) & 0.498 (0.024) & 1.044 (0.068) & 0.932 (0.069) & 1.982 (0.235) & 0.832 (0.055) & 1.815 (0.148) \\
\bottomrule
\end{tabular}%
}
\end{table}

\subsubsection{Trade-off figures}
\label{supp:trade-off-curves} 

This section presents trade-off figures that are omitted in the main manuscript. See Figures~\ref{fig:sp-ks}, \ref{fig:eo}, \ref{fig:eo-ks}, \ref{fig:sp-ratio-mul}, \ref{fig:sp-ks-mul}, \ref{fig:eo-ratio-mul}, and \ref{fig:eo-ks-mul}. 
\begin{figure*}[h] 
    \centering
    \subfloat[Trade-off plots of Adult  with $\delta_{X}\in \{0.01,0.10,0.30\}$, $\delta_Y=0$, and $\lambda_F = 10$]{%
        \includegraphics[width=\linewidth, height=0.15\textheight, keepaspectratio]{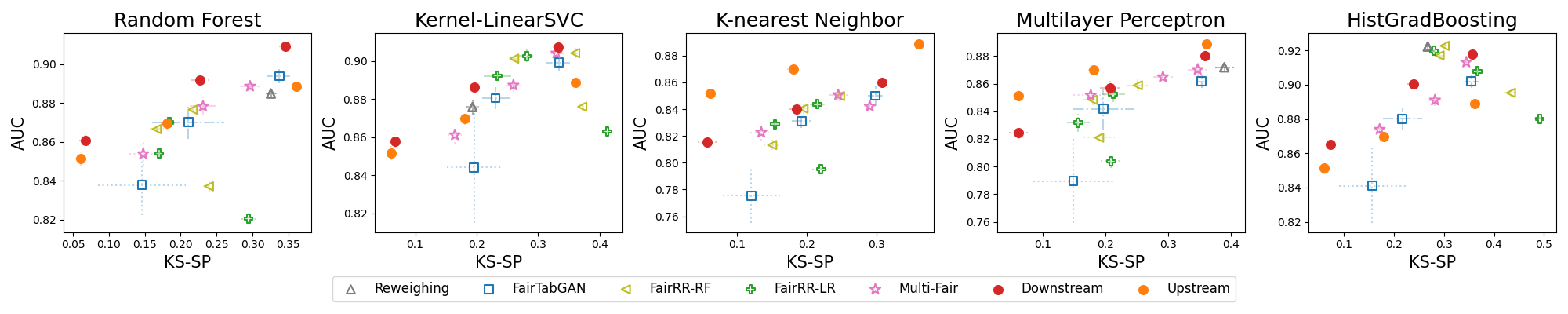}
    }\par
    \subfloat[Trade-off plots of ACSEmployment with $\delta_{X}\in \{0.01,0.05,0.10\}$,  $\delta_Y=0$, and $\lambda_F = 10$]{%
        \includegraphics[width=\linewidth, height=0.15\textheight, keepaspectratio]{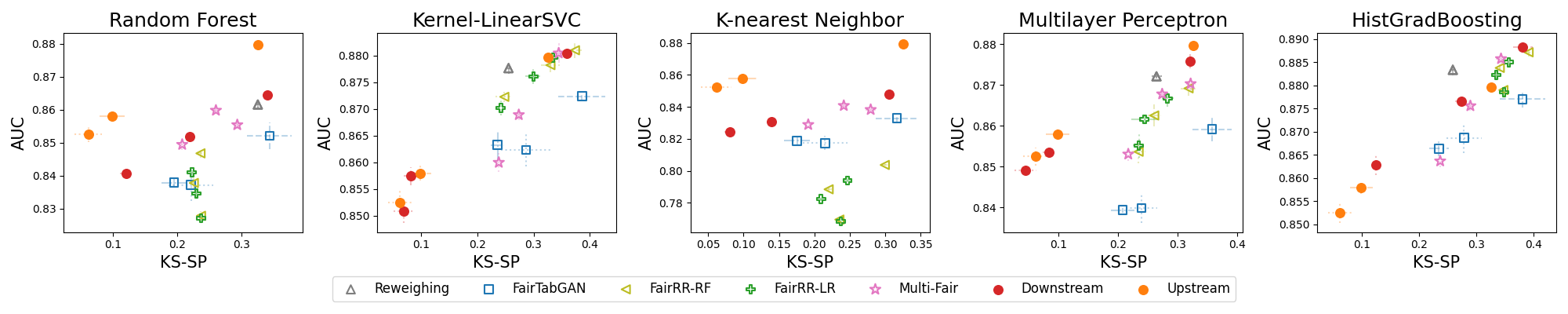}
    }\par
    \subfloat[Trade-off plots of ACSPublicCoverage with $\delta_{X}\in \{0.01,0.05,0.10\}$, $\delta_Y=0.1$, and $\lambda_F = 10$]{%
        \includegraphics[width=\linewidth, height=0.15\textheight, keepaspectratio]{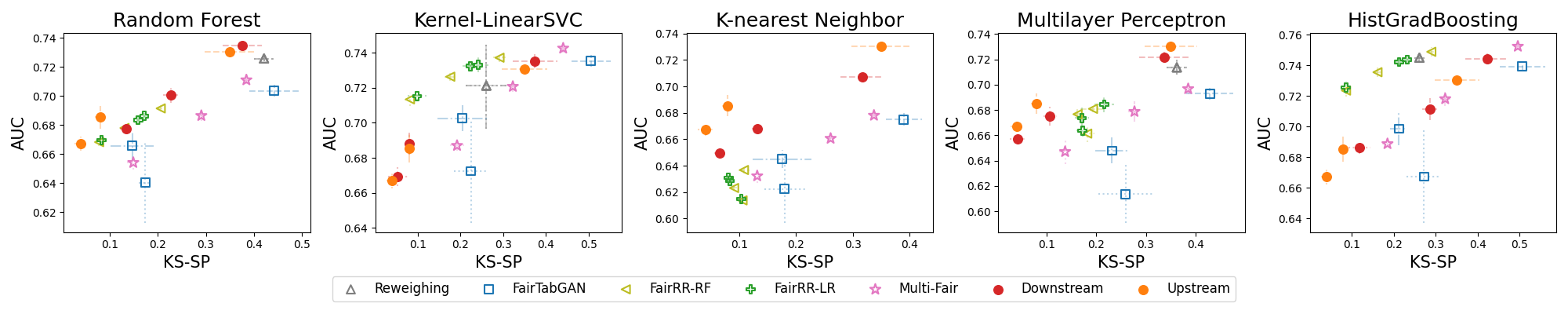}
    }\par
    \caption{Trade-off curves between AUC and KS-SP in handling $A_{\text{bin}}$: Refer to Figure~\ref{fig:sp} for details.}
    \label{fig:sp-ks}
\end{figure*}
\clearpage
\vspace*{\fill}
\begin{figure*}[h] 
    \centering
    \subfloat[Trade-off plots of Adult  with $\delta_{X}\in \{0.01,0.10,0.30\}$, $\delta_Y=0$, and $\lambda_F = 10$]{%
        \includegraphics[width=\linewidth, height=0.15\textheight, keepaspectratio]{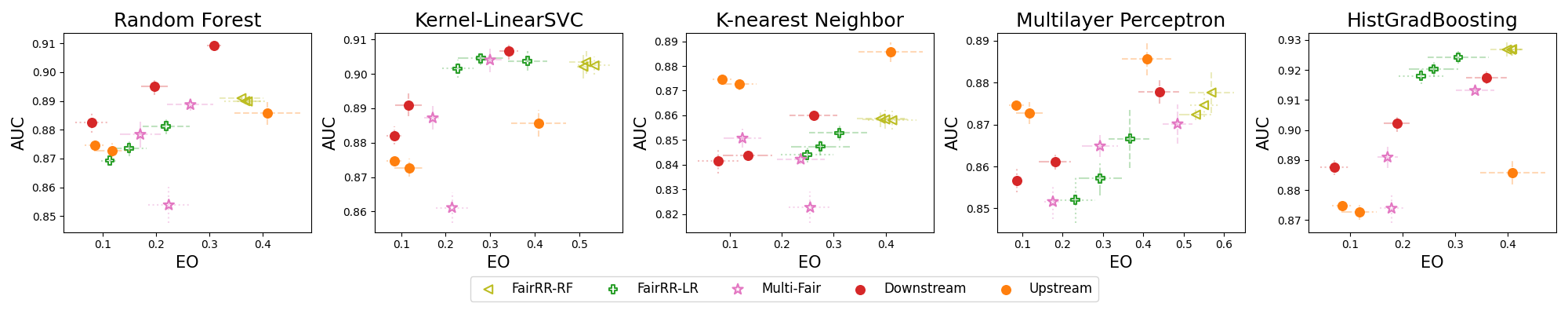}
    }\par
    \subfloat[Trade-off plots of ACSEmployment with $\delta_{X}\in \{0.01,0.025,0.05\}$,  $\delta_Y=0$, and $\lambda_F = 100$]{%
        \includegraphics[width=\linewidth, height=0.15\textheight, keepaspectratio]{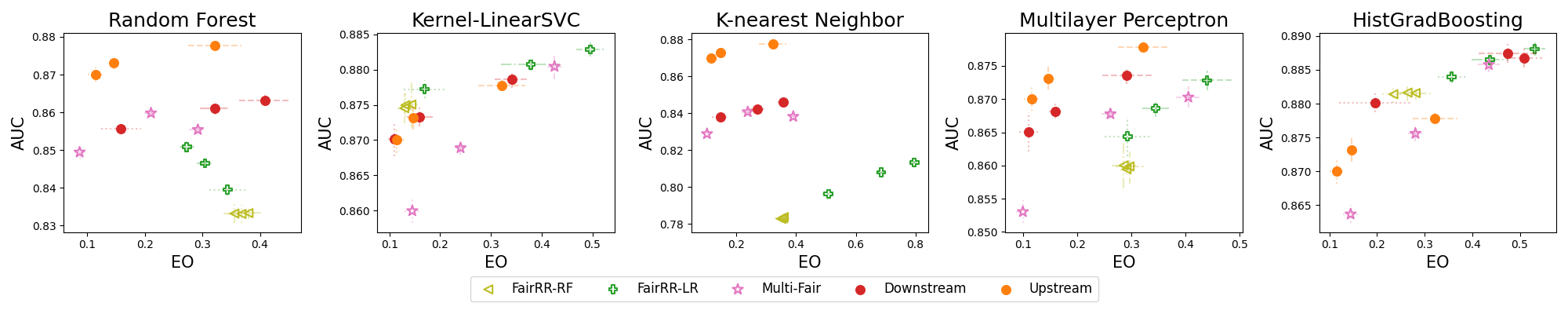}
    }\par
    \subfloat[Trade-off plots of ACSPublicCoverage with $\delta_{X}\in \{0.01,0.05,0.10\}$, $\delta_Y=0.05$, and $\lambda_F = 100$]{%
        \includegraphics[width=\linewidth, height=0.15\textheight, keepaspectratio]{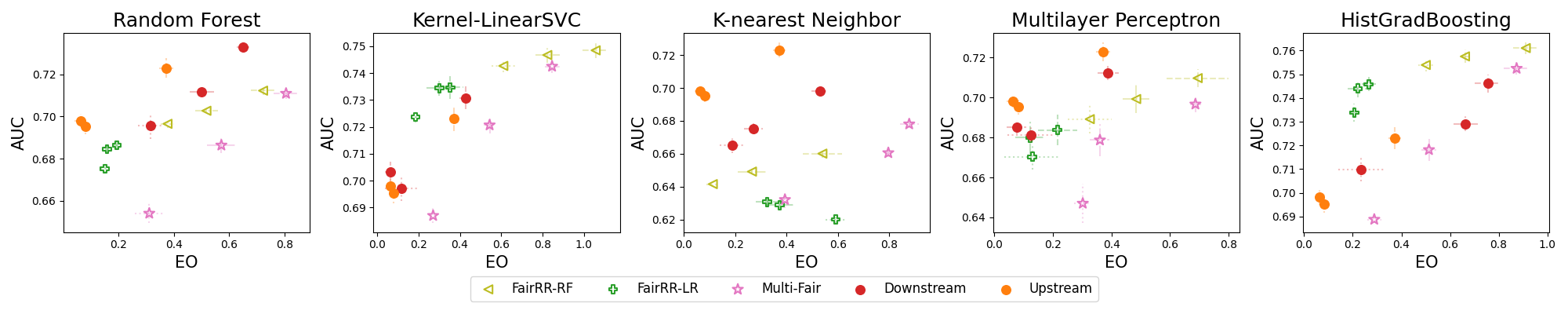}
    }\par
    \caption{Trade-off curves between AUC and EO in handling $A_{\text{bin}}$: Refer to Figure~\ref{fig:sp} for details.}
    \label{fig:eo}
\end{figure*}
\vspace*{\fill}
\clearpage
\vspace*{\fill}
\begin{figure*}[h] 
    \centering
    \subfloat[Trade-off plots of Adult  with $\delta_{X}\in \{0.01,0.10,0.30\}$, $\delta_Y=0$, and $\lambda_F = 10$]{%
        \includegraphics[width=\linewidth, height=0.15\textheight, keepaspectratio]{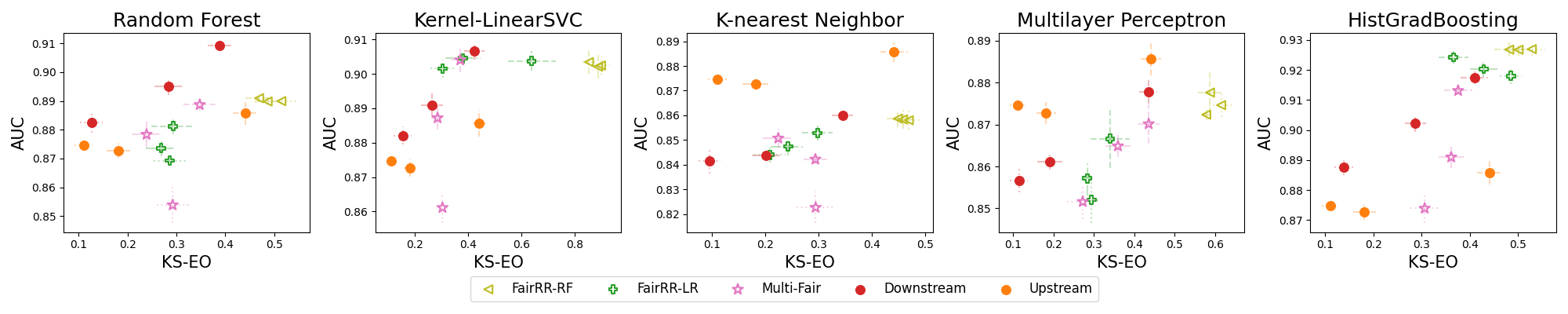}
    }\par
    \subfloat[Trade-off plots of ACSEmployment with $\delta_{X}\in \{0.01,0.025,0.05\}$,  $\delta_Y=0$, and $\lambda_F = 100$]{%
        \includegraphics[width=\linewidth, height=0.15\textheight, keepaspectratio]{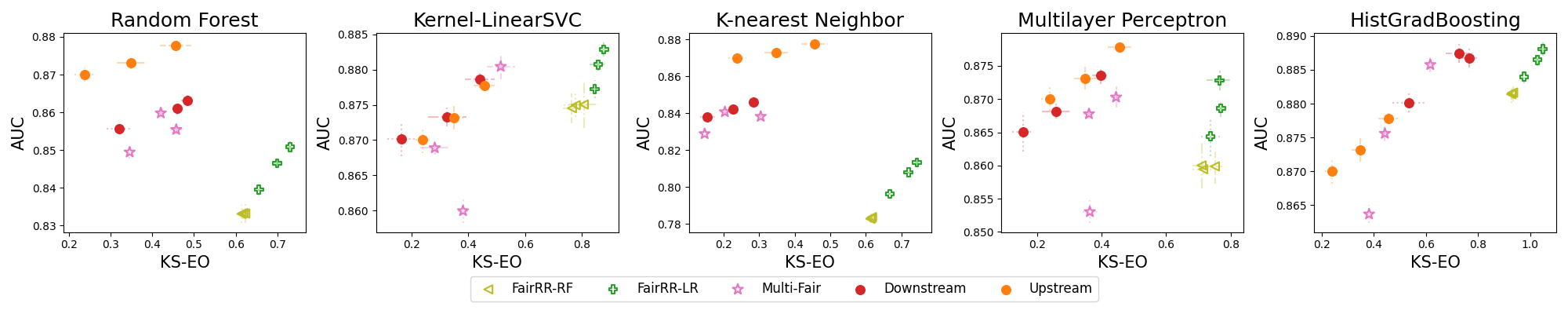}
    }\par
    \subfloat[Trade-off plots of ACSPublicCoverage with $\delta_{X}\in \{0.01,0.05,0.10\}$, $\delta_Y=0.05$, and $\lambda_F = 100$]{%
        \includegraphics[width=\linewidth, height=0.15\textheight, keepaspectratio]{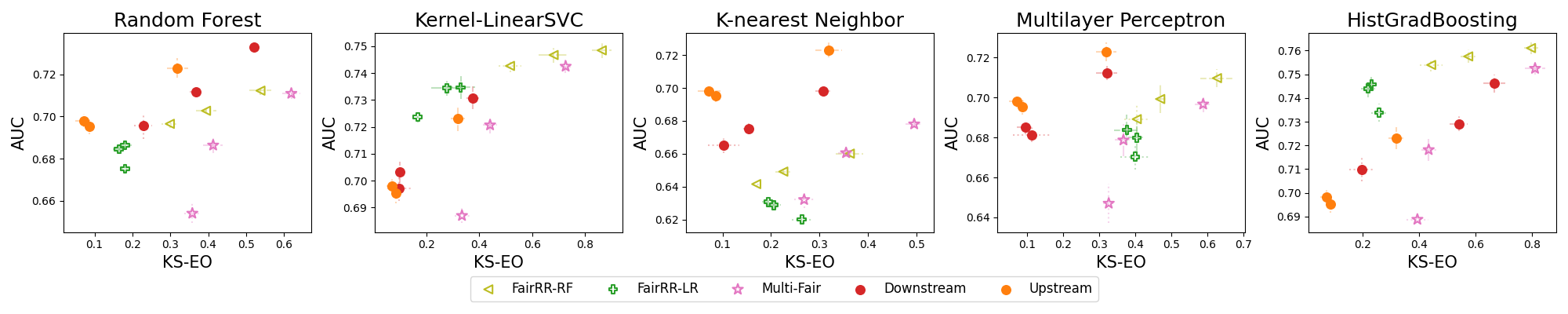}
    }\par
    \caption{Trade-off curves between AUC and KS-EO in handling $A_{\text{bin}}$: Refer to Figure~\ref{fig:sp} for details.}
    \label{fig:eo-ks}
\end{figure*}
\vspace*{\fill}
\clearpage
\vspace*{\fill}
\begin{figure*}[h] 
    \centering
    \subfloat[Trade-off plots of Adult  with $\delta_{X}\in \{0.01,0.10,0.30\}$, $\delta_Y=0$, and $\lambda_F = 10$]{%
        \includegraphics[width=\linewidth]{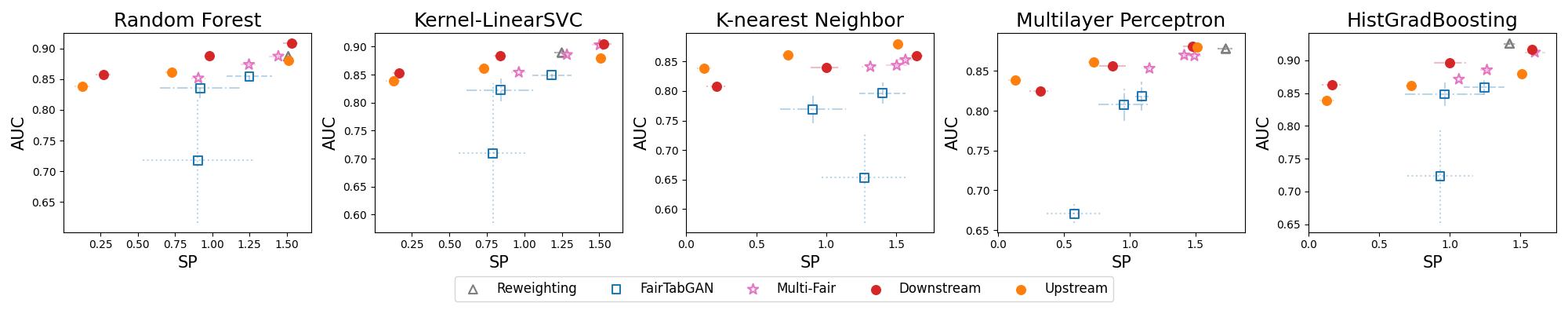}
    }\par
    \subfloat[Trade-off plots of ACSEmployment with $\delta_{X}\in \{0.01,0.05,0.10\}$,  $\delta_Y=0$, and $\lambda_F = 10$]{%
        \includegraphics[width=\linewidth]{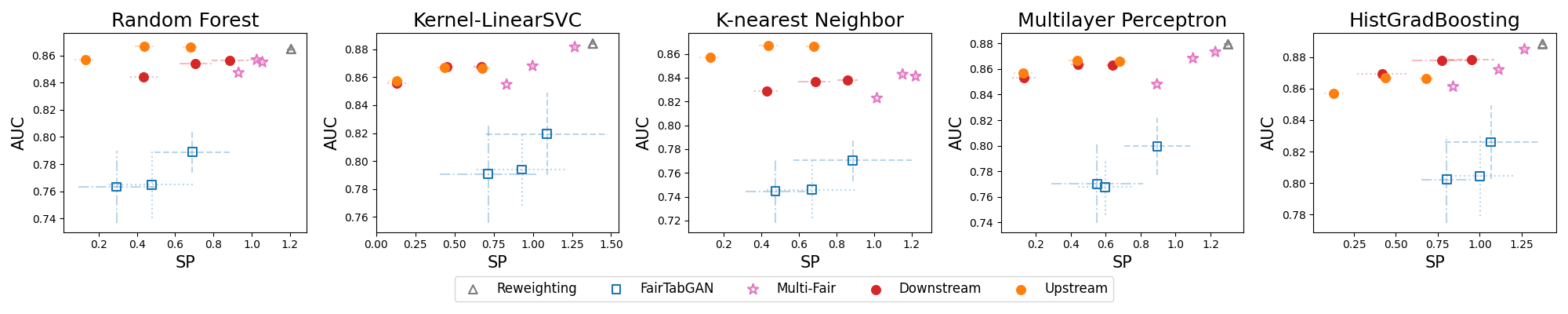}
    }\par
    \subfloat[Trade-off plots of ACSPublicCoverage with $\delta_{X}\in \{0.01,0.05,0.10\}$, $\delta_Y=0.1$, and $\lambda_F = 10$]{%
        \includegraphics[width=\linewidth]{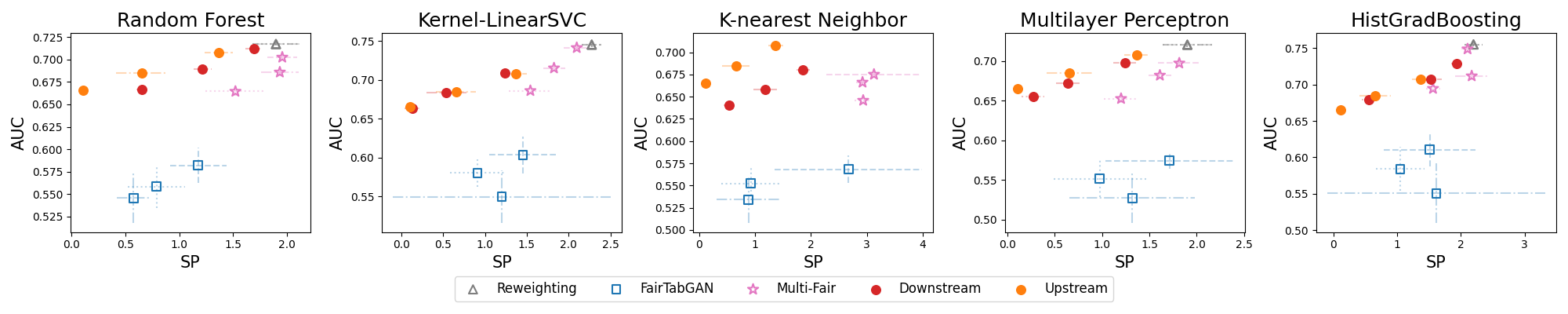}
    }\par
    \caption{Trade-off curves between AUC and SP in handling ${\bf A}_{\text{dis}}$: Refer to Figure~\ref{fig:sp} for details.}
    \label{fig:sp-ratio-mul}
\end{figure*}
\vspace*{\fill}
\clearpage
\vspace*{\fill}
\begin{figure*}[h] 
    \centering
    \subfloat[Trade-off plots of Adult  with $\delta_{X}\in \{0.01,0.10,0.30\}$, $\delta_Y=0$, and $\lambda_F = 10$]{%
        \includegraphics[width=\linewidth]{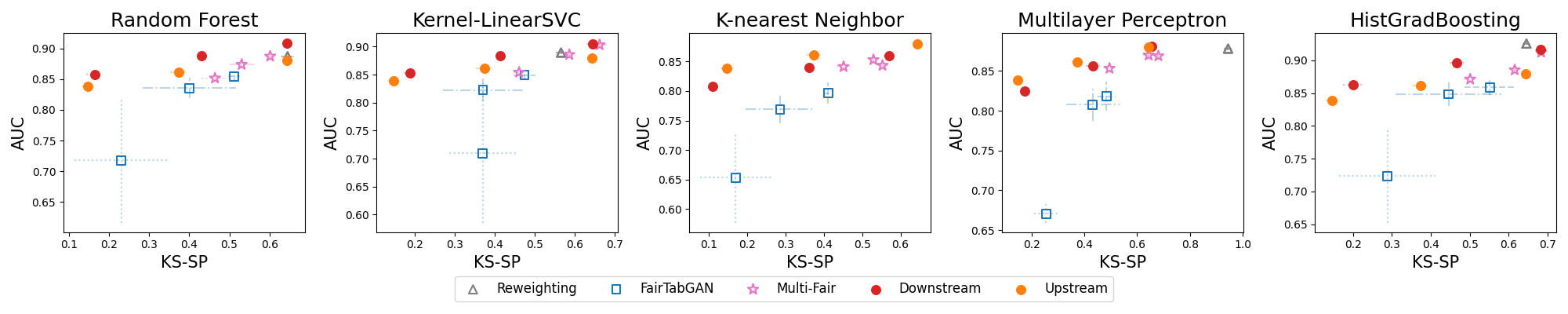}
    }\par
    \subfloat[Trade-off plots of ACSEmployment with $\delta_{X}\in \{0.01,0.05,0.10\}$,  $\delta_Y=0$, and $\lambda_F = 10$]{%
        \includegraphics[width=\linewidth]{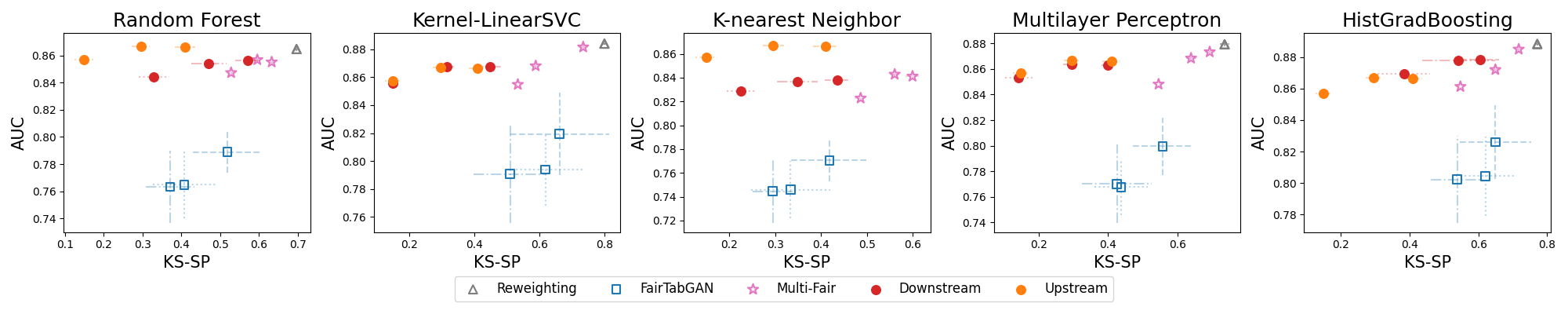}
    }\par
    \subfloat[Trade-off plots of ACSPublicCoverage with $\delta_{X}\in \{0.01,0.05,0.10\}$, $\delta_Y=0.1$, and $\lambda_F = 10$]{%
        \includegraphics[width=\linewidth]{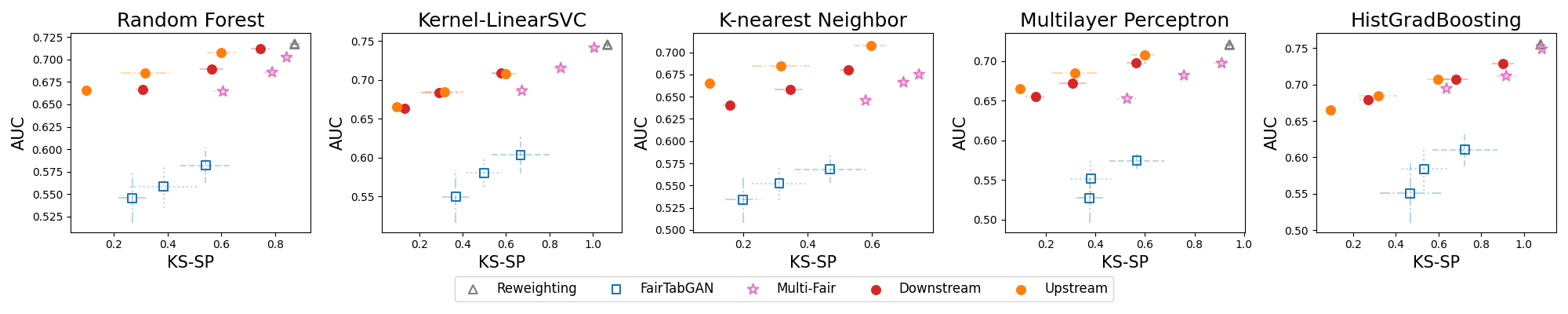}
    }\par
    \caption{Trade-off curves between AUC and KS-SP in handling ${\bf A}_{\text{dis}}$: Refer to Figure~\ref{fig:sp} for details.}
    \label{fig:sp-ks-mul}
\end{figure*}
\vspace*{\fill}
\clearpage
\vspace*{\fill}
\begin{figure*}[h] 
    \centering
    \subfloat[Trade-off plots of Adult  with $\delta_{X}\in \{0.01,0.10,0.30\}$, $\delta_Y=0$, and $\lambda_F = 10$]{%
        \includegraphics[width=\linewidth]{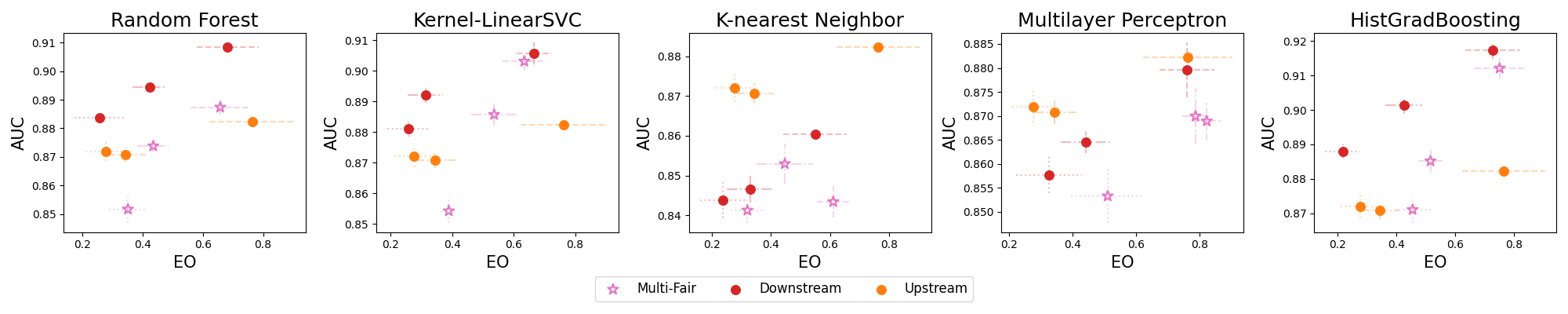}
    }\par
    \subfloat[Trade-off plots of ACSEmployment with $\delta_{X}\in \{0.01,0.025,0.05\}$,  $\delta_Y=0$, and $\lambda_F = 1$]{%
        \includegraphics[width=\linewidth]{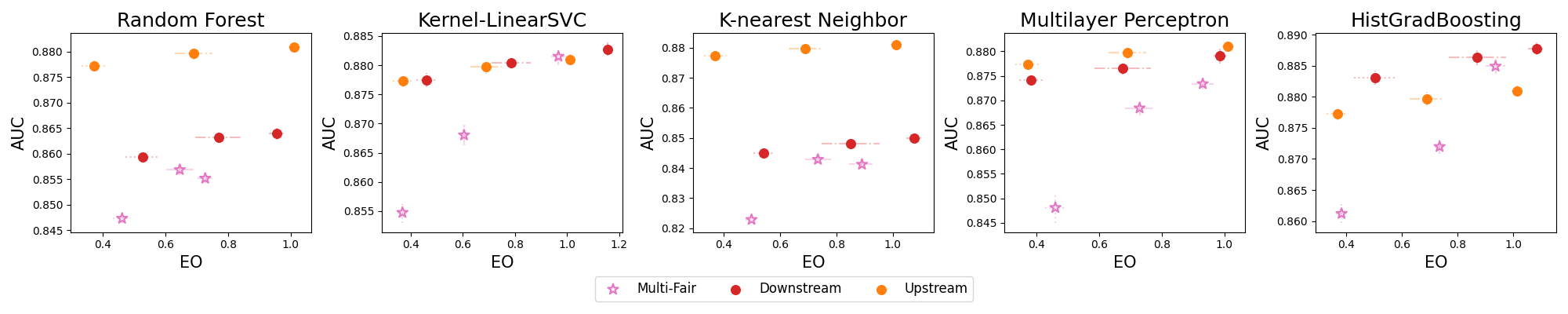}
    }\par
    \subfloat[Trade-off plots of ACSPublicCoverage with $\delta_{X}\in \{0.01,0.05,0.10\}$, $\delta_Y=0.05$, and $\lambda_F = 100$]{%
        \includegraphics[width=\linewidth]{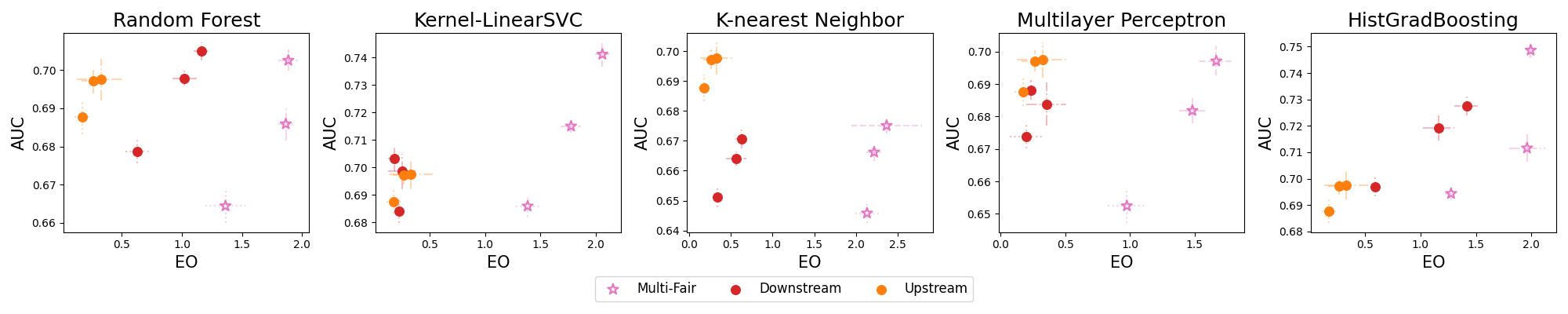}
    }\par
    \caption{Trade-off curves between AUC and EO in handling ${\bf A}_{\text{dis}}$: Refer to Figure~\ref{fig:sp} for details.}
    \label{fig:eo-ratio-mul}
\end{figure*}
\vspace*{\fill}
\clearpage
\vspace*{\fill}
\begin{figure*}[h] 
    \centering
    \subfloat[Trade-off plots of Adult  with $\delta_{X}\in \{0.01,0.10,0.30\}$, $\delta_Y=0$, and $\lambda_F = 10$]{%
        \includegraphics[width=\linewidth]{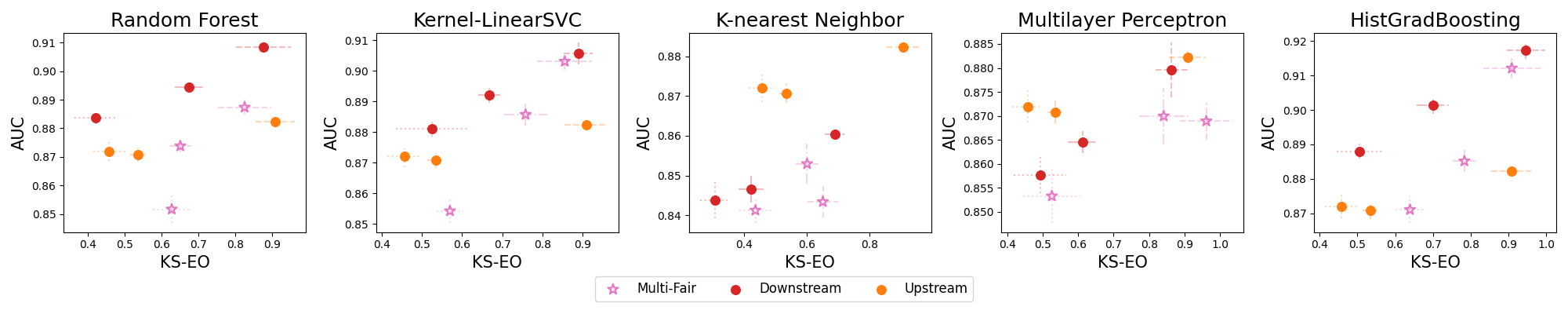}
    }\par
    \subfloat[Trade-off plots of ACSEmployment with $\delta_{X}\in \{0.01,0.025,0.05\}$,  $\delta_Y=0$, and $\lambda_F = 1$]{%
        \includegraphics[width=\linewidth]{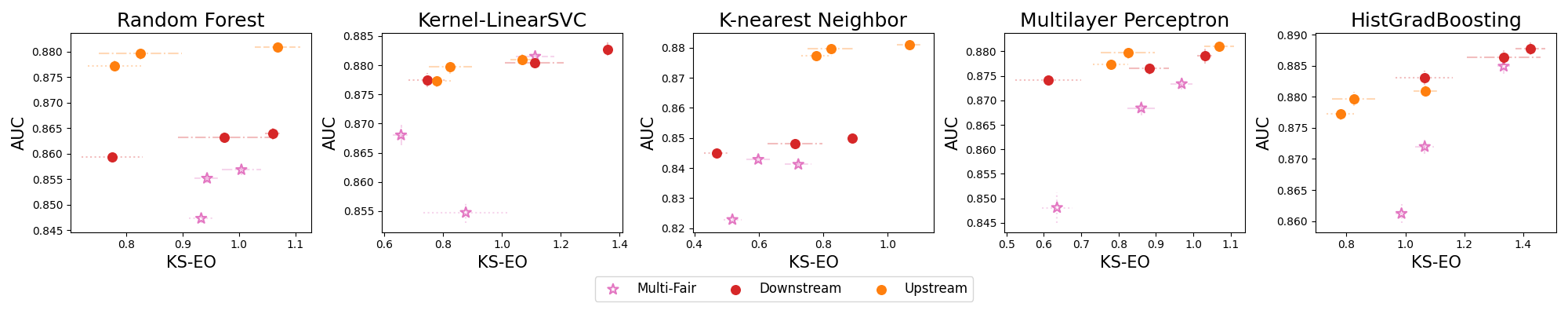}
    }\par
    \subfloat[Trade-off plots of ACSPublicCoverage with $\delta_{X}\in \{0.01,0.05,0.10\}$, $\delta_Y=0.05$, and $\lambda_F = 100$]{%
        \includegraphics[width=\linewidth]{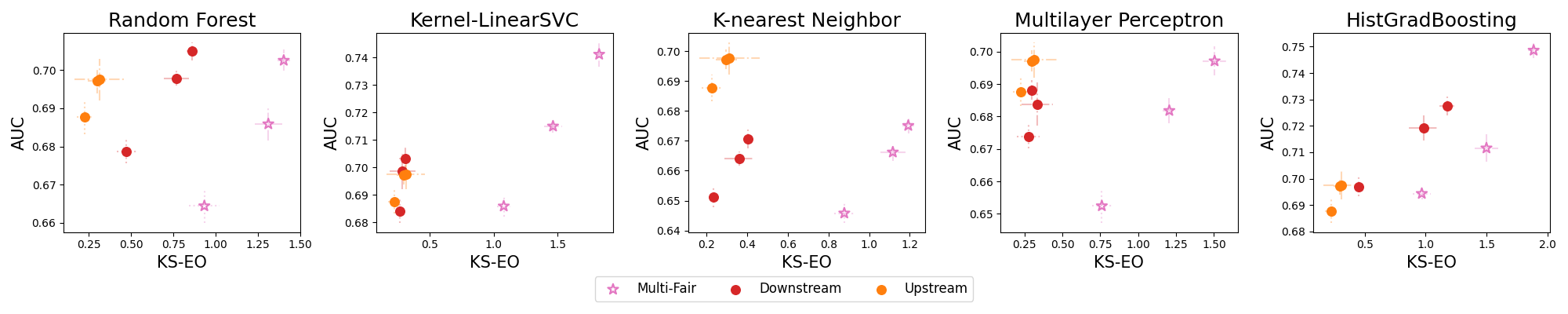}
    }\par
    \caption{Trade-off curves between AUC and KS-EO in handling ${\bf A}_{\text{dis}}$: Refer to Figure~\ref{fig:sp} for details.}
    \label{fig:eo-ks-mul}
\end{figure*}
\vspace*{\fill}
\newpage

\subsubsection{Hypervolume scores for the KS metric} This section presents the barplot for the hypervolume scores when Pareto frontiers are estimated based on AUC and KS-type fairness scores. See Figure~\ref{fig:hv-indicator-ks}. 
\begin{figure*}[h]
    \centering
    \includegraphics[width=1.0\linewidth]{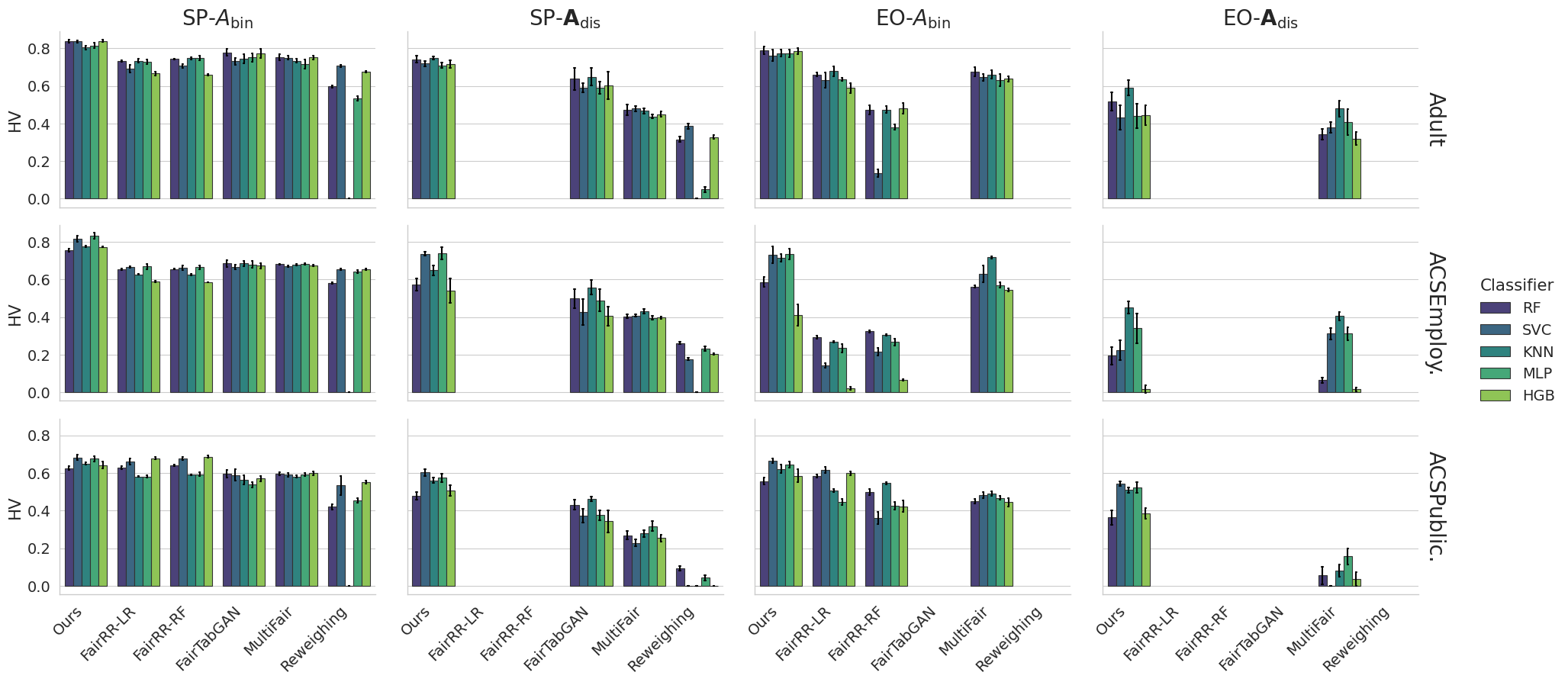}
    \caption{Comparison of HV scores from (AUC, KS-SP/EO): Refer to Figure~\ref{fig:hv-indicator} for details.}
    \label{fig:hv-indicator-ks}
\end{figure*}

\subsection{Further discussion about consistency scores}
\label{supp:const-scores}

Figure~\ref{fig:four_figures_subfloat} displays the consistency scores for all combinations of fairness metrics and sensitive attributes. While ours generally shows slightly better consistency compared to FairRR, some peculiar points in ours are far worse. For instance, the KS-based scores of Budget 2 in Figures~\ref{fig:consistency-sp-bin} and \ref{fig:consistency-eo-bin} in ACSEmployment and ACSPublicCoverage are substantially higher than ones in Budgets 1 and 3. From our observation, the inclusion of the non-Lipschitz boosting model (HGB) brings about substantially higher consistency scores. Let's recall that our downstream analysis, e.g., Theorem~\ref{thm:uplwhk} and Proposition~\ref{prop:coherent}, requires that the downstream models be Lipschitz so that variations among the downstream models can be controlled through the upstream model. As shown in  Figure~\ref{fig:consistency-comparison-hgb}, the consistency scores tend to be improved when HGB is omitted. For instance, the KS-based scores of Budget 2 in ACSEmployment and ACSPublicCoverage are significantly reduced. This short simulation study emphasizes that the Lipschitzness of our theoretical argument potentially plays a crucial role in bounding downstream performance.  
\begin{figure}[h]
    \centering
    
    \subfloat[Statistical parity for $A_{\text{bin}}$. \label{fig:consistency-sp-bin}]{%
        \includegraphics[width=0.48\linewidth]{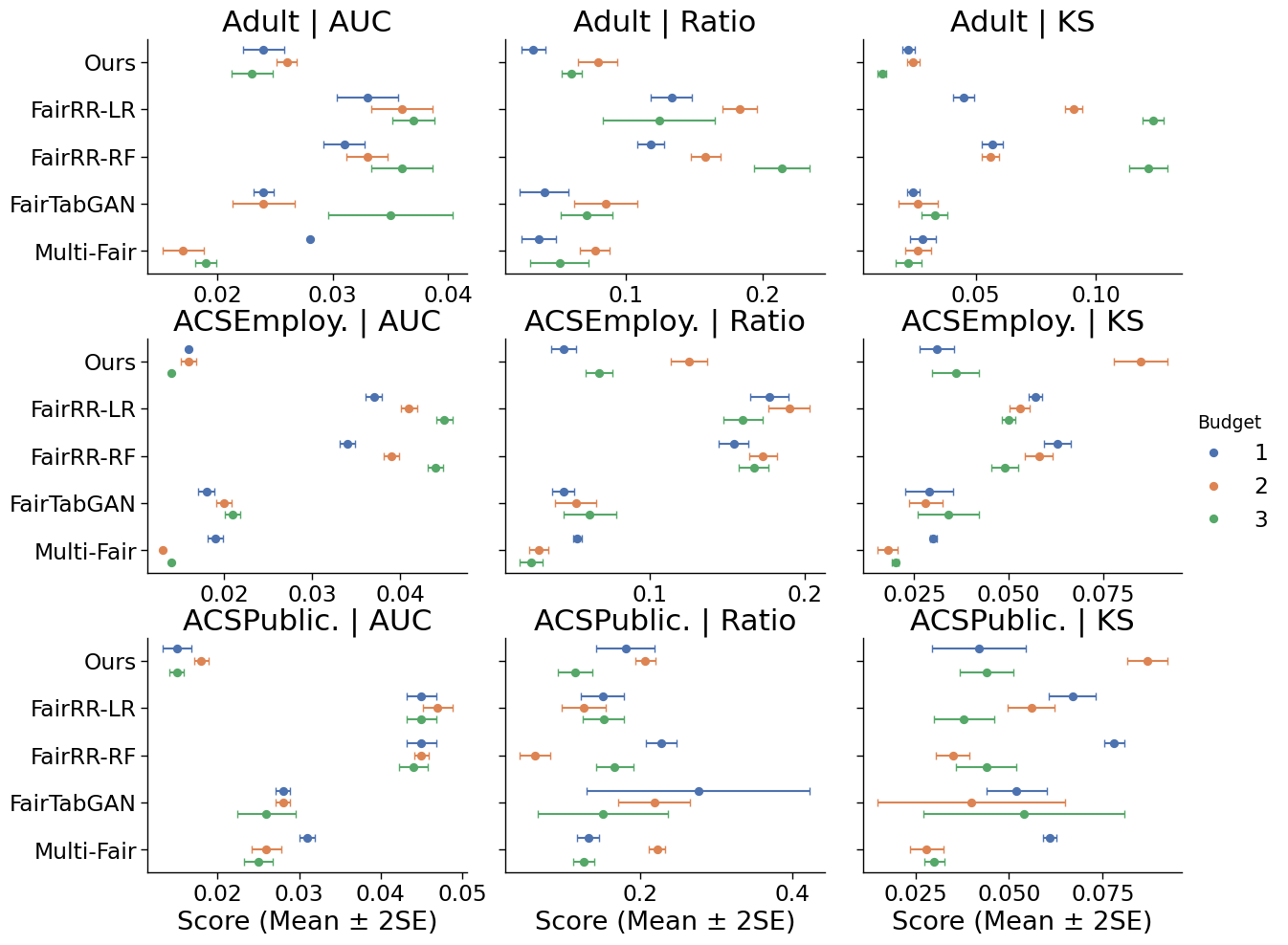}%
    }
    \hfill
    \subfloat[Statistical parity for ${\bf A}_{\text{dis}}$\label{fig:consistency-sp-mul}]{%
        \includegraphics[width=0.48\linewidth]{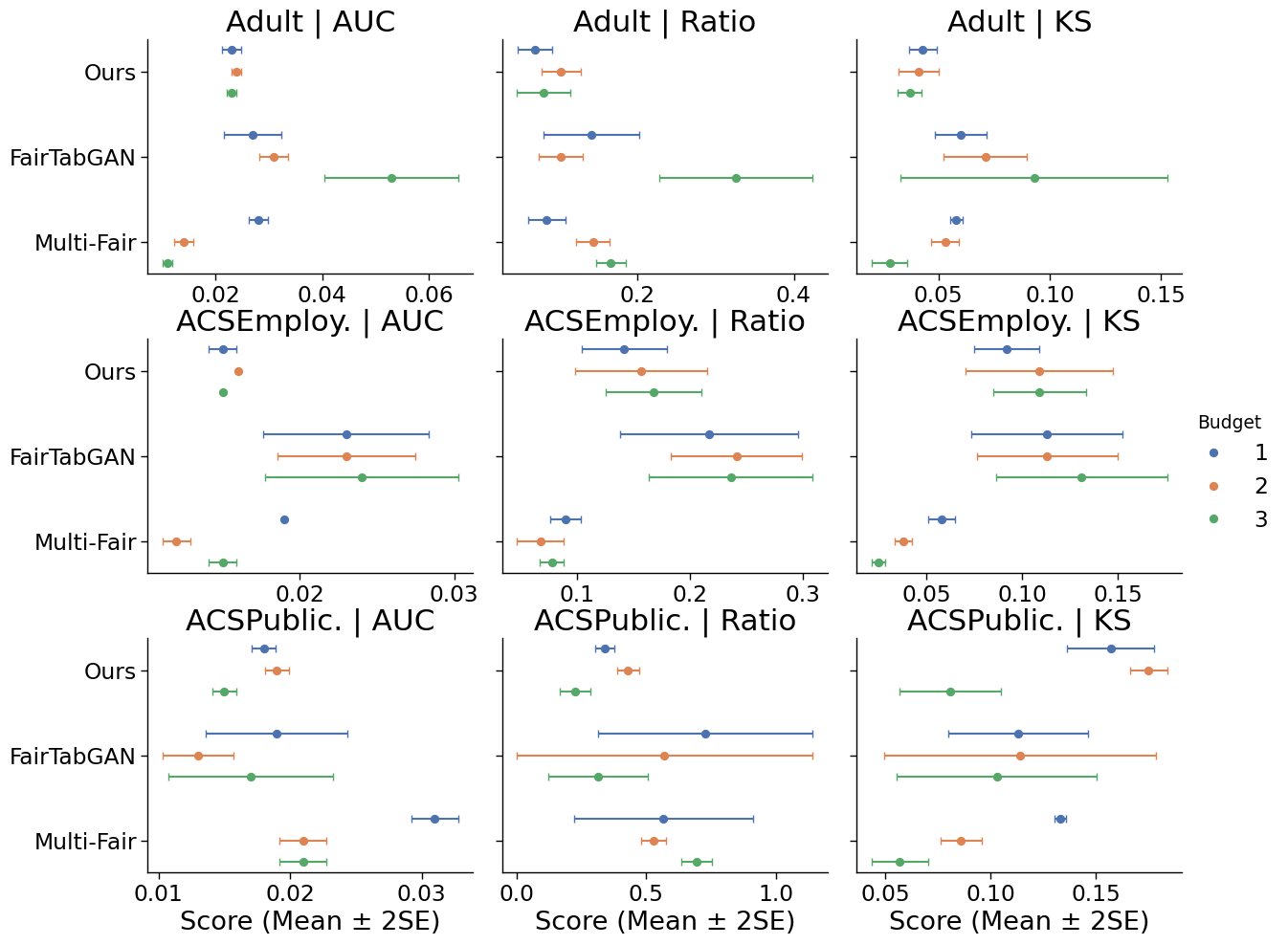}%
    }
    
    \vspace{1em} 
    
    \subfloat[Equalized odds for $A_{\text{bin}}$\label{fig:consistency-eo-bin}]{%
        \includegraphics[width=0.48\linewidth]{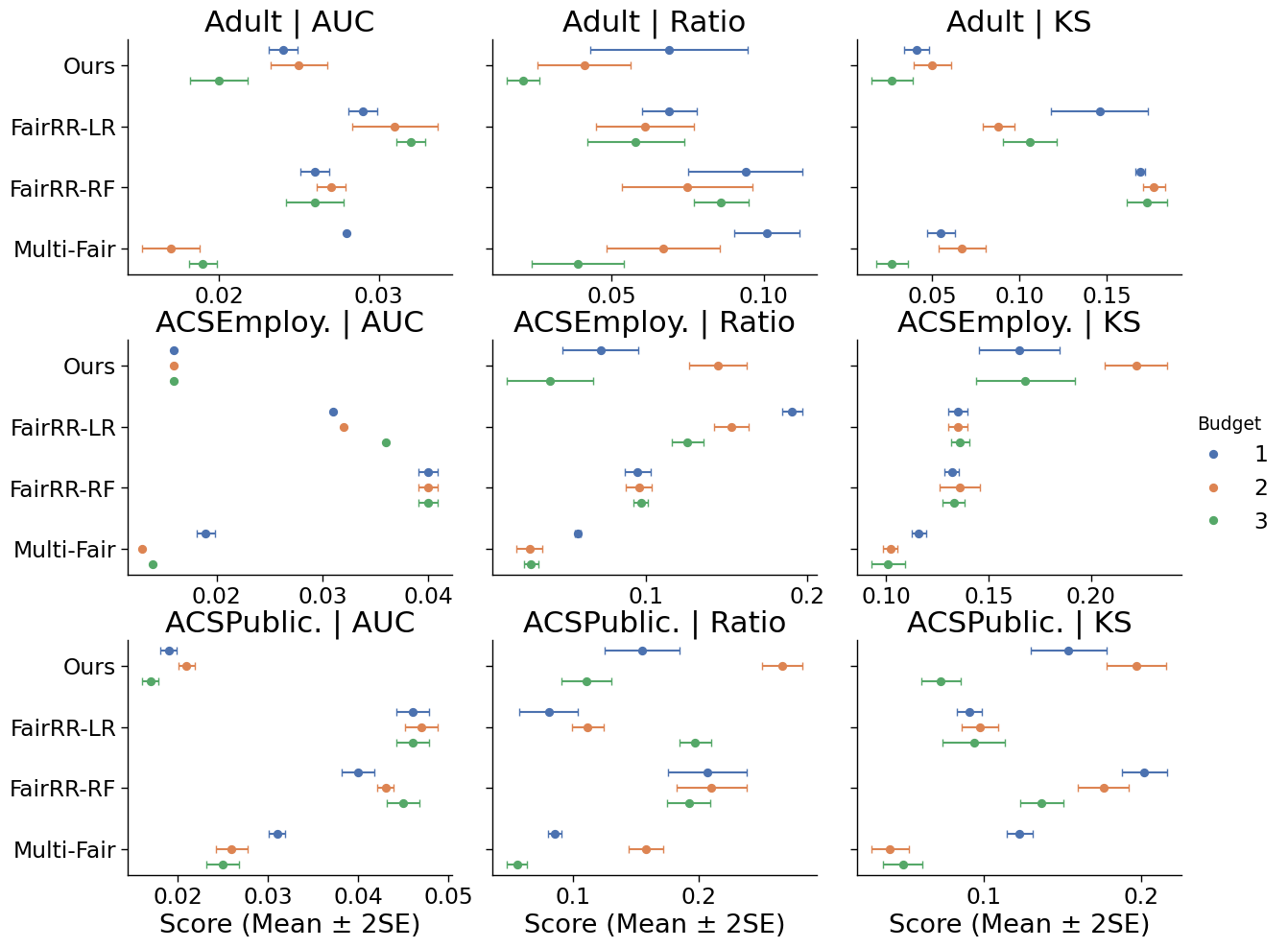}%
    }
    \hfill
    \subfloat[Equalized odds for ${\bf A}_{\text{dis}}$\label{fig:consistency-eo-mul}]{%
        \includegraphics[width=0.48\linewidth]{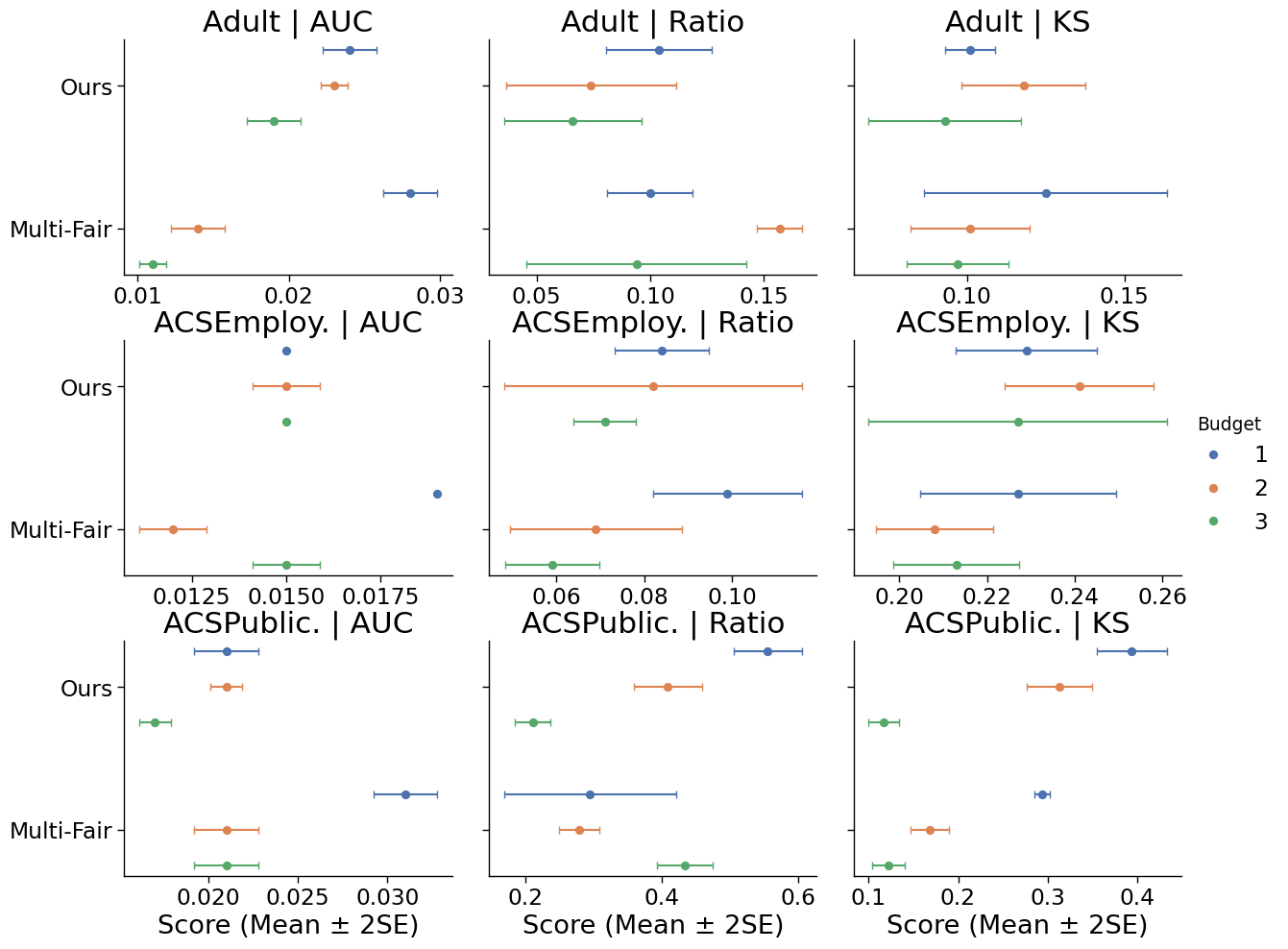}%
    }
    
    \caption{Dots and bars stand for the averages and 2$\times$standard errors. See Section~\ref{simul:quali-quant-analysis} for the definition of the consistency score.}
    \label{fig:four_figures_subfloat}
\end{figure}

\begin{figure}[h]
    \centering
    \subfloat[Consistency scores are based on the case of $A_{\text{bin}}$. \label{fig:consistency-hgb}]{%
        \includegraphics[width=0.9\linewidth]{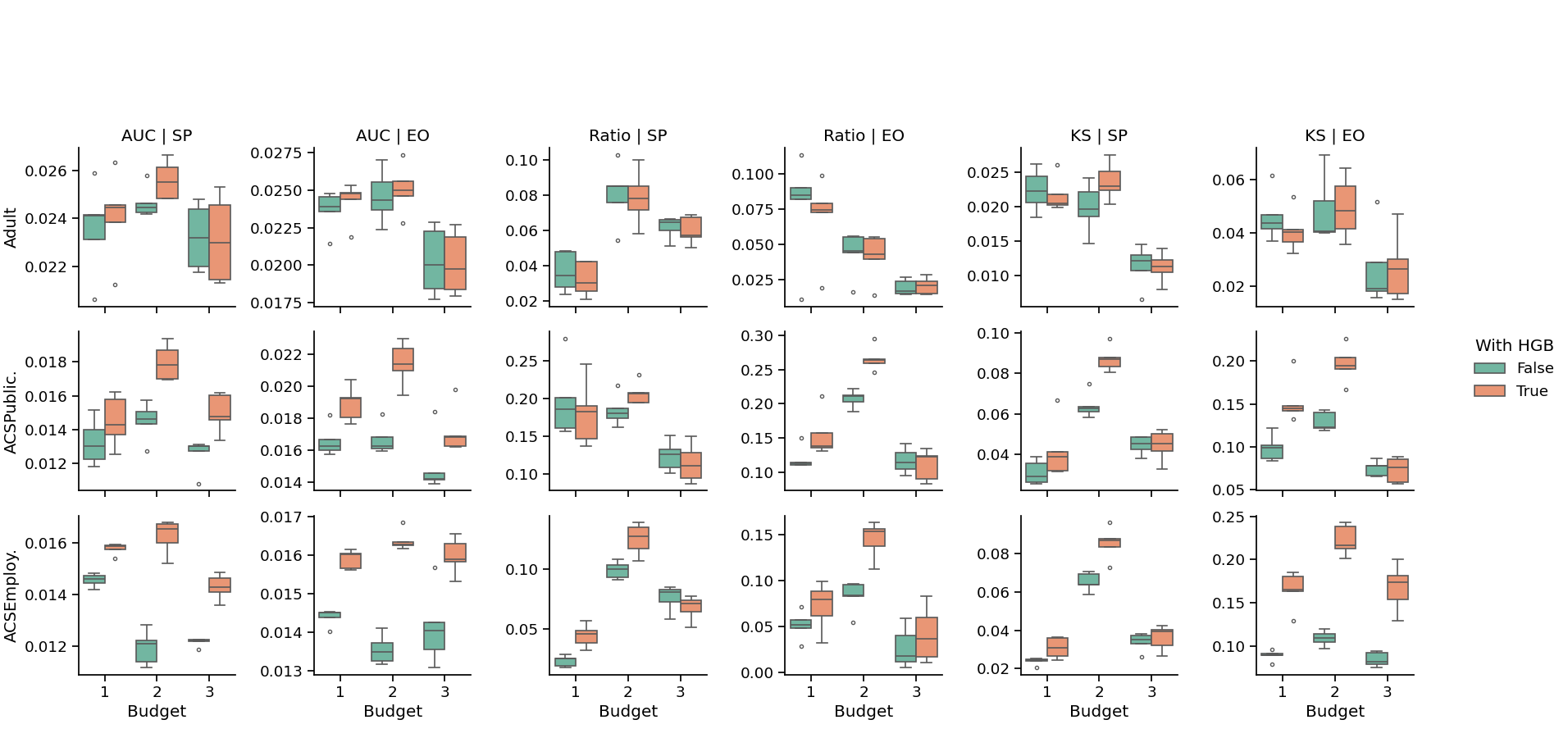}%
    }
    
    \vspace{1em} 
    
    \subfloat[Consistency scores are based on the case of ${\bf A}_{\text{dis}}$. \label{fig:consistency-hgb-mul}]{%
        \includegraphics[width=0.9\linewidth]{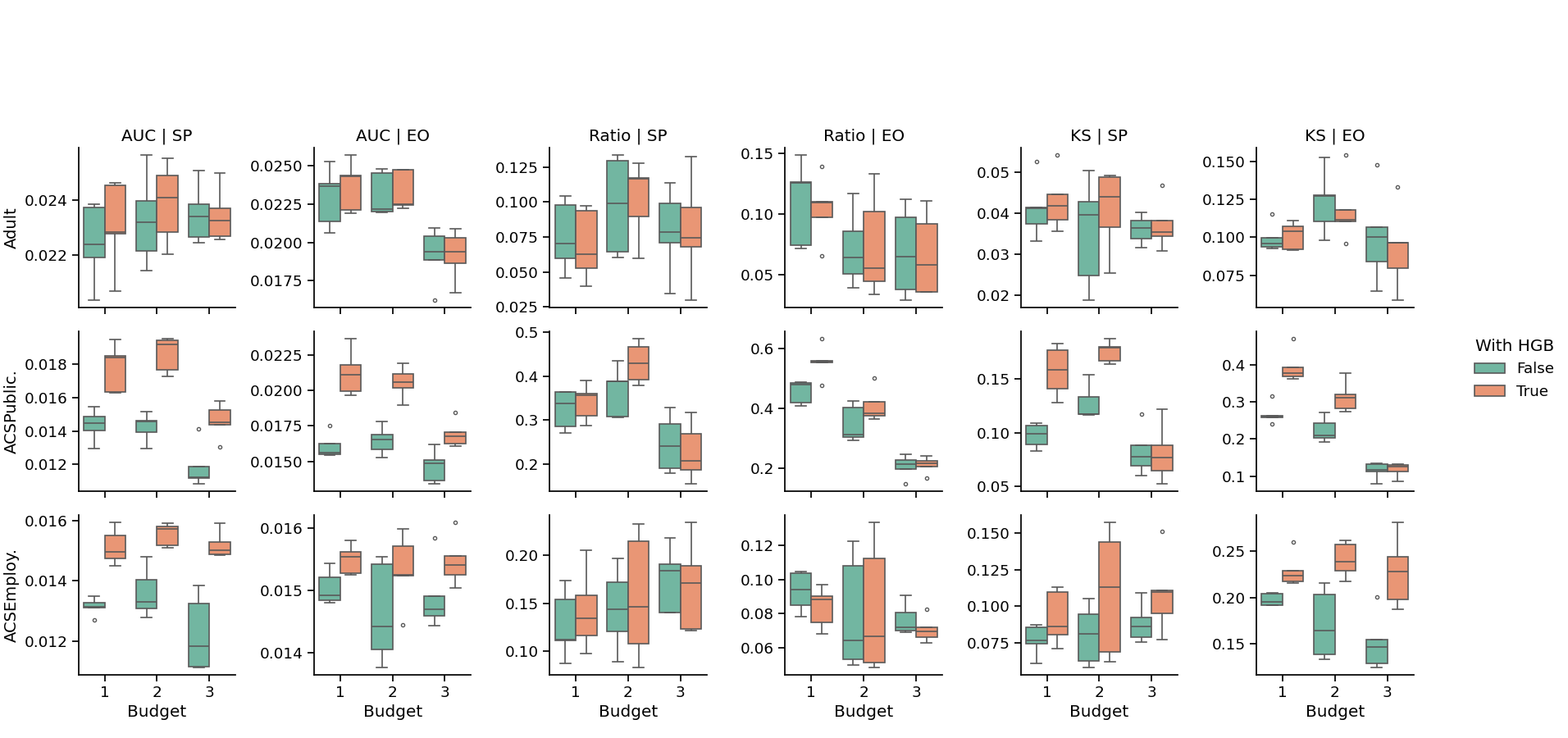}%
    }
    \caption{Influence of the gradient boosting model on consistency scores: The column name,  $\text{AUC (Ratio, KS)}~|~\text{SP}$, represents the consistency score of AUC (Ratio, KS), i.e., the ratio-type and KS-based fairness metrics, when the statistical parity is improved during optimization. Similarly, $\text{AUC (Ratio, KS)}~|~\text{EO}$ stands for the score when the equalized odds is improved.}
    \label{fig:consistency-comparison-hgb}
\end{figure}

\subsection{Comparison with in-processing models}
\label{supp:in-proc}
Figures~\ref{fig:sp-bin-in-proc} and \ref{fig:sp-bin-in-proc-ks} contrast trade-off curves between our upstream/downstream and two in-processing models denoted by HGR \citep{lee:etal:22} and ExpoGrad \citep{agar:etal:18}. The HGR method solves an empirical risk minimization for a neural-net-based classifier with an HGR-correlation-based fairness penalty approximated by neural networks. The classifier adopts the structure of our upstream model for fair comparison. ExpoGrad solves a Lagrangian optimization to find a randomized classifier by sampling a supervised learning model from a pre-specified model class. The figures show that in-processing models tend to have more efficient trade-off curves than our method. We anticipate this outcome, as in-processing methods possess the advantage of directly optimizing the decision boundary of the decision model for both predictive performance and fairness simultaneously. 

Nevertheless, pre-processing has its own fundamental benefits over in-processing. First, pre-processing empowers data distributors to proactively enforce fairness constraints before releasing data. More specifically, data distributors do not need to rely on the data receiver's faithfulness or technical expertise to mitigate discriminative effects, ensuring that the downstream decision process adheres to fairness standards regardless of the downstream user's intent or capability. Secondly, pre-processing methods are model agnostic in the sense of downstream analysis. Pre-processed data can be utilized by any learning algorithm, including black-box legacy systems where modifying the optimization objective or algorithm is systematically impossible. Besides, they facilitate a critical separation of concerns between data governance and model optimization. This allows downstream researchers or developers to focus on predictive performance without needing deep expertise in fairness-constrained optimization, while simultaneously standardizing bias mitigation across heterogeneous model deployments. 
\clearpage
\vspace*{\fill}
\begin{figure*}[h] 
    \centering
    \subfloat[Trade-off plots of Adult  with $\delta_{X}\in \{0.01,0.10,0.30\}$, $\delta_Y=0$, and $\lambda_F = 10$]{%
        \includegraphics[width=\linewidth, height=0.15\textheight, keepaspectratio]{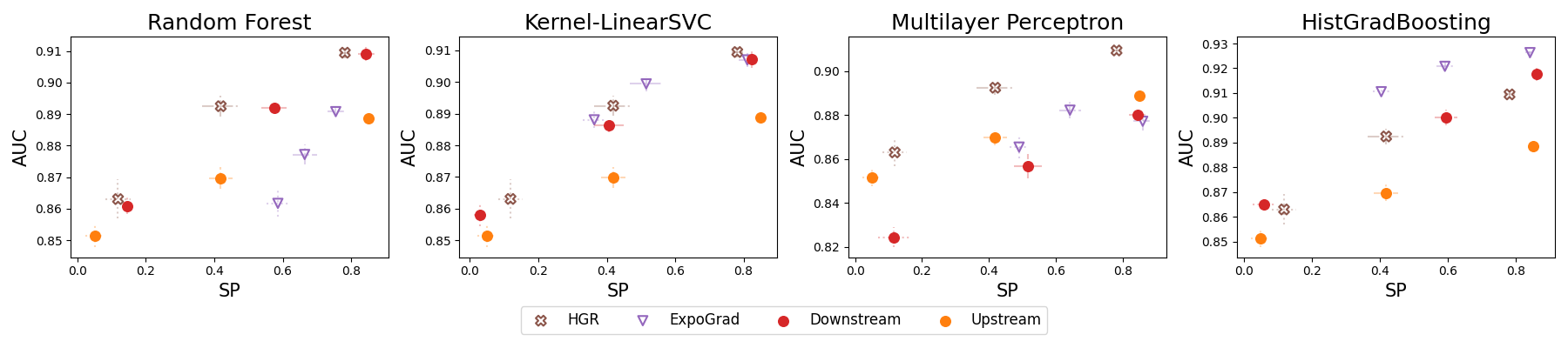}
    }\par
    \subfloat[Trade-off plots of ACSEmployment with $\delta_{X}\in \{0.01,0.025,0.05\}$,  $\delta_Y=0$, and $\lambda_F = 100$]{%
        \includegraphics[width=\linewidth, height=0.15\textheight, keepaspectratio]{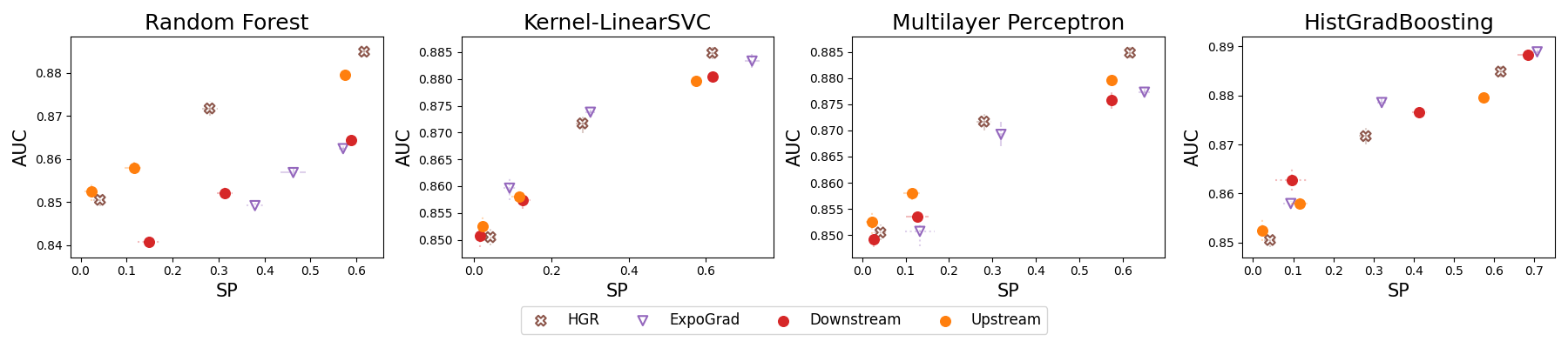}
    }\par
    \subfloat[Trade-off plots of ACSPublicCoverage with $\delta_{X}\in \{0.01,0.05,0.10\}$, $\delta_Y=0.05$, and $\lambda_F = 100$]{%
        \includegraphics[width=\linewidth, height=0.15\textheight, keepaspectratio]{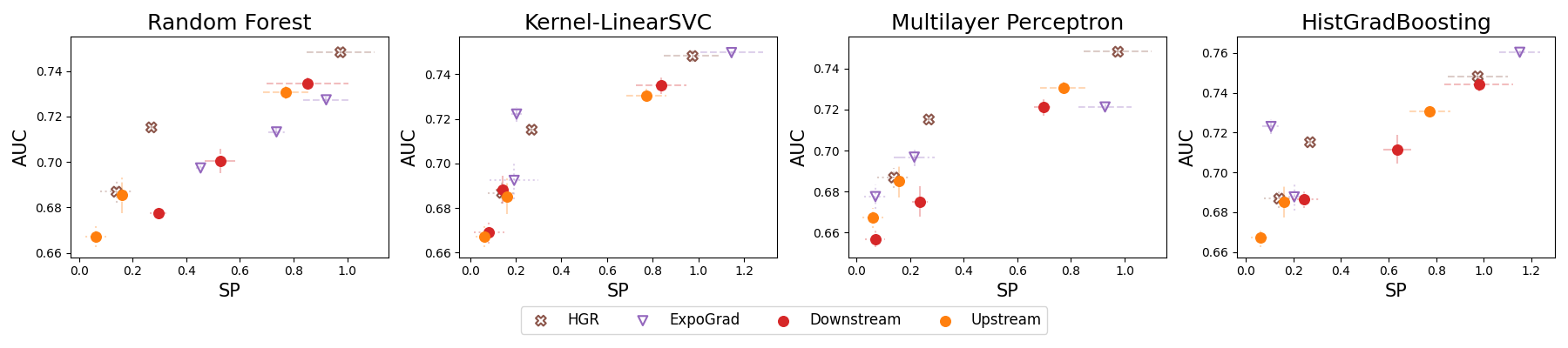}
    }\par
    \caption{Trade-off curves between AUC and SP in handling $A_{\text{bin}}$: HGR and ExpoGrad stand for methods of \cite{lee:etal:22} and \cite{agar:etal:18}, respectively. For HGR, the figures draw the same trade-off curves found from a single neural-network classifier. ExpoGrad draws the curves that each downstream model is used as a target classifier.}
    \label{fig:sp-bin-in-proc}
\end{figure*}
\vspace*{\fill}
\clearpage
\vspace*{\fill}
\begin{figure*}[h] 
    \centering
    \subfloat[Trade-off plots of Adult  with $\delta_{X}\in \{0.01,0.10,0.30\}$, $\delta_Y=0$, and $\lambda_F = 10$]{%
        \includegraphics[width=\linewidth, height=0.15\textheight, keepaspectratio]{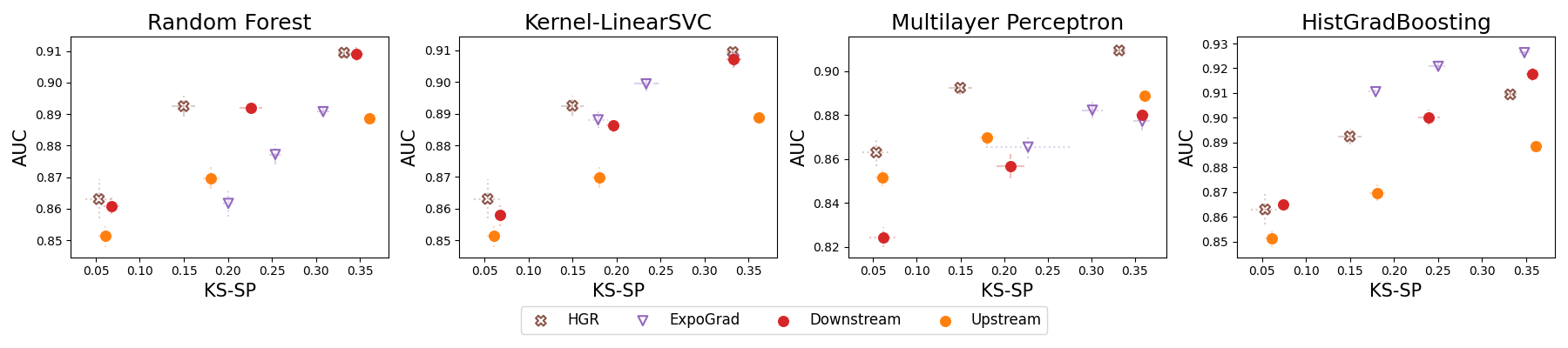}
    }\par
    \subfloat[Trade-off plots of ACSEmployment with $\delta_{X}\in \{0.01,0.025,0.05\}$,  $\delta_Y=0$, and $\lambda_F = 100$]{%
        \includegraphics[width=\linewidth, height=0.15\textheight, keepaspectratio]{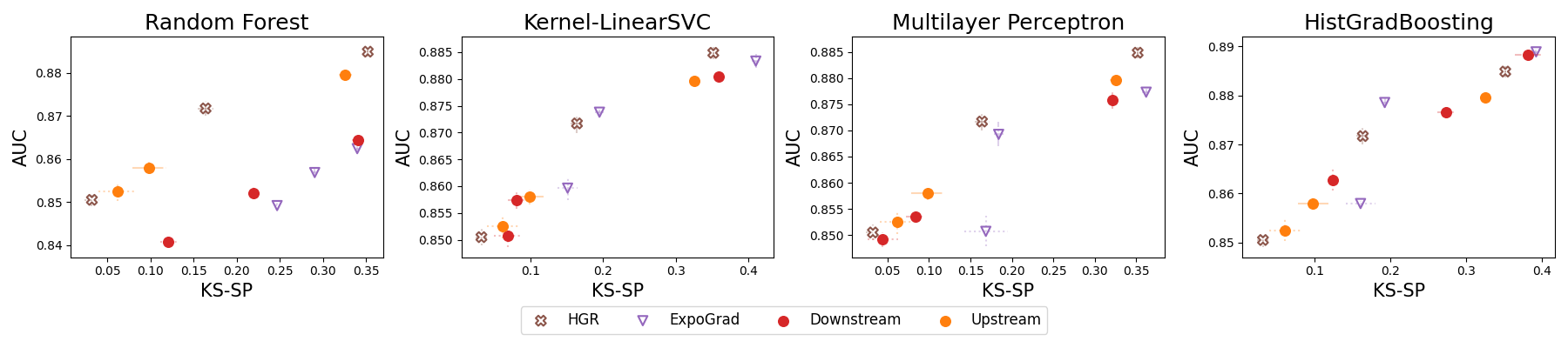}
    }\par
    \subfloat[Trade-off plots of ACSPublicCoverage with $\delta_{X}\in \{0.01,0.05,0.10\}$, $\delta_Y=0.05$, and $\lambda_F = 100$]{%
        \includegraphics[width=\linewidth, height=0.15\textheight, keepaspectratio]{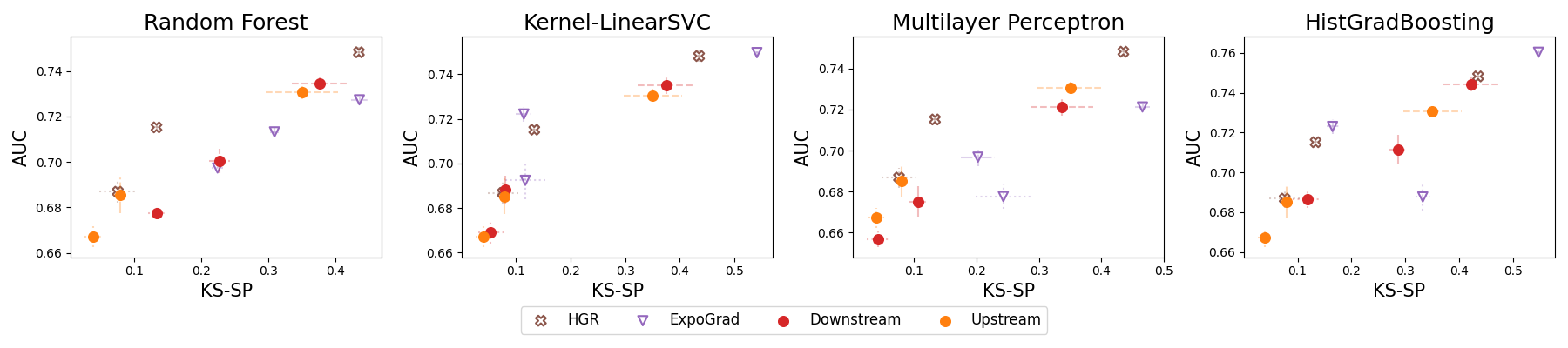}
    }\par
    \caption{Trade-off curves between AUC and KS-SP in handling $A_{\text{bin}}$: HGR and ExpoGrad stand for methods of \cite{lee:etal:22} and \cite{agar:etal:18}, respectively. See Figure~\ref{fig:sp-bin-in-proc} for details.}
    \label{fig:sp-bin-in-proc-ks}
\end{figure*}
\vspace*{\fill}
\clearpage

\section{Proof}
\label{supp:proof}

\subsection{Proposition~\ref{prop:motiv}}

Recall that $\bX$ and $\bA$ are random vectors. We first study the properties of HGR correlation on random vectors necessary for our analysis. First and foremost, the joint independence is defined as follows. 
\begin{definition}[Joint independence]
    Suppose $\bX$ and $\bA$ are finite-dimensional random vectors. The joint independence $\bX \perp \bA$ holds if 
    \begin{align*}
        P(X_1\in E_1,\dots,X_p\in E_p | \bA)=P(X_1\in E_1,\dots,X_p\in E_p)
    \end{align*}
    for all measurable sets $E_1,\dots,E_p$. Then, we say $\bX$ is jointly independent of $\bA$. 
\end{definition}
In our main text, data fairness leverages this joint independence notion. Not surprisingly, this multivariate extension inherits most properties of the HGR correlation shown in \cite{reny:59:2}.
\begin{proposition}
    The multivariate-version HGR correlation satisfies 
    \begin{itemize}
        \item[P1] $0 \leq \rho(\bX,\bA) \leq 1$.
        \item[P2] $\rho(\bX,\bA)=\rho(\bA,\bX)$.
        \item[P3] $\rho(\bX,\bA)=0$ if and only if $\bX$ and $\bA$ are jointly independent.
        \item[P4] $\rho(\bX,\bA)=1$ if there exists some deterministic functional relationship between $\bX$ and $\bA$. 
    \end{itemize}
\end{proposition}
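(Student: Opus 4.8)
The plan is to read all four items off the variational definition $\rho(\bX,\bA)=\sup_{f,g}\bE[f(\bX)g(\bA)]$ over centered, unit-variance Borel test functions, using only Cauchy--Schwarz, a sign-flip symmetry of the feasible set, and indicator test functions. Throughout I assume $\bX$ and $\bA$ are each nondegenerate, so that admissible pairs exist. For \textbf{P1}, the upper bound is immediate from Cauchy--Schwarz: every feasible pair satisfies $\bE[f(\bX)g(\bA)]\le \sqrt{\bE[f(\bX)^2]\,\bE[g(\bA)^2]}=1$, hence $\rho(\bX,\bA)\le 1$. For the lower bound, the feasible set is closed under $g\mapsto -g$ (centering and unit variance are preserved), so for any admissible pair one of $\bE[fg]$ and $\bE[f(-g)]=-\bE[fg]$ is nonnegative, whence the supremum is at least $0$. \textbf{P2} is purely formal: the objective and the constraints are symmetric under interchanging the roles of $(f,\bX)$ and $(g,\bA)$, so the two suprema coincide.

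For \textbf{P3}, the direction ``jointly independent $\Rightarrow \rho=0$'' holds because independence gives $\bE[f(\bX)g(\bA)]=\bE[f(\bX)]\,\bE[g(\bA)]=0$ for every feasible pair by the zero-mean constraints, so the supremum is $0$. For the converse I first upgrade $\rho=0$ to the stronger statement that \emph{every} admissible pair has $\bE[f(\bX)g(\bA)]=0$: if some pair gave $c:=\bE[fg]\neq 0$, then $(\mathrm{sgn}(c)\,f,g)$ would be admissible with value $|c|>0$, contradicting $\rho=0$. Equivalently, $\mathrm{Cov}(\phi(\bX),\psi(\bA))=0$ for all square-integrable $\phi,\psi$, since any nondegenerate such function can be centered and normalized into an admissible test function. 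I then specialize to indicators: taking $\phi=\mathbf{1}_{\{\bX\in R\}}$ for a measurable rectangle $R=E_1\times\cdots\times E_{d_{\bX}}$ and $\psi=\mathbf{1}_{\{\bA\in F\}}$ yields $P(\bX\in R,\bA\in F)=P(\bX\in R)\,P(\bA\in F)$, with degenerate events factorizing trivially so that the variance restriction is harmless. Because this holds for every Borel $F$, the characterizing property of conditional expectation forces $\bE[\mathbf{1}_{\{\bX\in R\}}\mid\bA]=P(\bX\in R)$ almost surely, which is precisely the joint-independence condition stated above.

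For \textbf{P4}, I suppose the deterministic relationship is $\bA=\phi(\bX)$ almost surely for a measurable $\phi$ (the case $\bX=\psi(\bA)$ is symmetric via P2). Choosing any admissible $g$ on $\calA$ and setting $f:=g\circ\phi$, I obtain $f(\bX)=g(\bA)$ a.s., so $f$ is admissible with $\bE[f(\bX)g(\bA)]=\bE[g(\bA)^2]=1$; together with the upper bound from P1 this gives $\rho(\bX,\bA)=1$.

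The main obstacle is the converse of P3. The two delicate points are the sign-flip step, which turns $\rho=0$ into the vanishing of \emph{all} feasible correlations rather than a mere inequality, and the passage from factorization over rectangles to the conditional-probability form used in the paper's definition of joint independence. The latter is handled by the defining property of $\bE[\,\cdot\mid\bA]$ once $F$ is allowed to range over all Borel sets, avoiding a separate monotone-class argument. Some bookkeeping is required for degenerate (almost surely constant) test functions, where the normalization fails, but such cases only yield trivially valid factorizations and do not affect the conclusion.
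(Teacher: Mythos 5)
Your proposal is correct and takes essentially the same route as the paper's proof: every property is read directly off the variational definition, with Cauchy--Schwarz and sign-flip symmetry for P1--P2, indicator test functions for the converse of P3 (exactly Rényi's argument, which the paper cites and reproduces), and a composition $f=g\circ\phi$ for P4. Your two refinements---making the sign-flip upgrade explicit, and passing from factorization $P(\bX\in R,\bA\in F)=P(\bX\in R)P(\bA\in F)$ over all Borel $F$ to the conditional form via the defining property of $\bE[\,\cdot\mid\bA]$ rather than the paper's implicit rectangle-to-$\sigma(\bA)$ extension---are welcome bookkeeping details rather than a different approach.
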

\begin{proof}
    The definition directly shows P1 and P2.  
    For P3, we directly follow \cite{reny:59:2}. Let's consider $f_{E_1,\dots,E_p}$ and $g_{F_1,\dots,F_q}$. 
    \begin{align*}
    f_{E_1,\dots,E_p}(\bX)=
    \begin{cases}
    1, & \mbox{if } X_1 \in E_1, \dots, X_p \in E_p \\
    0, & \mbox{otherwise}, 
    \end{cases}    
    \end{align*}
    and  
    \begin{align*}
        g_{F_1,\dots,F_q}(\bA)=
        \begin{cases}
            1, & \mbox{if } A_1\in F_1, \dots, A_q \in F_q \\
            0, & \mbox{otherwise},
        \end{cases}
    \end{align*}
    where $E_1,\dots,E_p,F_1,\dots,F_q$ are arbitrary Borel sets of such that $0 < P(X_1\in E_1,\dots,X_p \in E_p) < 1$ and $0<P(A_1\in F_1,\dots,A_q\in F_q)<1$. Then, 
    \begin{align*}
        &\text{Corr}(f_{E_1,\dots,E_p}(\bX),g_{F_1,\dots,F_q}(\bA))\\ 
        &=\dfrac{P(X_1\in E_1,\dots,A_q \in F_q)-P(X_1\in E_1,\dots)P(A_1 \in F_1,\dots)}{\sqrt{P(X_1\in E_1,\dots)(1-P(X_1\in E_1,\dots))}\sqrt{P(A_1\in F_1,\dots)(1-P(A_1 \in F_1,\dots))}}.
    \end{align*}
    Therefore, if $\rho(\bX,\bA)=0$, then 
    \begin{align*}
    &P(X_1\in E_1,\dots, X_p\in E_p,A_1\in F_1,\dots,A_q\in F_q)\\
    &=P(X_1\in E_1,\dots,X_p\in E_p)P(A_1\in F_1,\dots A_q \in F_q)    
    \end{align*}
     since $\text{Corr}(f_{E_1,\dots,E_p}(\bX),g_{F_1,\dots,F_q}(\bA))\leq \rho(\bX,\bA)$.
    For P4 as well, we can directly borrow the proof in \cite{reny:59:2}. 
\end{proof}

The proof of Proposition~\ref{prop:motiv} is then straightforward by definition. For a given Borel-measurable function $u:{\calX}\rightarrow \calX^{\dagger}$, the composition set $\{f'\circ u:f' \text{ is Borel measurable on the real line}\}$ is a subset of $\{f:\calX \rightarrow \mathbb{R} \text{ is Borel measurable}\}$. Hence, we conclude $\sup_{f',g}\bE[f'(u(\bX))g(\bA)]\leq \sup_{f,g}\bE[f(\bX)g(\bA)]$.

\subsection{Lemma~\ref{lem:triangle}}

Suppose generic random vectors $X$ and $Y$
are regular and its mean square contingency is finite. By Theorem~2 in \cite{reny:59}, there exist $f^*$ and $g^*$, called maximal correlation functions, such that $\rho(X,Y)=\bE[f^*(X)g^*(Y)]$ holds. For simple notation, we say $X^* = f^*(X)$ and $Y^* = g^*(Y)$ with $\bE[X^*]=0$ and $\bE[X^{*2}]=1$ (respectively for $Y^*$ as well). 

Let's first observe \begin{align*}
        \sqrt{\bE||X^* - Y^*||^2}=\sqrt{\bE[X^{*2}]+\bE[Y^{*2}]-2\bE[X^*Y^*]}=\sqrt{2-2\bE[X^*Y^*]}, 
\end{align*}
and it further implies that 
\begin{align*}
    d'(X^*,Y^*)=\sqrt{2-2|\bE[X^*Y^*]|}=\min\{\sqrt{\bE||X^* - Y^*||^2}, \sqrt{\bE||X^* + Y^*||^2}\}.
\end{align*}
Note $\sqrt{2-2\bE[X^*Y^*]}=\sqrt{2-2|\bE[X^*Y^*]|}=\sqrt{2-2\rho(X,Y)}$ since $\rho$ is positive.

From the result of \cite{chen:etal:23},  given any additional variable $Z^*$,
\begin{align*}
        &\min\{\sqrt{\bE||X^* - Y^*||^2}, \sqrt{\bE||X^* + Y^*||^2}\}  \\ 
        \leq& \min\{\sqrt{\bE||X^* - Z^*||^2 + \bE|| Z^* - Y^*||^2}, \sqrt{\bE||X^* + Z^* ||^2 + \bE|| Z^* + Y^*||^2}, \\ 
        &~~~~~~~~~~~~\sqrt{\bE||X^* - Z^*||^2 + \bE|| Z^* + Y^*||^2}, \sqrt{\bE||X^* + Z^*||^2 + \bE|| Z^* - Y^*||^2} \} \\ 
        \leq& \min\{\sqrt{\bE||X^* - Z^*||^2} + \sqrt{\bE|| Z^* - Y^*||^2}, \sqrt{\bE||X^* + Z^* ||^2} + \sqrt{\bE|| Z^* + Y^*||^2}, \\ 
        &~~~~~~~~~~~~\sqrt{\bE||X^* - Z^*||^2} + \sqrt{\bE|| Z^* + Y^*||^2}, \sqrt{\bE||X^* + Z^*||^2} + \sqrt{\bE|| Z^* - Y^*||^2} \}.
    \end{align*}
where the last line equals to
\begin{align*}
        \min\{ \sqrt{\bE||X^* - Z^*||^2}, \sqrt{\bE||X^* + Z^*||^2}\} + \min\{ \sqrt{\bE||Y^* - Z^*||^2}, \sqrt{\bE||Y^* + Z^*||^2}\},
\end{align*}
so $d'$ satisfies the triangle inequality.

Now, let's consider other pairs $(Z,Y)$ and $(X,Z)$. We denote by $f^*$ and $g^*$ the optimal maximal functions for $X$ and $Y$; $k^*$ and $g'$ for $Z$ and $Y$; $f^{\dagger}$ and $k^{\dagger}$ for $X$ and $Z$. Then, we can observe
\begin{align*}
   \rho(Z,Y)&=\bE[k^*(Z)g'(Y)] \geq \bE[k^*(Z)g^*(Y)]=\tilde{\rho}(Z,Y) \\
    \rho(X,Z)&=\bE[f^{\dagger}(X)k^{\dagger}(Z)] \geq \bE[f^*(X)k^*(Z)]=\tilde{\rho}(X,Z), \\
    \rho(X,Y)&=\bE[f^*(X)g^*(Y)]\geq\bE[f^*(X)g'(Y)]=\tilde{\rho}(X,Y).
\end{align*}
On the one hand, if $Y$ is binary ($P(Y=1)=p$),  there exist $a$ and $b$ such that $g(Y)=aY+b$ for any mapping $g$. If furthermore, $\bE[g(Y)]=0$ and $\bE[g(Y)^2]=1$, then,
\begin{align*}
    V(g(Y))=a^2 p(1-p) = 1 \rightarrow a = \pm \dfrac{1}{\sqrt{p(1-p)}}. \\ 
    \bE[g(Y)]=ap + b = 0 \rightarrow b = \mp \dfrac{\sqrt{p}}{\sqrt{1-p}},
\end{align*}
which means that the optimal maximal function must satisfy  $g^* = g'$ or $g^* = -g'$. This implies that $\tilde{\rho}(X,Y)=\rho(X,Y)$ or $\tilde{\rho}(X,Y)=-\rho(X,Y)$. 

From the triangle inequality $d'(X^*,Z^*)\leq d'(X^*,Y') + d'(Y',Z^*)$, we have 
\begin{align*}
    \sqrt{2-2|\tilde{\rho}(X,Z)|}\leq \sqrt{2-2|\tilde{\rho}(X,Y)|} + \sqrt{2-2|\rho(Y,Z)|}. 
\end{align*}
Finally, since $|\rho(X,Z)| \geq |\tilde{\rho}(X,Z)|$ and $\tilde{\rho}(X,Y)=\rho(X,Y)$, we obtain 
\begin{align*}
    \sqrt{1-|{\rho}(X,Z)|}\leq \sqrt{1-|{\rho}(X,Y)|} + \sqrt{1-|{\rho}(Y,Z)|}. 
\end{align*}
Therefore, the triangle inequality holds if the pivot variable is binary.

\subsection{Theorem~\ref{thm:uplwh0}}

We show the lower bound first. Let $\check{h}:\calX\times \calA \rightarrow \calY$, $h:\calX \rightarrow \calY$, and $G_{\bX}:\calX\times \calA\rightarrow \calX$. Let's consider the class of  composite functions $h\circ G_{\bX}: \calX\times \calA \rightarrow \calY$ under the constraint $\Delta_{\bX}(\bX,\bbarX)\leq \delta_{\bX}$. The size of this class expands as $\delta_{\bX}$ gets larger, i.e., $\delta_{\bX}>0$ leads to a larger composition class than that of $\delta_{\bX}=0$, thus \eqref{opt:min} implies 
\begin{align*}
    \bE[l(Y,\tih^*(\btiX))]+\lambda_F \rho(\tih^*(\btiX),\bA) \leq \bE[l(Y,h^*(\bX))]+\lambda_F \rho(h^*(\bX),\bA),
\end{align*}
equivalently,
\begin{align*}
    \lambda_F (\rho(h^*(\bX),\bA) - \rho(\tih^*(\btiX),\bA))\geq \bE[l(Y,\tih^*(\btiX))] - \bE[l(Y,h^*(\bX))].
\end{align*}
Since $\check{e}=\min_{\check{h}} \bE[l(Y,\check{h}(\bX,\bA))]$ and $e(\bA)=\bE[l(Y,h^*(\bX))]-\check{e}$, we replace $\bE[l(Y,h^*(\bX))]$ by $e(\bA)+\check{e}$.

Now, let's further suppose $\tih^*$ and $h^*$ are binary. Then, 
\begin{align*}
   \rho(h^*(\bX),\bA) - \rho(\tih^*(\btiX),\bA)=\bE[f^*(h^*(\bX))g^*(\bA)]-\bE[\tilde{f}^*(\tih^*(\btiX))\tilde{g}^*(\bA)], 
\end{align*}
where $f^*,g^*,\tilde{f}^*,\tilde{g}^*$ are the functions that achieve the maximal correlations. Since $\tih^*$ and $h^*$ are binary, the potential functions are linear, i.e., 
\begin{align*}
    f^*(x) = a x + b, \quad \tilde{f}^*(x)=\tilde{a} x + \tilde{b},
\end{align*}
and $a=\pm(\sqrt{p(1-p)})^{-1}$, $\tilde{a}=\pm(\sqrt{\tilde{p}(1-\tilde{p})})^{-1}$ where $p=P(h^*(\bX)=1)$ and $\tilde{p}=P(\tih^*(\btiX)=1)$. The definition finds these constants. Now, we replace $\tilde{g}^*$ with $g^*$ to find an upper bound. Then, we have
\begin{align*}
    \bE[f^*(h^*(\bX))g^*(\bA)]-\bE[\tilde{f}^*(\tih^*(\btiX))\tilde{g}^*(\bA)] &\leq \bE[f^*(h^*(\bX))g^*(\bA)]-\bE[\tilde{f}^*(\tih^*(\btiX))g^*(\bA)],\\
    &=\bE[g^*(\bA)(f^*(h^*(\bX))-\tilde{f}^*(\tih^*(\btiX)))].
\end{align*}
Note $a$ and $\tilde{a}$ can be set to having the same sign because $f^*$ and $\tilde{f}^*$ are linear. For instance, suppose that $f^*$ with $a>0$ and $g^*$ are of the ground truth. If $a<0$ is forced, then $-g^*$ becomes the corresponding maximal potential. Therefore, by using the Cauchy-Schwartz inequality, we obtain
\begin{align*}
    \bE[g^*(\bA)(f^*(h^*(\bX))-\tilde{f}^*(\tih^*(\btiX)))] &\leq |\bE[g^*(\bA)(f^*(h^*(\bX))-\tilde{f}^*(\tih^*(\btiX)))]|,\\
    &\leq \sqrt{\bE[g^*(\bA)^2]\bE[(f^*(h^*(\bX))-\tilde{f}^*(\tih^*(\btiX)))^2]}, \\
    &=\sqrt{2-2\bE[f^*(h^*(\bX))\tilde{f}^*(\tih^*(\btiX))]},\\
    &=\sqrt{2-2\text{Corr}(f^*(h^*(\bX)),\tilde{f}^*(\tih^*(\btiX)))},\\
    &=\sqrt{2-2\rho(h^*(\bX),\tih^*(\btiX))}, \\
    &=d(h^*(\bX),\tih^*(\btiX)).
\end{align*}
since $\bE[g^*(\bA)^2]=\bE[f^*(h^*(\bX))^2]=\bE[\tilde{f}^*(\tih^*(\btiX))^2]=1$. Thus, by applying the triangle inequality (Lemma~\ref{lem:triangle}), i.e., $d(h^*(\bX),\tih^*(\btiX))\leq d(h^*(\bX),Y)+d(Y,\tih^*(\btiX))$, the statement is obtained. 

Similarly, we replace $g^*$ by $\tilde{g}^*$.
\begin{align*}
    \bE[f^*(h^*(\bX))g^*(\bA)]-\bE[\tilde{f}^*(\tih^*(\btiX))\tilde{g}^*(\bA)] &\geq \bE[f^*(h^*(\bX))\tilde{g}^*(\bA)]-\bE[\tilde{f}^*(\tih^*(\btiX))\tilde{g}^*(\bA)],\\
        &=\bE[\tilde{g}^*(\bA)(f^*(h^*(\bX))-\tilde{f}^*(\tih^*(\btiX)))],\\
        &=-|\bE[\tilde{g}^*(\bA)(f^*(h^*(\bX))-\tilde{f}^*(\tih^*(\btiX)))]|.
\end{align*}
Then, by applying the Cauchy-Schwartz inequality, we have
\begin{align}
\label{eqn:lb}
    \bE[f^*(h^*(\bX))g^*(\bA)]-\bE[\tilde{f}^*(\tih^*(\btiX))\tilde{g}^*(\bA)] \geq - d(h^*(\bX),\tih^*(\btiX)).
\end{align}

\subsection{Corollary~\ref{cor:uplwh0}}

The lower bound is derived from $d(\tih^*(\btiX),\bA)\geq d(\tiY,\bA)-d(\tih^*(\btiX),\tiY)$ which is the direct result of the triangle inequality. Following the proof of Theorem~\ref{thm:uplwh0}, we replace the upper bound from the triangle inequality by $d(h^*(\bX),\tih^*(\btiX))\leq  d(h^*(\bX),\tiY)+d(\tiY,\tih^*(\btiX))$.

\subsection{Lemma~\ref{lem:util}}

We first define the Lipschitz continuity. 

\begin{definition}
    For the metric space $(\calX,d_X)$ and $(\calY,d_Y)$, a function $h:{\calX}\rightarrow \calY$ is called $L$-Lipschitz if $d_Y(h(x),h(y))\leq L d_X(x,y)$ for all $x,y$. 
\end{definition}

\begin{definition}
    A loss function $L:{\calY}\times \calY \rightarrow \mathbb{R}$ is said to be $K$-Lipschitz, if $|L(y,\hat{y}_1)-L(y,\hat{y}_2)|\leq K d_{Y}(\hat{y}_1,\hat{y}_2)$ for all $\hat{y}_1,\hat{y}_2,y\in \hat{\calY}$, and $|L(y_1,\hat{y})-L(y_2,\hat{y})|\leq K d_{Y}(y_1,y_2)$ for all ${y}_1,{y}_2,\hat{y} \in {\calY}$. 
\end{definition}

Suppose the loss function is $M^l$-Lipschitz. Also, $h_k^*$ and $\tih^*$ are $M_k$ and $\tilde{M}$-Lipschitz respectively. Let $\tih^{\dagger}$ is the optimizer under $(\btiX,Y)$, which means $\bE[l(Y,\tih^{\dagger}(\btiX))]\leq \bE[l(Y,\tih^*(\btiX))]$. Since $\tih_k^*$ is the optimizer under $(\btiX,\tiY)$, 
\begin{align*}
    \bE[l(\tiY,\tih_k^*(\btiX))]-\bE[l(\tiY,\tih^*(\btiX))] \leq \bE[l(\tiY,h_k^*(\btiX))]-\bE[l(\tiY,\tih^*(\btiX))], 
\end{align*}
and it is further characterized as
\begin{align*}
    & = \bE[l(\tiY,h_k^*(\btiX))]-\bE[l(Y,\tih^*(\btiX))]+\underbrace{\bE[l(Y,\tih^*(\btiX))]-\bE[l(\tiY,\tih^*(\btiX))]}_{Q_1^0} \\ 
    & \leq \underbrace{\bE[l(\tiY,h_k^*(\btiX))]- \bE[l(Y,h_k^*(\btiX))]}_{Q_1^k}+\bE[l(Y,h_k^*(\btiX))]-\bE[l(Y,\tih^{\dagger}(\btiX))] + Q_1^0 \\
    &=  \underbrace{\bE[l(Y,h_k^*(\btiX))]-\bE[l(Y,h_k^*(\bX))]}_{Q_2^k}+\bE[l(Y,h_k^*(\bX))]-\bE[l(Y,\tih^{\dagger}(\btiX))] + Q_1^0 + Q_1^k \\ 
    &= \underbrace{\bE[l(Y,h_k^*(\bX))]-\bE[l(Y,h^*(\bX))]}_{\Delta_L^k} + \bE[l(Y,h^*(\bX))]-\bE[l(Y,\tih^{\dagger}(\btiX))]  +  Q_1^0 + Q_1^k + Q_2^k \\ 
    &= \bE[l(Y,h^*(\bX))]-\bE[l(Y,\tih^{\dagger}(\btiX))]  +  Q_1^0 + Q_1^k + Q_2^k + \Delta_L^k \\ 
    &\leq \underbrace{\bE[l(Y,\tih^{\dagger}(\bX))]-\bE[l(Y,\tih^{\dagger}(\btiX))]}_{Q_2^0} +  Q_1^0 + Q_1^k + Q_2^k + \Delta_L^k.
\end{align*}
Hence, we derive
\begin{align*}
    \bE[l(\tiY,\tih_k^*(\btiX))]-\bE[l(\tiY,\tih^*(\btiX))] \leq Q_1^0 + Q_1^k + Q_2^0 + Q_2^k + \Delta_L^k.
\end{align*}
Also, let's further suppose that $\Delta_{\bX}$ and $\Delta_Y$ correspond to the metrics $d_{X}$ and $d_Y$. Then we obtain
\begin{align*}
    &|Q_1^0|, |Q_1^k| \leq M^l \bE[d_Y(Y,\tiY)]=M^l \Delta_Y \leq M^l \delta_Y \\
    &|Q_2^k| \leq M^l \bE[d_{\hat{Y}}(h_k^*(\bX),h_k^*(\btiX))]\leq M^l M_k \bE[d_X(\bX,\btiX)]=M^l M_k \Delta_{\bX}\leq M^l M_k\delta_{\bX} \\ 
    &|Q_2^0| \leq M^l \bE[d_{\hat{Y}}(\tih^{\dagger}(\bX),\tih^{\dagger}(\btiX))]\leq M^l \tilde{M} \bE[d_X(\bX,\btiX)]= M^l \tilde{M} \Delta_{\bX}\leq M^l \tilde{M}\delta_{\bX}.
\end{align*}
Therefore, we conclude 
\begin{align*}
    \bE[l(\tiY,\tih^*(\btiX))]\leq \bE[l(\tiY,\tih_k^*(\btiX))] \leq \bE[l(\tiY,\tih^*(\btiX))] + 2M^l \delta_Y + M^l(M_k+\tilde{M}) \delta_{\bX} + \Delta_L^k.
\end{align*}

Let $(\btiX',\bA',Y')$ be the test data and $\btiD'=(\btiX',Y')$ be the transformed of it. Note the label of $\btiD'$ is not transformed. Evaluating the risk of $\tih_k^*$ on $\btiD'=(\btiX',Y')$ where $\btiD' \perp \bD$ is of great interest since it accounts for the predictive error. That is, $\tih_k^*$ is learned on $\btiD$ and evaluates $\bE[l(Y',\tih_k^*(\btiX'))]$.  We assume there is no domain shift between $(\bX,\bA,Y)$ and $(\bX',\bA',Y')$. In the population perspective, therefore,   $\bE[l(Y',\tih_k^*(\btiX'))]$ can be read as $\bE[l(Y,\tih_k^*(\btiX))]$, denoting by $\btiD'=(\btiX,Y)$. Then we observe the following inequality
\begin{align*}
    &|\bE[l(Y,\tih_k^*(\btiX))]-\bE[l(\tiY,\tih^*(\btiX))]|\leq \\
    &|\bE[l(Y,\tih_k^*(\btiX))]-\bE[l(\tiY,\tih_k^*(\btiX))]|+|\bE[l(\tiY,\tih_k^*(\btiX))]-\bE[l(\tiY,\tih^*(\btiX))]|.
\end{align*}
Hence, by following the same proof technique, it is straightforward to see that 
\begin{align*}
    |\bE[l(Y,\tih_k^*(\btiX))]-\bE[l(\tiY,\tih^*(\btiX))]| \leq  3M^l \delta_Y + M^l(M_k+\tilde{M}) \delta_{\bX} + \Delta_L^k.
\end{align*}

\subsection{Theorem~\ref{thm:uplwhk}}

For effective discussion, we say $F(h_k^*;\bD)=\lambda_F\rho(h_k^*(\bX),\bA)$, $F(\tih_k^*;\btiD)=\lambda_F\rho(\tih_k^*(\btiX),\bA)$, and $F(\tih^*;\btiD)=\lambda_F\rho(\tih^*(\btiX),\bA)$. For the lower bound, the fairness of $\tih_k^*$ is decomposed as 
\begin{align*}
    &F(h_k^*;\bD) - F(\tih_k^*;\btiD)\\
    &= F(h^*;\bD) - \lambda_F \Delta_F^k  - F(\tih_k^*;\btiD), \\
    &= -\lambda_F \Delta_F^k + F(h^*;\bD) - F(\tih^*;\btiD) +  F(\tih^*;\btiD)- F(\tih_k^*;\btiD), \\
    &=\lambda_F \tilde{\Delta}_F - \lambda_F \Delta_F^k + F(\tih^*;\btiD)- F(\tih_k^*;\btiD), \\
    &=\lambda_F \tilde{\Delta}_F - \lambda_F \Delta_F^k + F(\tih^*;\btiD) + L(\tih^*;\btiD) - L(\tih^*;\btiD) - F(\tih_k^*;\btiD).
\end{align*}
By Lemma~\ref{lem:util}, the last line is further bounded below by
\begin{align*}
    &\geq \lambda_F\tilde{\Delta}_F - \lambda_F\Delta_F^k + F(\tih^*;\btiD) + L(\tih^*;\btiD) - L(\tih^*_k;\btiD) - F(\tih_k^*;\btiD), \\ 
     &\geq \lambda_F\tilde{\Delta}_F - \lambda_F\Delta_F^k -2M^l \delta_Y - M^l(M_k+M) \delta_{\bX} - \Delta_L^k -(F(\tih^*;\btiD)- F(\tih_k^*;\btiD)),
     \end{align*}
and by applying \eqref{eqn:lb}, we have
\begin{align*}
     &\geq \lambda_F \tilde{\Delta}_F - \lambda_F \Delta_F^k -2M^l \delta_Y - M^l(M_k+M) \delta_{\bX} - \Delta_L^k - \lambda_F d(\tih^*(\btiX),\tih_k^*(\btiX)).
\end{align*}

To derive the upper bound of $F(h_k^*;\bD)-F(\tih_k^*;\btiD)$, we observe 
\begin{align*}
    F(\tih_k^*;\btiD)-F(h_k^*;\bD)&=F(\tih_k^*;\btiD) + L(\tih_k^*;\btiD) - L(\tih_k^*;\btiD)-F(h_k^*;\bD), 
\end{align*}
Since $F(\tih_k^*;\btiD) + L(\tih_k^*;\btiD)\geq F(\tih^*;\btiD) + L(\tih^*;\btiD)$, the right-hand side is bounded by 
\begin{align*}
    &\geq F(\tih^*;\btiD) + L(\tih^*;\btiD)- L(\tih_k^*;\btiD)-F(h_k^*;\bD)\\ 
    &=F(\tih^*;\btiD) - F(h^*;
    \bD)+ L(\tih^*;\btiD) - L(\tih_k^*;\btiD) + F(h^*;
    \bD) -F(h_k^*;\bD)\\
    &=-\lambda_F \tilde{\Delta}_F+ L(\tih^*;\btiD) - L(\tih_k^*;\btiD) + \lambda_F \Delta_F^k.
\end{align*}
By multiplying -1 to both sides, it follows that
\begin{align*}
    F(h_k^*;\bD) - F(\tih_k^*;\btiD) &\leq \lambda_F \tilde{\Delta}_F - \lambda_F \Delta_F^k +L(\tih_k^*;\btiD)- L(\tih^*;\btiD),\\ 
    &\leq \lambda_F \tilde{\Delta}_F - \lambda_F \Delta_F^k +L(\tih_k^*;\btiD) + F(\tih_k^*;\btiD) - F(\tih_k^*;\btiD) - L(\tih^*;\btiD), \\ 
    &\leq \lambda_F \tilde{\Delta}_F - \lambda_F \Delta_F^k +L(\tih_k^*;\btiD) + F(\tih_k^*;\btiD) - F(\tih^*;\btiD) - L(\tih^*;\btiD).
\end{align*}
Moreover, by referring the proof in Theorem~\ref{thm:uplwh0}, we derive 
\begin{align*}
    \rho(\tih_k^*(\btiX),\bA)-\rho(\tih^*(\btiX),\bA)\leq d(\tih_k^*(\btiX),\tiY)+d(\tih^*(\btiX),\tiY).
\end{align*}
In consequence, by Lemma~\ref{lem:util}, we characterize
\begin{align*}
     &= L(\tih_k^*;\btiD)-L(\tih^*;\btiD) + F(\tih_k^*;\btiD) - F(\tih^*;\btiD), \\ 
     &= L(\tih_k^*;\btiD)-L(\tih^*;\btiD) + \lambda_F(\rho(\tih_k^*(\btiX),\bA)-\rho(\tih^*(\btiX),\bA)), \\
     &\leq 2M^l \delta_Y + M^l(M_k+M) \delta_{\bX} + \Delta_L^k + \lambda_F d(\tih_k^*(\btiX),\tih^*(\btiX)), \\ 
     &\leq  2M^l \delta_Y + M^l(M_k+M) \delta_{\bX} + \Delta_L^k + \lambda_F (d(\tih_k^*(\btiX),\tiY)+d(\tih^*(\btiX),\tiY)).
\end{align*}

\subsection{Proposition~\ref{prop:coherent}}

The end users randomly pick an arbitrary model among $K$ function classes, 
hence, the downstream loss of the $i$th user would evaluate follows an i.i.d. bounded random variable satisfies 
\begin{align*}
L(\tih^*;\btiD)\leq L(\tih_{k(i)}^*;\btiD) \leq L(\tih^*;\btiD) +2 M^l \delta_Y + M^l(M^*+\tilde{M}) \delta_{\bX} + \Delta_L^*.
\end{align*}
Proposition ~\ref{prop:coherent} holds due to this bound and the Popoviciu's inequality on variances.

\subsection{Corollary~\ref{cor:sep_dominating}}

Recall the conditional maximal correlation in Definition~\ref{def:cond_hgr}, i.e., $\rho_A(X,Y)=\sup_{a\in{\cal A};P(a)>0}\rho_{A=a}(X,Y)$. It is straightforward to see that $d_{A=a}(X,Y)=\sqrt{1-\rho_{A=a}(X,Y)}$ inherits Lemma~\ref{lem:triangle} for the binary pivot variable. Hence, it follows that 
\begin{align*}
   \rho_Y(h^*(\bX),A) - \rho_Y(\tih^*(\btiX),A)&=\sup_y \rho_{Y=y}(h^*(\bX),A)- \sup_y \rho_{Y=y}(\tih^*(\btiX),A), \\
   &\leq \sup_{y} \rho_{Y=y}(h^*(\bX),A) - \rho_{Y=y}(\tih^*(\btiX),A), 
\end{align*}
Therefore, following the same proof w.r.t. $\rho_{Y=y}(h^*(\bX),A) - \rho_{Y=y}(\tih^*(\btiX),A)$ in Theorem~\ref{thm:uplwh0}, we state 
\begin{align*}
    \rho_Y(h^*(\bX),A) - \rho_Y(\tih^*(\btiX),A) &\leq \sup_{Y=y} d_{Y=y}(\tih^*(\btiX),h^*(\bX)), \\ 
    &\leq \sup_{Y=y} d_{Y=y}(\tih^*(\btiX),\tiY) + d_{Y=y}(h^*(\bX),\tiY).
\end{align*}

\section{Details of Simulation}
\label{sup:simul}
\subsection{Section~\ref{sec:method_intro}}
\subsubsection*{Figure~\ref{fig:ex1}}
\label{sup:fig_ex1}

First of all, we generate 10 independent replicates of the toy data. There are 4,500/760 training/test data points. The implementation of each method is elaborated as follows. The three downstream models are trained on the transformed training data and evaluated on the transformed test data. For readers who want to know much more details, please refer to the shared code script. 

\noindent {\bf Implementation of \cite{feld:etal:15}.} We implement the geometric repair varying $0\leq\lambda \leq 1$ where $\lambda$ controls the degree of fairness. The 5 equally spaced points of $\lambda$ are considered. Figure~\ref{fig:ex1} draws $\lambda \in \{0.25,0.5,0.75\}$ for clear visualization. As the authors addressed in their work, the label is not transformed. For fair evaluation, we transform both the training and test data.

\noindent {\bf Implementation of \cite{gord;etal;19}.} We refer to the author's open-source and follow their experiment scheme where $X$ of test data is transformed only. The random repair scheme is used with 5 equally spaced points of $\lambda\in [0,1]$. Figure~\ref{fig:ex1} draws $\lambda \in \{0.25,0.5,0.75\}$ for clear visualization, and the label is not also transformed as well.

\noindent {\bf Implementation of ours.} The following structures of neural networks are used. 
\begin{itemize}
    \item $h$ consists of 4 hidden layers each of which has a dense block with 64 hidden nodes and BN-ReLU-Dropout layers in order. The final layer has only a dense layer having the same dimension with $Y$ without any further layers. 
    \item $G_{\bX}$ consists of 3 hidden layers each of which has a dense block with 64 hidden nodes and BN-ReLU-Dropout layers in order. The final layer has only a dense layer having the same dimension with $X$ without any further layers. 
    \item $G_Y$ consists of 3 hidden layers each of which has a dense block with 64 hidden nodes and BN-ReLU-Dropout layers. The final layer has only a dense layer having the same dimension with $Y$ without any further layers. 
    \item $V$  consists of 3 hidden layers each of which has a dense block with 64 hidden nodes and ReLU-Dropout layers in order. The final layer has only a dense layer whose output size is 1 without any further layers. 
\end{itemize}
Algorithm~\ref{alg} is implemented for 500 epochs with 200 batch size. For all the neural networks, the Adam optimizer with $\beta_1=0$ and $\beta_2=0.999$ is employed. The marginal constraints $\Delta_X$ and $\Delta_Y$ take the mean-square error (MSE) with $\delta_X=0.05$ and $\delta_Y \in \{1,2,4\}$, and $l$ is also given MSE. We also put $\lambda_F=10$. We do not carefully find the best hyperparameter setting for this illustration. 

\subsection{Section~\ref{sec:simul}}
\label{supp:simulation}


Benchmark data sets are publicly available. This work imports Adult data from the open-source platform\footnote{https://github.com/UCLA-Trustworthy-AI-Lab/FairRR\label{foot:fairr}} that reproduces the FairRR method. ACSEmployment and ACSPublicCoverage in California are imported from the Python library \texttt{folktables}\footnote{https://github.com/socialfoundations/folktables}. For each benchmark data, we generate 5 pairs of training and validation data sets. Note that 20\% of the original data are randomly selected for validation data. Similar to the previous section, downstream models are fitted on the transformed train data and then evaluated on the transformed validation data. 
The implementation of each method is elaborated as follows. 

\noindent {\bf Implementation of Reweighing \cite{kami:cald:12}.} We refer to the \texttt{AIF360} library in Python to implement Reweighing and calculate the fairness and utility metrics shown in the main text. 

\noindent {\bf Implementation of FairRR \cite{zeng:etal:24}.} We refer to the author's open-source\footref{foot:fairr}. Since the author's work does not handle ACSPublicCoverage and ACSEmployment, we find the \texttt{level} parameters by ourselves such that the FairRR methods with logistic regression (LR) and random forest (RF) in the main text draw trade-off plots that are comparable to ours. 
\begin{itemize}
    \item For Adult, we set 
    \begin{itemize}
        \item \texttt{DP}: level $\in \{0.0,0.1,0.14\}$ for LR and  $\in \{0.0,0.1,0.14\}$ for RF.
        \item \texttt{EO}: level $\in \{0.0,0.02,0.06\}$ for LR and  $\in \{0.0,0.01,0.02\}$ for RF.
    \end{itemize}
    \item For ACSPublicCoverage, we set 
    \begin{itemize}
        \item \texttt{DP}: level $\in \{0.0,0.14,0.18\}$ for LR and  $\in \{0.0,0.06,0.14\}$ for RF.
        \item \texttt{EO}: level $\in \{0.0,0.14,0.18\}$ for LR and  $\in \{0.0,0.06,0.14\}$ for RF.
    \end{itemize}
    \item For ACSEmployment, we set 
    \begin{itemize}
        \item \texttt{DP}: level $\in \{0.02,0.1,0.18\}$ for LR and  $\in \{0.02,0.1,0.18\}$ for RF.
        \item \texttt{EO}: level $\in \{0.02,0.1,0.18\}$ for LR and  $\in \{0.001,0.002,0.003\}$ for RF.
    \end{itemize}
\end{itemize}
To see further details, please refer to the public code of the original work. 

\noindent {\bf Implementation of FairTabGAN \citep{raja:etal:22}.} For fair comparison, the generator of FairTabGAN adopts the network structure of our converter, where the input noise dimension is in $\mathbb{R}^{32}$, and the discriminator uses the same structure as the upstream network just by adapting the input dimension. We set 200/100/100 epochs for Adule/ACSEmployment/ACSPublicCoverage with 500 batch size. For each data, the 10\% of the epochs is used for the post-optimization for fairness. The penalty parameter for fairness is set as follows: 
\begin{itemize}
    \item For Adult/ACSPublicCoverage/ACSEmployment, we set the trade-off tuning parameter
    \begin{itemize}
        \item $\lambda\in \{0.01,1.0,5.0\}$ for $A_{\text{bin}}$;
        \item $\lambda\in \{0.01,0.1,0.5\}$ for ${\bf A}_{\text{dis}}$.
    \end{itemize}
\end{itemize}
These configurations are specified so that the training process can successfully find the trade-off curves for comparison with other methods. 

\noindent {\bf Implementation of MultiFair \citep{tian:etal:24}.} We implement the `Mixup via interpolation scheme' that finds a neutral data instance $x_m:=\lambda x+(1-\lambda)x_c$ where $x$ and $x_c$ stand for the original data instance and the interpolated instance across different sensitive attributes under their scheme. To see more details about the framework, refer to \cite{tian:etal:24}. To randomly pick the mixing weight $\lambda$, suggested by the work, we employ $\lambda\sim {\rm Beta}(1,\alpha)$. Provided that $\bE[\lambda]=1/(\alpha+1)$, a higher $\alpha$ tends to encourages stronger fairness restriction. For all data sets, we choose $\alpha \in \{0.1,1,10\}$. To handle categorical variables, we first transform them into one-hot encoded vectors and apply the mixup scheme. When creating a whole data set, the most probable category is randomly selected from the interpolated probability vector as the one-hot encoding form for fair comparison with others.

\noindent {\bf Implementation of ours.} 
The specific configuration of solving \eqref{opt:minmax} is as follows. The auxiliary model $h$ is trained by the generalized cross-entropy loss. We set $T'=1$ in Algorithm~\ref{alg}. The metrics $\Delta_{\bX}$ and $\Delta_{Y}$ are defined as the mean-absolute error and the categorical hinge loss, for continuous and categorical variables respectively. The fairness budget $\delta_X=\delta_{X_i}=\cdots=\delta_{X_{d_{\bX}}}$ is assigned to each variable $i=1,\dots,d_{\bX}$, i.e., each $X_i$ has $\Delta_{X_i}(X_i,\tiX_i)\leq \delta_{X_i}$ and $\delta_{X_i}$ is the same for all variables. For each data, we consider three fairness budgets with respect to $\delta_{X}$; Budgets 1,2, and 3 in ascending order of $\delta_X$. Supported by Remark~\ref{remark:ex}, $\delta_{Y}>0$ is used for PublicCoverage since it has relatively weak predictive accuracy as shown in Table~\ref{tab:basic_score}. We choose $\lambda_F\in \{0.1,1,10,100\}$ that achieves a better fairness-utility trade-off that varies with the size of $\delta_{X}$.

Let $G_{\bX,j}$ be the output of $G_{\bX}(\bX,\bA)$ corresponding to the $j$th variable. For instance, suppose $\bX=(X_1,X_2,X_3)^{\top}$ where $X_1,X_2$ are 1-dimensional continuous variables and $X_3$ is a one-hot encoded discrete variable with 3 factors, and its converted version $\btiX=(\tilde{X}_1,\tilde{X}_2,\tilde{X}_3)$. Then we say $\tilde{X}_j=G_{\bX,j}$ and $\tilde{X}_1,\tilde{X}_2 \in \mathbb{R},\tilde{X}_3 \in \mathbb{R}^3$. Note that all the benchmark tabular data sets consist of continuous and one-hot encoded variables. To generate such a mix of discrete and continuous variables, we set $G_{\bX,j} = (G_{L,j} \circ G_I)(\bX,\bA)$ where all $G_{\bX,j}$ share $G_I$ but each has their own output layer $G_{L,j}$. For instance, discrete variables are processed using the Gumbel-Softmax layer \cite{jang:etal:16}, while continuous variables undergo an affine transformation.

The following structures of neural networks are used. 
\begin{itemize}
    \item Neural networks for this example use a residual block that follows: Input-Dense-BatchNorm-Activation-Dropout-Dense-BatchNorm-Output-Add(Input,Output)-Activation.
    \item $h$ consists of 4 residual blocks and a dense layer for the last output with the softmax activation. 
    \item $G_{I}$ consists of 4 residual blocks with dropout. Then each variable has the last dense layer $G_{L,j}$. 
    \item $V$ consists of 4 residual blocks and a dense layer for the last output. 
\end{itemize}
Algorithm~\ref{alg} is implemented for 400/200/150 epochs for Adult/PublicCoverage/Employment data sets with 500 batch size respectively. For all the neural networks, the Adam optimizer with $\beta_1=0$ and $\beta_2=0.999$ is employed. For the marginal constraint $\Delta_X$, discrete and continuous variables adopt categorical hinge loss and mean absolute error, respectively. The binary outcome variable also uses the categorical hinge loss. 

\noindent {\bf Implementation of HGR \citep{lee:etal:22}.} We refer to the implementation provided in https://github.com/jwsohn612/fairSBP. For fair comparison, we set the classification model as our upstream model. We set 200/100/100 epochs for Adule/ACSEmployment/ACSPublicCoverage with 500 batch size and put $\lambda\in\{0.1,0.6,0.9\}$ weighting $\lambda \text{EmpiricalLoss} + (1-\lambda)\text{FairnessLoss}$ to draw the trade-off curve. 

\noindent {\bf Implementation of ExpoGrad \cite{agar:etal:18}.} We refer to the \texttt{FairLearn} library in Python to implement ExponentiatedGradientReduction. The levels of the ratio constraints are specified as follows. For all data sets, we change the $\lambda:=\text{`ratio\_bound'}$ of \texttt{DemographicParity} for fairness control by $\lambda\in\{0.2,0.7,0.9\}$.

\newpage

\subsection{Experiments on CelebA}
\label{supp:simul:celeba}

For our simulation study, we import $128\times 128 \times 3$ CelebA images from Kaggle\footnote{https://www.kaggle.com/datasets/jessicali9530/celeba-dataset}. Since the original data has a sufficiently large number of training images, we separate 20000 randomly selected images for validation data. For network structures, we use well-known models: ResNet18 for $h$ and U-Net autoencoder for $G_{\bX}$. $V$ uses the same structure used in the tabular data example. We adopted the early stopping procedure based on 2000 validation images when training the downstream ResNet18 model. 

\noindent {\bf Implementation of ours.} In this example, $\delta_y=0$ is set because the original data produces the ResNet18 model with relatively higher accuracy. We run Algorithm~\ref{alg} for 30 epochs with 64 batch size.

\noindent {\bf Implementation of FAAP \cite{wang:etal:22}.} We modify Algorithm~\ref{alg} by replacing $\bar{h}^{(t,t')}$ with pre-trained $h^*$, and this pre-trained classification model is fixed during the training of other networks. This scheme is conceptually equivalent to FAAP although the penalty construction is different. We believe that this is a fair comparison because the competing methods have the same architectural backgrounds except for the core penalization step. 

\subsection{A variant of Algorithm~\ref{alg} with variational loss}
\label{supp:variational loss}

In this work, we consider the following optimization with the $f$-divergence $D_{f}$: 
\begin{align*}
    \min_{G \in {\cal G}}\max_{V\in {\cal V}} \bE[l(\barY,\barh^*(\bbarX))] + \lambda_F R_V(\tih^*(\bbarX),\bA) + \kappa D_{f}(\bX,\bbarX),
\end{align*}
which can be seen as 
\begin{align*}
    \min_{G \in {\cal G}}\max_{V\in {\cal V}, E \in {\cal E}} \bE[l(\barY,\barh^*(\bbarX))] + \lambda_F R_V(\tih^*(\bbarX),\bA) + \kappa R_E(\bX,\bbarX), 
\end{align*}
where $R_E(\bX,\btiX) = \sup_{E} \bE[E(\bX)] - \bE[f^*(E(\btiX))]$. To write the algorithm for the case of $\delta_{Y}=0$, we denote by $L(G_{\bX};G_Y,\bar{h}^*,V,E,\lambda_{\bX})=\bE[l(\barY,\barh^*(\bbarX))] + \lambda_F R_V(\bar{h}^*(\bbarX),\bA)+\lambda_{\bX}\tau(\Delta_{\bX}(\bX,\bbarX)-\delta_{\bX})+\kappa R_E(\bX,\bbarX)$ and $L(G_Y;G_X,\bar{h}^*,\lambda_Y)=\bE[l(\barY,\barh^*(\bbarX))]+\lambda_Y\tau(\Delta_{Y}(Y,\barY)-\delta_{Y})$ the target objective for $G_{\bX}$ and $G_Y$ respectively where $\tau(x)=\max\{x,0\}$. Algorithm~\ref{supp:alg_with_vari} shows the modified Algorithm~\ref{alg} with the extra variational loss. 

\begin{algorithm}[ht!]
\caption{Task-tailored Pre-processing with the Extra Loss}
\label{supp:alg_with_vari}
\begin{algorithmic}[1]
\REQUIRE Data $(\mathbf{X}, \mathbf{A}, Y)$; initialized neural nets $\bar{h}^{(0)}$, $V^{(0)}$, $G^{(0)} = (G_{\mathbf{X}}^{(0)}, G_Y^{(0)})$; variables $\lambda_{\mathbf{X}}^{(0)} \gets 0$, $\lambda_Y^{(0)} \gets 0$; learning rate $r_{\cdot} > 0$ for each component; total number of iterations $T$.
\FOR{$t = 1$ to $T$}
    \STATE $(\bar{\mathbf{X}}^{(t)}, \bar{Y}^{(t)}) = (G_{\mathbf{X}}^{(t)}(\mathbf{X}, \mathbf{A}), G_Y^{(t)}(\mathbf{X}, \mathbf{A}, Y))$
    \vspace{0.2cm}
    \STATE $\bar{h}^{(t+1)} \gets \bar{h}^{(t)} - r_h \partial {\bE}[L(\bar{Y}^{(t)}, \bar{h}(\bar{\mathbf{X}}^{(t)}))]/\partial \bar{h}^{(t)}$
    \vspace{0.2cm}
    \STATE $V^{(t+1)} \gets V^{(t)} + r_V \partial R_{V^{(t)}}(\bar{h}^{(t+1)}(\bar{\mathbf{X}}^{(t)}), \mathbf{A})/\partial V^{(t)}$
    \vspace{0.2cm}
    \STATE $E^{(t+1)} \gets E^{(t)} + r_E \partial R_{E^{(t)}}(\bX,\bar{\mathbf{X}}^{(t)}))/\partial E^{(t)}$
    \vspace{0.2cm}
    \STATE $\lambda_{\mathbf{X}}^{(t+1)} \gets \lambda_{\mathbf{X}}^{(t)} + r_{\mathbf{X}} \tau(\Delta_{\mathbf{X}}(\mathbf{X}, \bar{\mathbf{X}}^{(t)}) - \delta_{\mathbf{X}})$
    \vspace{0.2cm}
    \STATE $\lambda_Y^{(t+1)} \gets \lambda_Y^{(t)} + r_Y \tau(\Delta_Y(Y, \bar{Y}^{(t)}) - \delta_Y)$
    \vspace{0.2cm}
    \STATE $ G_{\mathbf{X}}^{(t+1)} \gets G_{\mathbf{X}}^{(t)} - r_G \partial L(G_{\bX}^{(t)};G_Y^{(t)},\bar{h}^{(t+1)},V^{(t+1)},E^{(t+1)},\lambda_{\bX}^{(t+1)})/\partial  G_{\mathbf{X}}^{(t)}$ 
    \vspace{0.2cm}
    \STATE $G_Y^{(t+1)} \gets G_Y^{(t)} - r_G\partial L(G_Y^{(t)};G_{\mathbf{X}}^{(t)},\bar{h}^{(t+1)},\lambda_Y^{(t+1)})/\partial G_Y^{(t)}$
    \vspace{0.2cm}
\ENDFOR
\RETURN $G^{(T)}$
\end{algorithmic}
\end{algorithm}

We illustrate the results of Algorithm~\ref{supp:alg_with_vari} in Figure~\ref{fig:celeba_eo2}. The figure compares the generated images from Algorithm~\ref{alg}, highlighting that using the variational loss during the training can improve the quality of images. Note Figure~\ref{fig:celeba_eo} shows that male images are more distorted than female images, so we present male images in this figure.

\begin{figure*}[ht!] 
    \centering
    \subfloat[Results of Algorithm~\ref{alg}: $Y=1$ and $A=1$]{%
        \includegraphics[width=0.9\linewidth]{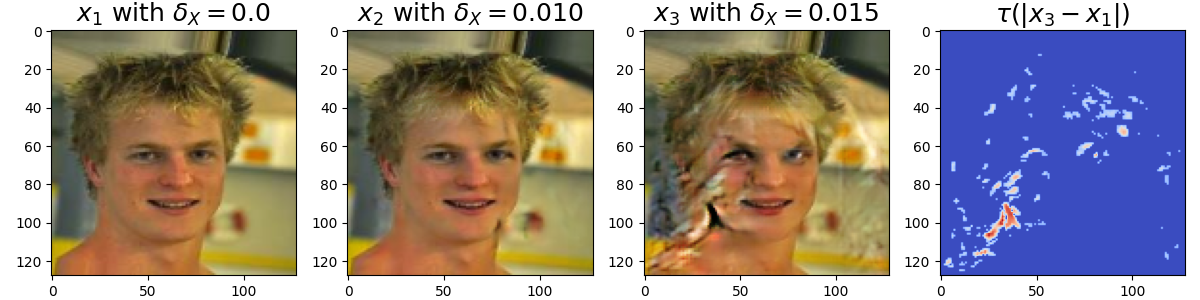}
    }\hfill
    \subfloat[Results of Algorithm~\ref{supp:alg_with_vari}: $Y=1$ and $A=1$]{%
        \includegraphics[width=0.9\linewidth]{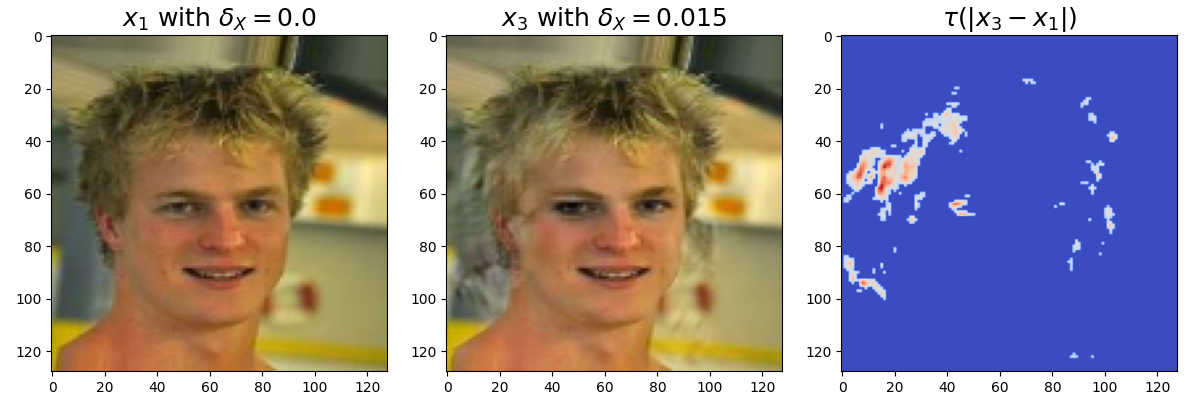}
    }\hfill
    \subfloat[Results of Algorithm~\ref{alg}: $Y=0$ and $A=1$]{%
        \includegraphics[width=0.9\linewidth]{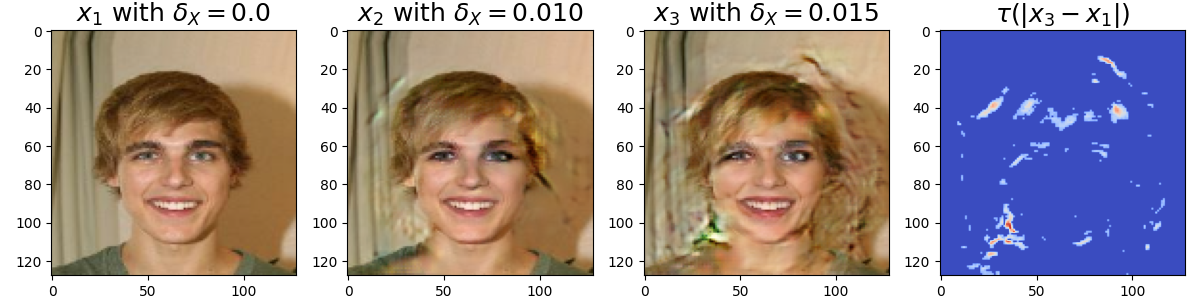}
    }\hfill
    \subfloat[Results of Algorithm~\ref{supp:alg_with_vari}: $Y=0$ and $A=1$]{%
        \includegraphics[width=0.9\linewidth]{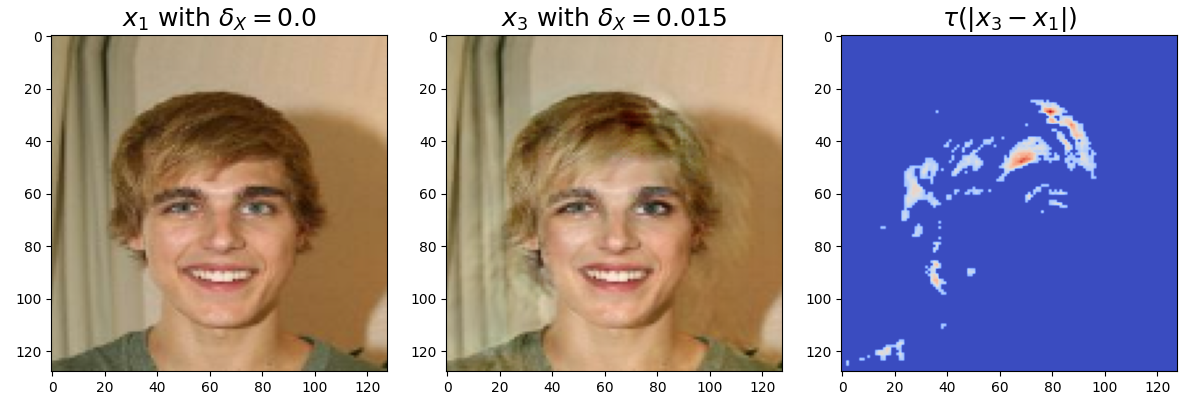}
    }\hfill
    \caption{Comparison between Algorithms~\ref{alg} and \ref{supp:alg_with_vari}}
    \label{fig:celeba_eo2}
\end{figure*}

    \renewcommand{\thesection}{\arabic{section}}
    \renewcommand{\theequation}{\arabic{equation}}
    \renewcommand{\thefigure}{\arabic{figure}}
    \renewcommand{\thetable}{\arabic{table}}
\fi

\end{document}